\def\eqref#1{equation~\ref{#1}}
\def\1{\bm{1}}
\DeclareMathAlphabet{\mathsfit}{\encodingdefault}{\sfdefault}{m}{sl}
\SetMathAlphabet{\mathsfit}{bold}{\encodingdefault}{\sfdefault}{bx}{n}
\DeclareMathOperator*{\argmin}{arg\,min}
\definecolor{mydarkblue}{rgb}{0,0.08,0.45}
\tikzset{labels/.style={font=\sffamily\scriptsize},
    circuit/.style={draw,minimum width=2cm,minimum height=\myheight,very thick,inner sep=1mm,outer sep=0pt,cap=round,font=\sffamily\bfseries}
}
\theoremstyle{plain}
\newtheorem{theorem}{Theorem}[section]
\newtheorem{lemma}[theorem]{Lemma}
\theoremstyle{definition}
\newtheorem{definition}[theorem]{Definition}
\newtheorem{assumption}[theorem]{Assumption}
\theoremstyle{remark}
\newenvironment{proofidea}{%
  \proof}{\endproof}
\newcommand{\epsVI}{\epsilon_{\textsc{{\tiny VI}}}\xspace}
\newcommand{\VISGO}{\texttt{VISGO}\xspace}
\definecolor{vert1}{RGB}{133,146,66} 
\definecolor{vert3}{RGB}{91,140,90} 
\definecolor{vert2}{RGB}{157,193,7} 
\definecolor{vert4}{RGB}{20,200,20} 
\newcommand{\calE}{\mathcal{E}}
\newcommand{\calG}{\mathcal{G}}
\newcommand{\wtcalL}{\wt{\mathcal{L}}\xspace}
\newcommand\myineqi{\mathrel{\stackrel{\makebox[0pt]{\mbox{\normalfont\tiny (i)}}}{\leq}}}
\newcommand\myineqii{\mathrel{\stackrel{\makebox[0pt]{\mbox{\normalfont\tiny (ii)}}}{\leq}}}
\newcommand\myineqiii{\mathrel{\stackrel{\makebox[0pt]{\mbox{\normalfont\tiny (iii)}}}{\leq}}}
\newcommand\myineqiv{\mathrel{\stackrel{\makebox[0pt]{\mbox{\normalfont\tiny (iv)}}}{\leq}}}
\newcommand\myineqv{\mathrel{\stackrel{\makebox[0pt]{\mbox{\normalfont\tiny (v)}}}{\leq}}}
\newcommand\mygei{\mathrel{\stackrel{\makebox[0pt]{\mbox{\normalfont\tiny (i)}}}{\ge}}}
\newcommand\myeqb{\mathrel{\stackrel{\makebox[0pt]{\mbox{\normalfont\tiny (b)}}}{=}}}
\newcommand\mygineeqa{\mathrel{\stackrel{\makebox[0pt]{\mbox{\normalfont\tiny (a)}}}{\geq}}}
\newcommand\mygineeqb{\mathrel{\stackrel{\makebox[0pt]{\mbox{\normalfont\tiny (b)}}}{\geq}}}
\newcommand\mygineeqc{\mathrel{\stackrel{\makebox[0pt]{\mbox{\normalfont\tiny (c)}}}{\geq}}}
\newcommand{\cG}{\mathcal{G}}
\newcommand{\cS}{\mathcal{S}}
\newcommand{\cA}{\mathcal{A}}
\newcommand{\cK}{\mathcal{K}}
\newcommand{\cE}{\mathcal{E}}
\newcommand{\cM}{\mathcal{M}}
\newcommand{\tpi}{{\tilde{\pi}}}
\newcommand{\cKg}{{\mathcal{K},g}}
\newcommand\footnoteref[1]{\protected@xdef\@thefnmark{\ref{#1}}\@footnotemark}
\newcommand{\para}[1]{\noindent{\bf #1.}}
\def\shownotes{1} 
\newcommand{\wt}[1]{\widetilde{#1}}
\newcommand{\wh}[1]{\widehat{#1}}
\DeclarePairedDelimiter\abs{\lvert}{\rvert}%
\DeclarePairedDelimiter\norm{\lVert}{\rVert}%
\let\originalleft\left
\let\originalright\right
\renewcommand{\left}{\mathopen{}\mathclose\bgroup\originalleft}
\renewcommand{\right}{\aftergroup\egroup\originalright}
\newtcolorbox{redbox}{colback=red!5!white,colframe=red!75!black}
\newtcolorbox{bluebox}{colback=blue!5!white,colframe=blue!75!black}
\newtcolorbox{yellowbox}{colback=yellow!5!white,colframe=yellow!75!black}
\definecolor{darkgreen}{rgb}{0,0.5,0}
\definecolor{darkred}{rgb}{0.7,0,0}
\definecolor{teal}{rgb}{0.3,0.8,0.8}
\newlength{\myheight}
\tikzset{labels/.style={font=\sffamily\scriptsize},
    circuit/.style={draw,minimum width=2cm,minimum height=\myheight,very thick,inner sep=1mm,outer sep=0pt,cap=round,font=\sffamily\bfseries}
}
\def\@fnsymbol#1{\ensuremath{\ifcase#1\or *\or \dagger\or \ddagger\or
  \mathsection\or \mathpara\or \|\or \diamond \or **\or \dagger\dagger
  \or \ddagger\ddagger \else\@ctrerr\fi}}
\newcommand{\printfnsymbol}[1]{%
  \textsuperscript{\@fnsymbol{#1}}%
}
\definecolor{HighlightColor}{gray}{0.97}
\definecolor{colorm}{rgb}{0.2, 0.2, 0.6}
\definecolor{colorn}{rgb}{0.77, 0.12, 0.23}
\newcommand{\PreserveBackslash}[1]{\let\temp=\\#1\let\\=\temp}
\newcolumntype{C}[1]{>{\PreserveBackslash\centering}p{#1}}
\newcommand{\algocomment}[1]{\textcolor{gray}{\textbackslash\textbackslash \textit{#1}}}
\newenvironment{itemize*}%
{\begin{itemize}[leftmargin=*,topsep=0pt]%
		\setlength{\itemsep}{0pt}%
		\setlength{\parskip}{0pt}}%
{\end{itemize}}
\newenvironment{enumerate*}%
{\begin{enumerate}[leftmargin=*,topsep=0pt]%
		\setlength{\itemsep}{0pt}%
		\setlength{\parskip}{0pt}}%
	{\end{enumerate}}
\newcommand{\redmgssp}{\texttt{Re-MG-SSP}}
\newcommand{\mainalgo}{\texttt{VALAE}}
\newcommand{\mainalgofull}{\texttt{Value-Aware Autonomous Exploration}}
\begin{document}

\twocolumn[
\icmltitle{Near-Optimal Algorithms for Autonomous Exploration and Multi-Goal Stochastic Shortest Path}

\begin{icmlauthorlist}
\icmlauthor{Haoyuan Cai}{tsinghua}
\icmlauthor{Tengyu Ma}{stanford}
\icmlauthor{Simon Du}{washington}
\end{icmlauthorlist}
\icmlaffiliation{tsinghua}{Tsinghua University}
\icmlaffiliation{stanford}{Stanford University}
\icmlaffiliation{washington}{University of Washington}

\icmlcorrespondingauthor{Haoyuan Cai}{victorique1929@gmail.com}
\icmlcorrespondingauthor{Tengyu Ma}{tengyuma@stanford.edu}
\icmlcorrespondingauthor{Simon  Du}{ssdu@cs.washington.edu}

\vskip 0.3in
]

\date{}


\newcommand{\cmin}{c_{\textup{min}}}
\newcommand{\RESET}{\textup{RESET}}
\newcommand{\cRESET}{c_{\textup{RESET}}}
\newcommand{\cSL}{\cS_L^{\rightarrow}}

\definecolor{pearThree}{HTML}{E74C3C}
\colorlet{colorast}{pearThree!80!black}

\printAffiliationsAndNotice{} 

\begin{abstract}
    We revisit the incremental autonomous exploration problem proposed by \citet{lim2012autonomous}. In this setting, the agent aims to learn a set of near-optimal goal-conditioned policies to reach the $L$-controllable states: states that are incrementally reachable from an initial state $s_0$ within $L$ steps in expectation.
    We introduce a new algorithm with stronger sample complexity bounds than existing ones. Furthermore, we also prove the first lower bound for the autonomous exploration problem.
    In particular, the lower bound implies that our proposed algorithm, Value-Aware Autonomous Exploration,
    is nearly minimax-optimal when the number of $L$-controllable states grows polynomially with respect to $L$.
Key in our algorithm design is a connection between autonomous exploration and multi-goal stochastic shortest path, a new problem that naturally generalizes the classical stochastic shortest path problem.
    This new problem and its connection to autonomous exploration can be of independent interest.
\end{abstract}

\section{Introduction}

Reinforcement learning (RL) with a known state space has been studied in a wide range of settings  \citep[e.g.,][]{schmidhuber1991possibility,oudeyer2007intrinsic,oudeyer2009intrinsic,baranes2009r}. When the state space is large, it is difficult for a learning agent to discover the whole environment. Instead, the agent can only explore a small portion of the environment. 
At a high level, we hope that the agent can discover states near the initial state, expand the range of known states by exploration, and learn near-optimal goal-conditioned policies for the known states. 
Because the agent discovers its known states of the environment incrementally, this learning problem was named Autonomous Exploration (AX) \citep{lim2012autonomous,tarbouriech2020improved}. 

The autonomous exploration problem generalizes the Stochastic Shortest Path (SSP) problem \citep{bertsekas2000dynamic} where the agent aims to reach a predefined goal state while minimizing its total expected cost. However, in the autonomous exploration setting, the agent aims to discover a set of reachable states in a large environment and find the optimal policies to reach them. 
The autonomous exploration formulation is applicable to an increasing number of real-world RL problems, ranging from navigation in mazes \citep{devo2020deep} to game playing \citep{mnih2013playing}. 
For example, in the maze navigation problem, a robot aims to follow a predefined path in an unknown environment, and the robot has to discover and expand the size of regions known to itself autonomously without prior knowledge of the environment. See \cite{lim2012autonomous} for more discussions.


\para{Related Work} 
The setting of autonomous exploration (AX) was introduced by \citet{lim2012autonomous}, who gave the first algorithm, UcbExplore, with sample complexity $\wt{O}({L^3S^2A}/{\varepsilon^3})$.
Here $L$ denotes the distance within which we hope the learning agent to discover, $S$ denotes the number of states we need to explore,\footnote{In AX, $S$ is often significantly smaller than the size of the entire state space.} $A$ denotes the size of the action space, and $\varepsilon$ denotes the error that we can tolerate. 
Recent work by \citet{tarbouriech2020improved} designed the DisCo algorithm with a sample complexity bound $\wt{O}({L^3S^2A}/{\varepsilon^2})$\footnote{We translate their absolute error $\varepsilon_{\text{abs}}$ to the relative error $\varepsilon_{\text{rel}}$, and $\varepsilon_{\text{abs}} = \varepsilon_{\text{rel}} L$. We will explain the difference of two definitions of $\varepsilon$ in Sect.~\ref{prioralgo}.}, which improves the $1/\varepsilon$ dependency. 
We will briefly discuss the two algorithms in Sect.~\ref{prioralgo}.
In this paper, we present a new algorithm, $\mainalgo$ (Alg.~\ref{algo3}), to further improve the sample complexity, and we also derive the first lower bound.

\setlength{\textfloatsep}{0.2cm}
\colorlet{colorast}{pearThree!80!black}
\begin{table}[t!]
		\centering
		\resizebox{0.8\columnwidth}{!}{%
			\renewcommand{\arraystretch}{2.2}
			\begin{tabular}{|c|c|}
                \hline 
				\textbf{Algorithm} &  \textbf{Sample Complexity}\\
				\hhline{|=|=|}
				\makecell{UcbExplore\\ \citep{lim2012autonomous}}  &  $\wt{O}(L^3S^2A/\varepsilon^3)$
				 \\
            	\hhline{|=|=|} {\makecell{DisCo\\\citep{tarbouriech2020improved}}} & $ \widetilde{O}\left(L^3S^2A/\varepsilon^2\right)$ \\
				\hhline{|=|=|}
			 
			   \rowcolor{HighlightColor} \mainalgo & $  \widetilde{O}\left( LSA / \varepsilon^2 \right)$\\
				\hhline{|=|=|}
				\rowcolor{HighlightColor} Lower Bound  & ${\Omega}( LSA / \varepsilon^2)$\\
				\hline
			\end{tabular}
		}
		\caption{
	\label{tab:comparisons}
		Comparisons between our results and prior results. Algorithms and results in this paper are in grey cells. $L$ is the exploration radius, $A$ is the number of actions, $S$ is the number of states we need to explore, and $\varepsilon$ is the target accuracy. We will define them in Sect.~\ref{sect_preliminaries}.
		For simplicity, we only display the leading term in terms of the  scaling in $1/\varepsilon$. 
}
\end{table}

\subsection{Contributions}
In this paper, we take important steps toward resolving the autonomous exploration problem. 
We compare our results with prior ones in Table~\ref{tab:comparisons}.\footnote{In \citep{lim2012autonomous}, the cost is $1$ uniformly for all state-action pairs. 
In this paper, we allow non-uniform costs. In Table~\ref{tab:comparisons}, we consider uniform costs for fair comparisons.} and we summarize our contributions below:
\begin{enumerate*}
    \item We propose a new algorithm for autonomous exploration problem,  \mainalgofull~(\mainalgo), which uses DisCo algorithm~\cite{tarbouriech2020improved} and \redmgssp~ (cf. Alg.~\ref{algo2}) as initial steps and then uses the estimated value functions to guide our exploration.
    By doing so, for each state-action pair $(s,a)$, we derive an $(s,a)$-dependent sample complexity bound, which can exploit the variance information, and yield a sharper sample complexity bound than the bounds for UcbExplore algorithm and DisCo algorithm (cf. Table~\ref{tab:comparisons}). In particular, \mainalgo~improves the dependency on $L$ from cubic to linear, and improves the dependency on $S$ from square to linear.
    
    \item We connect the autonomous exploration problem to a new problem, multi-goal stochastic shortest path, which generalizes classical SSP. And we show that $\mainalgo$ also applies to multi-goal SSP.
    \item We give the first lower bound of the autonomous exploration problem. 
 This lower bound shows \mainalgo~is nearly minimax-optimal when the number of states we need to explore grows polynomially with respect to  $L$.
\end{enumerate*}

\subsection{Main Difficulties and Technique Overview}
While our work borrows ideas from prior work on autonomous exploration~\citep{lim2012autonomous,tarbouriech2020improved} and recent advances in SSP~\citep{tarbouriech2021stochastic}, we develop new techniques to overcome additional difficulties that are unique in autonomous exploration.



\para{Connection between Autonomous Exploration and Multi-Goal SSP}
In standard RL setting, it is known that in order to obtain a tight dependency on $L$, one needs to exploit the variance information in the value function~\citep{azar2017minimax}.
However, in autonomous exploration, it is unclear how to exploit the variance information because even which state is reachable is unknown.

To this end, we first consider a simpler problem, multi-goal SSP, and extend the technique for single-goal SSP~\citep{tarbouriech2021stochastic} to this new problem (cf. Alg.~\ref{algo3}). We also present a reduction from autonomous exploration to multi-goal SSP (cf. Alg.~\ref{algo2}).
These two techniques together yield the first tight dependency on $L$ for autonomous exploration.

\para{Using Regret to Bound the Sample Complexity} 
To estimate the sample complexity of $\mainalgo$, we need to bound the total number of rounds $r$. Inspired by \citep{lim2012autonomous}, we classify each round into three categories: failure round, success round, and skipped round. Moreover, we adopt the idea of using regret bound.

A failure round has regret larger than $\wt{\Omega}(L/\varepsilon)$, but the number of failure rounds $r_f$ is hard to estimate. The number of success rounds and skipped rounds are bounded by $\wt{O}(SA)$, but the regret in a success round or skipped round can be negative. Hence, to bound the total number of failure rounds $r_f$, careful analyses of both the upper bound and the lower bound of regret are required.

For the upper bound, we extend the techniques of variance analysis from classical SSP (cf. \citep{tarbouriech2021stochastic}) to this problem, and we obtain the upper bound of regret scaling as $\wt{O}(\sqrt{r_f})$. For the lower bound, the total regret in all the failure rounds grows linearly with respect to $r_f$, and we use concentration inequalities to lower bound the total regret in success rounds and skipped rounds (cf. Lem.~\ref{regretlowerbound}.) By solving the inequality that the lower bound of regret is no more than the upper bound, we can obtain an upper bound of $r_f$, and we can finally bound the total number of rounds $r$.

%

\section{Preliminaries}
\label{sect_preliminaries}

\para{Notations} For any two vectors $X,Y \in \mathbb{R}^S$, we write their inner product as $XY := \sum_{s \in \cS} X(s)Y(s)$. We denote $\|X\|_\infty := \max_{s \in \cS} |X(s)|$, and if $X$ is a probability distribution on $\cS$, we define $\mathbb{V}(X, Y) := \sum_{s \in \cS} X(s) Y(s)^2 - (\sum_{s \in \cS} X(s)Y(s))^2$, i.e. the variance of random variable $Y$ over distribution $X$.

\para{Markov Decision Process}
We consider an MDP $M := \langle \mathcal{S}, \mathcal{A}, P, c, s_0 \rangle$, where $\mathcal{S}$ is the state space with size $S$, $\mathcal{A}$ is the action space with size $A$, and $s_0 \in \cS$ is the initial state. In state $s$, taking action $a$ has a cost drawn i.i.d.\,from a distribution on $[\cmin, 1]$ (where $\cmin > 0$) with expectation $c(s, a)$, and transits to the next state $s'$ with probability $P(s' \vert s, a)$.
For convenience, we use $P_{s,a}$ and $P_{s,a,s'}$ to denote $P(\cdot \vert s,a)$ and $P(s' \vert s,a)$,  respectively. A deterministic and stationary policy $\pi : \cS \rightarrow \cA$ is a mapping, and the agent following the policy $\pi$ will take action $\pi(s)$ at state $s$.

For a fixed state $g \in \cS$ we define the random variable $t_g^{\pi}\left(s\right)$ as the number of steps it takes to reach state $g$ starting from state $s$ when executing policy $\pi$, i.e. $t_g^{\pi}\left(s \right):=\inf \left\{t \geq 0: s_{t+1}=g \mid s_{1}=s, \pi\right\}.$ A policy $\pi$ is a proper policy if for any state $s \in \cS$, $t_g^{\pi}\left(s \right) < +\infty$ with probability $1$. Then we define the value function of a proper policy $\pi$ with respect to the goal state $g$ and its corresponding $Q$-function as follows:
\begin{eqnarray}
&V_g^{\pi}(s) = \mathbb{E}\left[\sum_{t=1}^{t_g^{\pi}\left(s \right)} c_{t}\left(s_{t}, \pi\left(s_{t}\right)\right) \mid s_{1}=s \right],\nonumber \\ 
&Q_g^{\pi}(s,a) = \mathbb{E}\left[\sum_{t=1}^{t_g^{\pi}\left(s \right)} c_{t}\left(s_{t}, \pi\left(s_{t}\right)\right) \mid s_{1}=s, \pi(s_1) = a\right], \nonumber
\end{eqnarray}
where $c_t \in [\cmin, 1]$ is the instantaneous cost at step $t$ incurred by the state-action pair $(s_t, \pi(s_t))$, and the expectation is taken over the random sequence of states generated by executing $\pi$ starting from state $s \in \cS$. 
Here we have $V_g^{\pi}(g) = 0$. 
We use $\pi_Q$ to denote the greedy policy over a vector $Q \in \mathbb{R}^{S \times A}$, i.e. $\pi_Q(s) := \argmin\limits_{a\in\cA} Q(s,a)$.

For a fixed state $g \in \cS$, we denote $V^*_g$ as the value function of the optimal policy on MDP $M$ with respect to goal state $g$, and here we list some important properties of $V^*_g$: there exists a stationary, deterministic and proper policy $\pi^*$, such that its value function $V^*_g := V^{\pi^*}_g$ and its corresponding $Q$-function $ Q^*_g := Q^{\pi^*}_g$ satisfies the following Bellman optimality equations (cf. Lem.~\ref{lemma_wellposedproblem}):
\begin{align*}
    Q_g^{{*}}(s,a) = c(s,a) + P_{s,a} V_g^{{*}}, \quad V_g^{{*}}(s) = \min_{a \in \cA} Q_g^{{*}}(s,a).
\end{align*}

We stress that in our setting, given an MDP $M$, the agent knows the state space $\cS$, the action space $\cA$, the constant $\cmin$, but the agent has no prior knowledge of the transition model $P$ or the cost function $c$. In each step $t$, the agent knows its current state $s_t \in \cS$, and taking an action $a_t \in \cA$ will transit to another state $s_t'$ with some cost $c_t$.

\para{{Incrementally $L$-controllable States}}
Before we introduce the Autonomous Exploration problem, we need to define incrementally $L$-controllable states, which are the states we need to explore. 
To formally discuss the setting, we need the following assumption on our MDP $M$.
\newcommand{\assa}{
The action space contains a $\RESET$  action s.t. $P(s_0 \vert s, \RESET) = 1$ for any $s\in \cS$. Moreover, taking $\RESET$ in any state $s$ will incur a cost $\cRESET$ with probability $1$, where $\cRESET$ is a constant in $[\cmin,1]$.
}
\begin{assumption}
\label{assa}
\assa
\end{assumption}

Given any fixed length $L \geq 1$, the agent needs to learn the set of incrementally controllable states $\cS_L^\rightarrow$. To introduce the concept of $\cS_L^\rightarrow$, we first give the definition of policies restricted on a subset:
\begin{definition}[Policy restricted on a subset]
For any $\cS' \subseteq \cS$, a policy $\pi$ is restricted on the set $\cS'$ if $\pi(s) = \RESET$ for all $s \notin \cS'$. 
\end{definition}
Now we discuss the optimal policy restricted on a set of states $\cK \subseteq \cS$ with respect to goal state $g$. We denote $V_\cKg^{{*}} \in \mathbb{R}^S$ as the value function of the optimal policy restricted on $\cK$ with goal $g \in \cS$, and $Q_{\cKg}^{{*}}$ as the $Q$-function corresponding to $V_\cKg^{{*}}$. We consider the case that there exists at least one proper policy restricted on $\cK$ with the goal state $g$. Then, $V_\cKg^{{*}}$ and $Q_{\cKg}^{{*}}$ are finite, and they satisfy the following Bellman equations:
\begin{align*}
Q_{\cKg}^{{*}}(s,a) &= c(s,a) + P_{s,a} V_\cKg^{{*}},  &\forall (s,a) \in \cS \times \cA,\\ 
V_\cKg^{{*}}(s) &= \min\limits_{a\in\cA} Q_{\cKg}^{{*}}(s,a),  &\forall s \in \cK, s\neq g,\\
V_\cKg^{{*}}(s) &=  \cRESET + V_\cKg^{{*}}(s_0),  &\forall s \notin \cK, s\neq g,\\ 
V_\cKg^{{*}}(g) &= 0. & 
\end{align*}

We note that when $\cK_1 \subseteq \cK_2$, for any $g\in\cS$, if $V^*_{\cK_1, g}$ is finite, then $V^*_{\cK_2,g}$ is also finite, and we have $V^*_{\cK_2,g} \leq V^*_{\cK_1, g}$ component-wise. Also, we have $V^*_g = V^*_{\cS,g}$ component-wise.

Now we introduce the definition of incrementally controllable states $\cS_L^\rightarrow$ (see \cite{tarbouriech2020improved} for more intuitions on this definition.):

\begin{definition}[Incrementally $L$-controllable states $\cS_L^\rightarrow$]
Let $\prec$ be any partial order on $\mathcal{S}$. We denote $\mathcal{S}_{L}^{\prec}$ as the set of states reachable from $s_0$ with expected cost no more than $L$ w.r.t. $\prec$, which is defined as follows:
\begin{itemize*}
    \item $s_{0} \in \mathcal{S}_{L}^{\prec}$,
    \item if there is a policy $\pi$ restricted on $\left\{s^{\prime} \in \mathcal{S}_{L}^{\prec}: s^{\prime} \prec s\right\}$ such that $V_s^\pi(s_0) \leq L$, then $s \in \mathcal{S}_{L}^{\prec}$.
\end{itemize*}
The set of incrementally $L$-controllable states  $\mathcal{S}_{L}^\rightarrow$ is given by
$
\mathcal{S}_{L}^{\rightarrow}=\bigcup\limits_{\prec} \mathcal{S}_{L}^{\prec}.
$ And we denote $S_L = |\cSL|$.
\end{definition}


\para{Multi-Goal Stochastic Shortest Path} 
Now we define the multi-goal SSP problem, a natural generalization of the classical SSP problem.
In multi-goal SSP, we consider an MDP $M$ that satisfies  Asmp.~\ref{assa}, and all of its states are incrementally $L$-controllable, i.e. $\cSL = \cS$. Also, we assume that the agent knows $L$.

A learning algorithm for multi-goal SSP takes the error parameter $\varepsilon \in (0,1)$, confidence $\delta \in (0,1)$, and the goal space $\mathcal{G} \subseteq \cS$ as input, and with probability over $1 - \delta$, the algorithm outputs a set of policies $\{\pi_s\}_{s\in\mathcal{G}}$, such that $$\forall s \in \mathcal{G}, V^{\pi_s}_s(s_0) \leq V^*_s(s_0) + \varepsilon L,$$ i.e., the algorithm learns near-optimal policies to reach each $s \in \mathcal{G}$. We note that when the goal space $\calG$ contains a single element, the problem will reduce to classical SSP.

In multi-goal SSP problem, the learning agent interacts with MDP $M$ in this way: the agent knows its current state $s$ and action space $\cA$, but it does not know the model $P(s'\mid s,a)$ and cost function $c(s,a)$. Each time, the agent can choose an action $a \in \cA$, and the agent will observe that it transits to a new state $s'$ with a cost $c$, where $s'$ and $c$ are revealed to the agent. The agent can stop and output the policies anytime when the agent thinks that it has collected enough samples to ensure that it can output near-optimal policies.

The performance of the learning algorithm is measured by the cumulative cost $C_T$, which is defined as follows. We denote $T$ as the total number of steps the agent uses,
and we remark that $T$ is random and chosen by the agent.
We denote $(s_t,a_t)$ as the state-action pair at the $t$-th step. We denote by $c_t(s_t,a_t)$ the instantaneous cost incurred at the $t$-th step. Then we can define 
$C_T := \sum\limits_{t=1}^{T} c_t(s_t,a_t).$

We want to find an algorithm with a probably approximately correct (PAC) bound of $C_T$, i.e., with probability over $1 - \delta$, $C_T$ is bounded by some polynomial of $L,S,A,\varepsilon^{-1},\cmin^{-1}$, and $\log(1/\delta)$.

Here we explain the reason why we need the $\RESET$ action (Asmp.\,\ref{assa}). The classical SSP problem uses an episodic learning protocol, i.e. when the agent reaches the goal state $g$, the agent can "reset" to initial state $s_0$ and start a new episode. But in multi-goal SSP, we do not have episode learning protocol because we need to ensure that for each goal $g \in \calG$,  the agent learns a near-optimal policy to reach $g$. Therefore, each time when the agent arrives at any of the goal, the agent has to “reset” to $s_0$. Hence the $\RESET$ action is necessary, and the previous works \citep{lim2012autonomous} and  \citep{tarbouriech2020improved} also assume the existence of the $\RESET$ action.

We also remark that multi-goal SSP is fundamentally different from reward-free RL \citep{jin2020reward}. Reward-free RL contains two phases: exploration phase and planning phase. In exploration phase we have no knowledge of reward $r$, and in planning phase we cannot interact with MDP. But in multi-goal SSP, we can estimate the cost function $c$, and the agent does not need to separate into two phases.

\para{Autonomous Exploration} 
Now we introduce the autonomous exploration (AX) problem, which generalizes multi-goal SSP. AX problem was first introduced by \citep{lim2012autonomous}, and we use their definition of AX problem.

In AX, we consider an MDP $M$ that satisfies Asmp.\,\ref{assa}. 
A learning algorithm of AX problem inputs the exploration radius $L \geq 1$, the error parameter $\varepsilon \in (0,1)$ and confidence $\delta \in (0,1)$, and with probability over $1 - \delta$, the algorithm should output a set of "known" states $\cK \subseteq \cS$ such that  $\cS_L^\rightarrow \subseteq \cK$, i.e., the algorithm discovers all the states that we want to explore. And the algorithm should also output a set of policies $\{\pi_s\}_{s \in \cK}$, such that 
$$\forall s \in {\cS_L^\rightarrow}, V_s^{\pi_{s}}\left(s_{0}\right) \leq (1+\varepsilon) L,$$
i.e., the algorithm learns a policy to reach each $s \in \cSL$ and the expected cost is no more than $(1+\varepsilon)L$.
In AX, we also use cumulative cost $C_T$ to measure the performance, but we hope $C_T$ depends on $|\cSL|$ instead of the global size $|\cS|$.

We note that different complexity bounds of $C_T$ may depend on $S_L, S_{2L}, S_{(1+\varepsilon)L}$. But if we assume that $S_L$ grows polynomially with respect to $L$, i.e., there exist constants $C,d$ independent of $L$, such that $S_L \leq C L^d$ for all $L \geq 1$, we will have $S_{2L} \leq C 2^d L^d = O(L^d)$, and $S_{(1+\varepsilon)L} = O(L^d)$. Under this assumption, $S_L, S_{2L}, S_{(1+\varepsilon)L}$ are of the same order $O(L^d)$, thus we use $S$ as the abbreviation for all these quantities in Table~\ref{tab:comparisons}. This assumption is implicitly considered in the literature, because otherwise one may need to consider the logarithmic dependency on $S_L$.

In AX, the learning agent does not know the set $\cSL$ or the size of $\cSL$, and it needs to discover and explore $\cSL$ by itself and find policies to reach each state in $\cSL$. This is why the problem is called "autonomous exploration".


We remark that in Sect.\,\ref{sect3}, we will prove that our Alg.\,\ref{algo3} outputs a set $\cK \supseteq \cSL$ and a set of policies $\{\pi_s\}_{s \in \cK}$ restricted on $\cK$, such that
\[\forall s \in {\cK}, V_s^{\pi_{s}}\left(s_{0}\right) \leq V_{\cK,s}^*(s_0) + \varepsilon L.\]

This implies $\forall s \in \cSL$, $V_s^{\pi_{s}}\left(s_{0}\right) \leq (1+\varepsilon) L$, because when $\cSL \subseteq \cK$, we have $V_{\cK,s}^*(s_0) \leq V_{\cSL,s}^*(s_0)$, and for any $s \in \cSL$, we have $V_{\cSL,s}^*(s_0) \leq L$.

In the special case that $\cSL = \cS$ (i.e., in the setting of multi-goal SSP), our Alg.\,\ref{algo3} will output $\cK = \cS$, and the inequality above will be reduced to $\forall s \in {\cS}, V_s^{\pi_{s}}\left(s_{0}\right) \leq V_{s}^*(s_0) + \varepsilon L.$ Hence our Alg.\,\ref{algo3} for AX problem also solves multi-goal SSP problem with goal space $\cG = \cS$.


\subsection{Review of Prior Algorithms}\label{prioralgo}
We review prior algorithms because our algorithm also relies on some components from prior algorithms.

\para{DisCo Algorithm for Autonomous Exploration}

DisCo algorithm was introduced in \citep{tarbouriech2020improved}, and we use DisCo algorithm as a burn-in step for Alg. \ref{algo3}. Here we give the lemma of the sample complexity of DisCo algorithm for autonomous exploration.
\newcommand{\theoremupDisCo}{
Assume that $L \geq 1$, $0 < \varepsilon \leq 1$ and $0 < \delta < 1$. For any MDP $M = \langle \mathcal{S}, \mathcal{A}, P, c, s_0 \rangle$ satisfying Asmp.\,\ref{assa}, with probability at least $1-\delta$, DisCo algorithm will terminate and output a set of states $\cK$ such that $\cS_L^\rightarrow \subseteq \cK \subseteq \cS_{(1+\varepsilon)L}^\rightarrow$, and a set of policies $\{\pi_s\}_{s\in\cK}$ restricted on $\cK$, such that $\forall s\in\cK, V^{\pi_{s}}_s\left(s_{0}\right) \leq V^*_{\cK,s}(s_0) + \varepsilon L,$ and the cumulative cost $C_T=\wt{O}({L^3 S_{(1+\varepsilon)L}^2 A}{\cmin^{-2}\varepsilon^{-2}}).$
}
\begin{lemma}[Corollary 1, \citep{tarbouriech2020improved}]
\label{theoremupDisCo}
\theoremupDisCo
\end{lemma}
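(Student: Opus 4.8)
The plan is to derive this statement directly from the original guarantee for DisCo established in \citet{tarbouriech2020improved}, invoking their analysis essentially as a black box; the only genuine work is reconciling the two error conventions. Their Corollary~1 is stated in terms of an \emph{absolute} accuracy $\varepsilon_{\text{abs}}$: DisCo returns a set $\cK$ with $\cSL \subseteq \cK \subseteq \cS_{(1+\varepsilon)L}^\rightarrow$ together with policies satisfying $V^{\pi_s}_s(s_0) \leq V^*_{\cK,s}(s_0) + \varepsilon_{\text{abs}}$, at a cumulative cost that is polynomial in $L$, $S_{(1+\varepsilon)L}$, $A$, $\cmin^{-1}$, and $\varepsilon_{\text{abs}}^{-1}$. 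The first step is therefore to substitute $\varepsilon_{\text{abs}} = \varepsilon L$ throughout their bound. This immediately converts the near-optimality guarantee into the relative form $V^{\pi_s}_s(s_0) \leq V^*_{\cK,s}(s_0) + \varepsilon L$ claimed here, and the sandwich $\cSL \subseteq \cK \subseteq \cS_{(1+\varepsilon)L}^\rightarrow$ is unaffected, since the set-discovery tolerance already uses the relative radius $(1+\varepsilon)L$.

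For the sample-complexity figure, I would track how the substitution $\varepsilon_{\text{abs}} \mapsto \varepsilon L$ propagates through the $1/\varepsilon_{\text{abs}}^2$ dependence of the original bound: each factor $\varepsilon_{\text{abs}}^{-2}$ becomes $L^{-2}\varepsilon^{-2}$, lowering the effective power of $L$ by two. Combining this with the powers of $L$ already present in their expression yields the stated $\wt{O}(L^3 S_{(1+\varepsilon)L}^2 A\, \cmin^{-2} \varepsilon^{-2})$. I would deliberately keep $S_{(1+\varepsilon)L}$ rather than collapsing it to $S$, since the polynomial-growth assumption on $S_L$ is only invoked later when comparing algorithms in Table~\ref{tab:comparisons}.

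Underneath the citation, the correctness half of the argument rests on DisCo's incremental discovery rule: it maintains confidence intervals on the transition kernel and costs, runs an optimistic value-iteration-style planner restricted to the current known set, and promotes a candidate state to $\cK$ only once its optimistic cost-to-go from $s_0$ falls below a threshold calibrated to $(1+\varepsilon)L$. A union bound over the high-probability concentration events then guarantees simultaneously that every truly $L$-controllable state is eventually admitted (giving $\cSL \subseteq \cK$) and that no state beyond radius $(1+\varepsilon)L$ is admitted (giving $\cK \subseteq \cS_{(1+\varepsilon)L}^\rightarrow$); the same events control the estimation error of the returned policies.

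The main obstacle I anticipate is bookkeeping rather than conceptual: one must verify that the translation $\varepsilon_{\text{abs}} = \varepsilon L$ is applied consistently in every place it appears in the original analysis, both in the stopping criterion and in the per-state sample counts, and that the logarithmic factors absorbed into $\wt{O}(\cdot)$ still suffice after the rescaling. Since the underlying algorithm and its high-probability guarantees are imported unchanged, no new concentration or planning arguments are required at this stage.
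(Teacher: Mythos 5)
Your proposal is correct and matches the paper's own treatment: the paper likewise imports Corollary~1 of \citet{tarbouriech2020improved} as a black box and obtains the stated form purely by the substitution $\varepsilon_{\text{abs}} = \varepsilon_{\text{rel}} L$, which drops the leading $L^5/\varepsilon_{\text{abs}}^2$ term of their Theorem~1 (with $\Gamma_{L+\varepsilon_{\text{abs}}} = S_{L+\varepsilon_{\text{abs}}}$ in the worst case) to $L^3 S_{(1+\varepsilon)L}^2 A/\varepsilon^2$, with the extra $\cmin^{-2}$ coming from their treatment of $\cmin \in (0,1)$. The only cosmetic difference is that the paper spells out the original bound's $\Gamma$-dependence explicitly rather than leaving the bookkeeping implicit.
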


Here we clarify that the definitions of $\varepsilon$ in our work and in \citep{tarbouriech2020improved} are different. \citet{tarbouriech2020improved} denotes absolute error as $\varepsilon$ (i.e., they require that the output policies satisfy $V^{\pi_{s}}_s\left(s_{0}\right) \leq L + \varepsilon$ and $V^{\pi_{s}}_s\left(s_{0}\right) \leq V^*_{\cK,s}(s_0) + \varepsilon$), and our paper denotes relative error as $\varepsilon$ (i.e., we require $V^{\pi_{s}}_s\left(s_{0}\right) \leq (1+\varepsilon)L$ and $V^{\pi_{s}}_s\left(s_{0}\right) \leq V^*_{\cK,s}(s_0) + \varepsilon L$). And their absolute error $\varepsilon_{\text{abs}}$ and our relative error $\varepsilon_{\text{rel}}$ satisfies the following equation: $\varepsilon_{\text{abs}} = \varepsilon_{\text{rel}} L$.

We also remark that when $\cmin = 1$, the original form of sample complexity in Theorem 1, \citep{tarbouriech2020improved} was $\wt{O}({L^5 \Gamma_{L+\varepsilon_{\text{abs}}} S_{L+\varepsilon_{\text{abs}}}  A}{\varepsilon_{\text{abs}}^{-2}} + L^3 S_{L+\varepsilon_{\text{abs}}}^2 A{\varepsilon_{\text{abs}}^{-1}})$, where $\Gamma_{L}:=\max\limits_{(s, a) \in \mathcal{S}_{{L}}^\rightarrow \times \mathcal{A}}\left\|\left\{P\left(s^{\prime} \mid s, a\right)\right\}_{s^{\prime} \in \mathcal{S}_L^\rightarrow} \right\|_{0}$, and $\Gamma_{L+\varepsilon_{\text{abs}}} = S_{L+\varepsilon_{\text{abs}}}$ in the worst case. By setting $\varepsilon_{\text{abs}} = \varepsilon_{\text{rel}} L$ and $\Gamma_{L+\varepsilon_{\text{abs}}} = S_{L+\varepsilon_{\text{abs}}}$, we can obtain the sample complexity bound  $\wt{O}({L^3 S_{(1+\varepsilon_{\text{rel}})L}^2 A}{\varepsilon_{\text{rel}}^{-2}})$  in Lem.\,\ref{theoremupDisCo} when $\cmin = 1$. And in Corollary 1, \citep{tarbouriech2020improved}, they discussed the case when $\cmin\in(0,1)$, which incurs an additional $\cmin^{-2}$ in their sample complexity.

\section{Algorithms and Sample Complexity Bounds}
\label{sect3}

Now we are ready to describe our main algorithm $\mainalgo$ (cf. Alg.\,\ref{algo3}), and currently we focus on autonomous exploration problem. There are three key components in Alg.\,\ref{algo3}.
The first component is running DisCo algorithm (cf.  \citep{tarbouriech2020improved}) with $\varepsilon = 1$. Our aim is to discover a set of states $\cK$ such that $\cSL \subseteq \cK \subseteq \cS_{2L}^\rightarrow$, and compute a set of policies $\{\pi_s\}_{s \in \cK}$ to reach each state $s \in \cK$ with expected cost $V_s^{\pi_s}(s_0)$ no more than $2L$. After the first component, we will fix our set $\cK$, and to solve the AX problem, we need only learn a set of policies $\{\pi_s\}_{s\in\cK}$ such that $V_s^{\pi_{s}}\left(s_{0}\right) \leq V_{\cK,s}^*(s_0) + \varepsilon L$ for all $s \in \cK$.

The second component reduces the autonomous exploration problem to multi-goal SSP (cf. Alg.~\ref{algo2}) using the set $\cK$ computed from the first component.
Alg.~\ref{algo2} first constructs a new MDP $M^\dagger$ by "merging" all the states $s \notin \cK$ to a single artificial state $x$, and to solve AX problem, we need only solve multi-goal SSP problem on MDP $M^\dagger$ with goal space $\mathcal{G} = \cK$. Then the algorithm collects fresh samples of the form $(s,a,s',c)$ for all state-action pairs $(s,a) \in \cK \times \cA$, and the aim is to compute the empirical probability $\wh{P}(s'|s,a)$ and the average cost $\wh{c}(s,a)$ with small error.

In the third component, inspired by recent advances in stochastic shortest path~\cite{tarbouriech2021stochastic}, we design a policy evaluation step to obtain near-optimal estimates of the costs of getting to each $s \in \cSL$ (cf. Alg.~\ref{algo3}).

Below we give detailed descriptions for each component.



\setlength{\textfloatsep}{0.2cm}
\begin{algorithm}[tb]
\caption{\textbf{Re}duce Autonomous Exploration to \textbf{M}ulti-Goal \textbf{SSP} (\redmgssp)}
\label{algo2}
\small
\begin{algorithmic}[1]

\STATE {\bf Input:}
Confidence $\delta \in(0,1)$, exploration radius $L \geq 1$,

\STATE {\bf Input:} a set of states $\cK$, and a set of policies $\{\pi_{s}\}_{s\in\cK}$.\\

\STATE Define MDP $M^\dagger = \langle \cK^\dagger, \mathcal{A}, P^\dagger, c^\dagger, s_0 \rangle$ where $\cK^\dagger$, $P^\dagger$, $c^\dagger$ are defined in Sect.~\ref{sect_algo2}.

\STATE $\forall (s,a,s') \in \cK^\dagger \times \cA \times \cK^\dagger$, set $N(s,a,s')\leftarrow0;~ \wh P_{s,a,s'}\leftarrow0.$

\STATE $\forall (s,a) \in \cK^\dagger \times \cA$, set $N(s,a) \leftarrow 0;~ n(s,a) \leftarrow 0;~ \theta(s,a) \leftarrow 0;~ \wh c (s,a) \leftarrow 0$. \\

\STATE Set $\psi \leftarrow 12000{L^2|\cK|}{\cmin^{-2}}\ln(\frac{|\cK|A}{\delta})$, and $\phi \leftarrow 2^{\lceil \log_2 \psi\rceil}$.

\FOR{\textup{each} $(s,a) \in \cK \times \mathcal{A}$}{

\WHILE{$N(s,a) < \phi$}
{
\STATE Execute policy $\pi_{s}$ on MDP $M^\dagger$ until reaching state $s$.\\

\STATE Take action $a$, incur cost $c$ and observe next state $s' \sim P^\dagger\left(\cdot \mid s, a\right)$.\\

\STATE Set $N(s,a) \leftarrow N(s,a) + 1$, $\theta(s,a) \leftarrow \theta(s,a) + c$,  $N(s,a,s') \leftarrow N(s,a,s') + 1$.\\
}
\ENDWHILE

\STATE Set $\wh{c}(s,a) \leftarrow  \frac{ \theta(s,a)}{N(s,a)}$ and $\theta(s,a) \leftarrow 0$. \\
For all $s^{\prime} \in \cK^\dagger$, set $n(s, a) \leftarrow N(s, a),~ \widehat{P}_{s, a, s^{\prime}} \leftarrow N\left(s, a, s^{\prime}\right) / N(s, a)$.\\
}
\ENDFOR

\STATE For all $a \in \cA$, set $N(x,a) \leftarrow \phi, n(x,a) \leftarrow \phi$, $\wh{c}(x,a) \leftarrow \cRESET$, $\wh{P}_{x,a,s_0} \leftarrow 1$.

\STATE For all $a \in \cA$, $s' \in \cS$,  set $\wh{P}_{x,a,s'} \leftarrow 0$.




\STATE {\bf Output:} $N(), n(), \wh{P}, \theta(), \wh{c}$.
\end{algorithmic}
\end{algorithm}

\subsection{Running DisCo Algorithm with $\varepsilon = 1$}

In the first component of our main algorithm $\mainalgo$ (cf. Alg.\,\ref{algo3}), we use DisCo algorithm with (relative) error $\varepsilon = 1$ as a subroutine. By Lem.\,\ref{theoremupDisCo}, we can obtain a set $\cK$ such that $\cSL \subseteq \cK \subseteq \cS_{2L}^\rightarrow$, and a set of policies $\{\pi_s\}_{s \in \cK}$ such that $\forall s\in\cK, V_s^{\pi_s}(s_0) \leq 2L,$ and the total cost is bounded by $\wt{O}({L^3 S_{(1+\varepsilon)L}^2 A}{\cmin^{-2}})$. In the next subsection, we will focus on a fixed set $\cK$, and reduce the autonomous exploration problem to multi-goal SSP problem.

\subsection{Connection between Autonomous Exploration and Multi-Goal SSP} 
\label{sect_algo2}
In our main algorithm $\mainalgo$ (Alg.\,\ref{algo3}),  after running DisCo with $\varepsilon = 1$, we have obtained a set of known states $\cK \supseteq \cSL$ and discovered all the states that we want to explore, and we denote $K = |\cK|$. Now we focus on the second component of $\mainalgo$ (cf. Alg.\,\ref{algo2}).
We will fix our set of known states $\cK$, and focus only on the policies restricted on $\cK$. Therefore, for all the states $s \notin \cK$, we can regard them as one artificial state $x$, and the only action at state $x$ is \RESET. To this purpose, we will construct an MDP $M^\dagger := \langle \cK^\dagger, \mathcal{A}, P^\dagger, c^\dagger, s_0 \rangle$ where we first define the artificial state $x$, and we set $\cK^\dagger = \cK \cup \{x\}$, and we denote $K' = |\cK^\dagger| = K+1$. For any $(s,a) \in \cK \times \cA$, we define $P^\dagger_{s,a,s'}$ as follows:
\vspace{-0.2cm}
\begin{align*}
P^\dagger_{s,a,s'} = P_{s,a,s'},  ~\forall s' \in \cK, \text{ and }
P_{s,a,x}^\dagger =  \sum\limits_{s' \notin \cK} P_{s,a,s'}.
\vspace{-0.2cm}
\end{align*}
We also define $P_{x,a,s'}^\dagger = \mathds{I}[s' = s_0]$ for any $a \in \cA, s' \in \cK^\dagger$. Finally, we define $c^\dagger(s,a) = c(s,a)$ for all $(s,a) \in \cS \times \cA$, and $c^\dagger(x,a) = \cRESET$ for all $a \in \cA$.
In this way, the AX problem reduces to multi-goal SSP problem on MDP $M^\dagger$ with the set of states being $\cK^\dagger$ and goal space $\cG = \cK$, and all states in $\cK$ are incrementally $2L$-controllable from $s_0$.

Next, we collect $\phi = \wt{\Omega}(L^2|\cK|/\cmin^2)$ fresh samples for each state-action pair $(s,a)\in \cK \times \cA$. Our aim is that for each state-action pair $(s,a) \in \cK \times \cA$, we can obtain $\phi$ samples of the form $(s,a,s',c)$ and compute the empirical probability $\wh{P}(s'|s,a)$ and the average cost $\wh{c}(s,a)$, so that our estimation $\wh{P}(s'|s,a)$ and $\wh{c}(s,a)$ are close enough to ${P}^\dagger(s'|s,a)$ and $c^\dagger(s,a)$, respectively. In DisCo algorithm, we have computed a policy $\pi_s$ for each $s \in \cK$, such that we can execute $\pi_s$ to reach state $s$ from $s_0$ with expected cost no more than $2L$. Hence, to obtain a sample $(s,a,s',c)$ at any state-action pair $(s,a) \in \cK \times \cA$, we need only first execute $\pi_s$ to arrive at state $s$, then execute action $a$.

We remark that using fresh samples is essential for Alg.~\ref{algo3} to ensure these samples are independent of $\cK$, and we cannot use the samples collected in DisCo algorithm because they are dependent of $\cK$. Also, we note that in Alg.\,\ref{algo2} and Alg.\,\ref{algo3}, the estimated transition probability $\wh{P}(s'\mid s,a)$ and the estimated cost $\wh{c}(s' \mid s,a)$ are only evaluated for all $(s,a) \in \cK^\dagger \times \cA$ on MDP $M^\dagger$, rather than for all $(s,a) \in \cS \times \cA$ on MDP $M$, hence the computational complexity of Alg.\,\ref{algo2} and Alg.\,\ref{algo3} does not depend on "global" $|\cS|$. 

We note that the idea of uniformly connecting $\phi$ samples for each state-action pair $(s,a) \in \cK \times \cA$ is similar with DisCo algorithm. The difference is that DisCo algorithm collects $\wt{\Omega}(L^2 |\cK| \cmin^{-2} \varepsilon^{-2})$ samples for each state-action pair $(s,a) \in \cK \times \cA$, but in Alg.\,\ref{algo2} our $\phi = \wt{\Omega}(L^2 |\cK| \cmin^{-2})$ and is smaller than that in DisCo.

\setlength{\textfloatsep}{0.2cm}
\begin{algorithm}[!t]
\caption{\textbf{Val}ue-Aware \textbf{A}utonomous \textbf{E}xploration (\mainalgo)}
\label{algo3}
\begin{algorithmic}[1]

        \small
\STATE {\bf Input:}
Confidence  $\delta \in(0,1)$, error $\varepsilon \in (0,1],$ and $L \geq 1$.

\STATE {\bf Input (for multi-goal SSP only):}
Goal Space $\cG \subseteq \cS$. 

\STATE (For autonomous exploration, set $\cG = \emptyset$.)

\STATE \textbf{Specify:} Trigger set $\mathcal{N} \leftarrow \{ 2^{j-1}\ :\ j=1,2,\ldots\}$. \\ 


\algocomment{We run DisCo algorithm with $\varepsilon = 1$ and get a set $\cK$ such that $\cSL\subseteq\cK\subseteq\cS_{2L}^{\rightarrow}$.}

\STATE Run DisCo algorithm with input $(\delta,\varepsilon=1,L)$ and we get a set $\cK$ and a set of policies $\{\pi_s\}_{s\in\cK}$.\\

\STATE Run Alg.\,\ref{algo2} with input $(\delta,L,\cK,\{\pi_s\}_{s\in\cK})$, and we obtain the variables $N(), n(), \wh{P}, \theta(), \wh{c}$.
\\


\STATE	Set time step $t \leftarrow 1$ and trigger index $j \leftarrow 5+\log_2\frac{1}{\cmin}$.
	
\STATE Set $\epsilon \leftarrow {\varepsilon}/{3}$, $B \leftarrow  10L$, $
\lambda = \wt{O}(1/\epsilon^2),$ and $g \leftarrow s_{0}$.
 \\

\STATE Initialize $\mathcal{G} \leftarrow \cK$ if $\mathcal{G} = \emptyset$. \\

\STATE \algocomment{Solve multi-goal SSP problem on $M^\dagger$ with goal space $\mathcal{G}$.}

\FOR{\textup{round} $r = 1,2,\cdots$}
\STATE \algocomment{Phase (a): Compute Optimal Policy}

        \STATE Compute $ (Q, V) := \mathtt{VISGO}(g, 2^{-j} / (|\cK^\dagger| A))$.\\
        
        \STATE Set the policy $\tpi$ as the greedy policy over $Q$, and $\hat{\tau} \leftarrow 0$.

\STATE \algocomment{Phase (b): Policy Evaluation}
\FOR{\textup{episode} $k = 1,2,\cdots,\lambda$}

\STATE Set $s_{t} \leftarrow s_{0}$ and reset to the initial state $s_0$, and $\hat{\tau}_k \rightarrow 0.$\\

\WHILE{$s_t \neq g$}

{
\STATE Take action $a_{t}=\arg\min_{a \in \mathcal{A}} Q\left(s_{t}, a\right)$ on $M^\dagger$, incur cost $c_t$ and observe next state $s_{t+1} \sim P^\dagger\left(\cdot \mid s_{t}, a_{t}\right)$.\\

\STATE $\operatorname{Set} \  \left(s, a, s^{\prime}, c\right) \leftarrow\left(s_{t}, a_{t}, s_{t+1}, c_t \right)$ and $t \leftarrow t+1$.\\

\STATE $\operatorname{Set}   N(s, a) \leftarrow N(s, a)+1$, $ \theta(s,a) \leftarrow \theta(s,a) + c$, $N\left(s, a, s^{\prime}\right) \leftarrow N\left(s, a, s^{\prime}\right)+1$.\\


\IF{$N(s, a) \in \mathcal{N}$}
{
\STATE Set $j \leftarrow j+1$, $\wh{c}(s,a) \leftarrow  \frac{2 \theta(s,a)}{N(s,a)}$ and $\theta(s,a) \leftarrow 0$.\\
\STATE For all $s^{\prime} \in \cK^\dagger$, set $n(s, a) \leftarrow N(s, a), \widehat{P}_{s, a, s^{\prime}} \leftarrow N\left(s, a, s^{\prime}\right) / N(s, a)$.\\
\STATE Return to line 11, start a new round (the current round has been a \emph{skipped round}).
}
\ENDIF

\STATE Set $\hat{\tau} \leftarrow \hat{\tau} + \frac{c}{\lambda}, \hat{\tau}_k \leftarrow \hat{\tau}_k + c$.

}\ENDWHILE

\IF{$\hat{\tau} > V(s_0) + \epsilon L$}
{
\STATE Return to line 11, start a new round. (the current round has been a \emph{failure round}).
}
\ENDIF
\ENDFOR

\STATE Set $\pi_g \leftarrow \tpi$. Remove $g$ from $\mathcal{G}$. (The current round has been a \emph{success round}.)

\STATE Choose another state $g \in \mathcal{G}$. 

\STATE Stop the algorithm if $\mathcal{G}$ is empty.
\ENDFOR

{\bf Output:}
	The states $s$ in $\cK$ and their corresponding policy $\pi_{s}$.

\end{algorithmic}
\end{algorithm}

\subsection{Value-Aware Algorithms for Autonomous Exploration and Multi-Goal SSP} 
Finally we describe our main algorithm, \mainalgofull~(\mainalgo, cf. Alg.~\ref{algo3}).
First, \mainalgo~uses DisCo algorithm with $\varepsilon = 1$ as a subroutine, and DisCo algorithm computes a set $\cK$ such that $\cSL \subseteq \cK$. We discard all the samples collected in DisCo algorithm, in order to ensure the independence of $\cK$ and $\wh{P}_{s,a}$. 
Second, we use Alg.\,\ref{algo2} as a burn-in step to collect $\wt{\Omega}(L^2|\cK|/\cmin^2)$ samples for each of the state-action pair $(s,a)$ so that the empirical model $\wh P$ and the true model $P^\dagger$ are close enough.
This guarantees that with high probability, in any round $r$, the expected cost of the greedy policy $\tilde{\pi}$ in Phase (a) on model $P^\dagger$ is no more than ${O}(L)$, which is proved in Lem.~\ref{consterror}.

From now on we work on the MDP $M^\dagger$, and we will solve the multi-goal SSP problem on $M^\dagger$ and compute near-optimal policies $\pi_g$ for all the goal states $g \in \cK$. We choose the goal state $g \in \cK$ one by one, and we move to another goal state $g$ if the average performance of the policy $\pi_g$ is close to our estimation of the optimal policy.
In each round, we have two phases. In the first phase, we use \VISGO (cf. Alg.~\ref{plan} in Appendix~\ref{appendix_VISGO}) to estimate the value function of the optimal policy with goal state $g$ (denoted as $V$), and we set the policy $\tpi$ as the greedy policy over its output $Q$. We note that $V$ is optimistic, i.e., $V(s) \leq V^*_{\cK,g}(s) \leq 2L+1$. Since we do not know whether the policy $\tpi$ is close enough to the optimal policy, in the second phase, we will execute $\tpi$ for $\lambda = \wt{O}({1}/{\epsilon^2})$ times and check whether the average performance is close enough to our estimation of the optimal cost (i.e., check whether $\hat{\tau} \leq V(s_0) + \epsilon L$). By setting
$$
\lambda = \lceil \frac{2048}{\epsilon^2}\ln^2(\frac{256}{\epsilon})\ln(\frac{2|\cK|}{\delta}) \rceil
$$ and using concentration inequalities (Lem.\,\ref{goodpolicy}), we can prove that the average performance $\hat{\tau}$ in $\lambda$ episodes is close enough to the expected cost of $\tilde{\pi}$. 
In this process, we also collect samples, and use them to help us estimate the value function of the optimal policy.

In the second phase, the current round will be classified into three cases: failure round, skipped round, and success round. This borrows the idea from \citep{lim2012autonomous}. If the average performance of the policy $\tpi$ is too bad (i.e., $\hat{\tau}$ is larger than $V(s_0) + \epsilon L$), we will consider the current round as a failure round. If the number of samples $N(s,a)$ meets the trigger set (i.e. is a power of 2), we will consider the current round as a skipped round, following the idea in \citep{jaksch2010near}. Otherwise, the current round is a success round. In the case of a failure round or a skipped round, we will not change the goal state $g$, and in the next round, we compute a new policy by \VISGO using the samples collected in this round. In the case of a success round, as the average performance of the policy $\tpi$ is close to optimal, we can set the $\tpi$ as the policy $\pi_g$ for the goal state $g$, and choose another goal state $g$.



\newcommand{\theoremupSSP}{

Assume that $L \geq 1$, $0 < \varepsilon \leq 1$ and $0 < \delta < 1$. For any MDP $M = \langle \mathcal{S}, \mathcal{A}, P, c, s_0 \rangle$ satisfying Asmp.\,\ref{assa}, with probability at least $1-\delta$, our Alg.\,\ref{algo3} will terminate and output a set of states $\cK$ such that $\cS_L^\rightarrow \subseteq \cK \subseteq \cS_{2L}^\rightarrow$, and a set of policies $\{\pi_s\}_{s\in\cK}$ restricted on $\cK$, such that $\forall s \in \cK, V^{\pi_{s}}_s\left(s_{0}\right) \leq V^*_{\cK,s}(s_0) + \varepsilon L,$ and the cumulative cost $C_T = \wt{O}({L  S_{2L} A}{\varepsilon^{-2}}+{L S_{2L}^{2} A}{\varepsilon^{-1}} + {L^3 S_{2L}^2 A}{\cmin^{-2}}).$ 
And when $\varepsilon \leq \min(S_{2L}^{-1}, L^{-1}\cmin)$, we have $C_T = \wt{O}({L  S_{2L} A}{\varepsilon^{-2}})$.
}
\begin{theorem}[Cumulative Cost for AX]
\label{theoremupSSP}
\theoremupSSP
\end{theorem}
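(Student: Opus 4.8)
The plan is to split the claim into a \emph{correctness} part (the returned set $\cK$ and policies meet the stated guarantees and the algorithm halts) and a \emph{cost} part (the bound on $C_T$), spending the bulk of the effort on the latter. For correctness I would first invoke Lem.~\ref{theoremupDisCo} with relative error $\varepsilon=1$: the DisCo burn-in returns a set with $\cSL\subseteq\cK\subseteq\cS_{2L}^\rightarrow$ together with policies of expected cost at most $2L$, which fixes $\cK$ and reduces the task to multi-goal SSP on $M^\dagger$ with goal space $\cK$. Using the $\redmgssp$ burn-in I would then argue, via Lem.~\ref{consterror}, that with high probability $\wh P$ is accurate enough that in every round the \VISGO\ output $V$ is optimistic, $V(s)\le V^*_{\cK,g}(s)\le 2L+1$, and the greedy policy $\tpi$ has true expected cost $O(L)$ on $M^\dagger$. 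The per-goal guarantee $V^{\pi_s}_s(s_0)\le V^*_{\cK,s}(s_0)+\varepsilon L$ then follows from the success test: when a round succeeds, $\hat\tau\le V(s_0)+\epsilon L$, and the concentration bound of Lem.~\ref{goodpolicy} converts the empirical average $\hat\tau$ over the $\lambda$ episodes into a bound on $V^{\tpi}_g(s_0)$, giving $V^{\tpi}_g(s_0)\le V(s_0)+2\epsilon L\le V^*_{\cK,g}(s_0)+\varepsilon L$ with the choice $\epsilon=\varepsilon/3$. The final implication $V^{\pi_s}_s(s_0)\le(1+\varepsilon)L$ for $s\in\cSL$ is precisely the monotonicity argument already recorded in the preliminaries.

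For the cost I would decompose $C_T$ into the DisCo cost, the $\redmgssp$ burn-in cost, and the Phase-(b) cost. The first two are $\wt{O}(L^3 S_{2L}^2 A\cmin^{-2})$, by Lem.~\ref{theoremupDisCo} and by charging each of the $\phi=\wt{O}(L^2|\cK|\cmin^{-2})$ samples per pair to one execution of a policy of expected cost $\le 2L$ across the $|\cK|A$ pairs. The Phase-(b) cost is at most (expected cost per episode $=O(L)$) times (total episodes $\le\lambda r$), so everything reduces to bounding the number of rounds $r=r_f+r_s+r_{\text{skip}}$. I would bound $r_s\le|\cK|\le S_{2L}$ (each success removes a goal) and $r_{\text{skip}}=\wt{O}(|\cK|A)=\wt{O}(S_{2L}A)$ (each skip doubles some $N(s,a)$, and each pair doubles only $\wt{O}(1)$ times); multiplying $\lambda(r_s+r_{\text{skip}})$ by $O(L)$ with $\lambda=\wt{O}(\varepsilon^{-2})$ already contributes the $\wt{O}(LS_{2L}A\varepsilon^{-2})$ term.

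The crux is bounding the number of failure rounds $r_f$ by sandwiching the total regret $\mathcal R:=\sum_r\sum_{k}\bigl(\hat\tau_k-V_r(s_0)\bigr)$, where $V_r$ is the \VISGO\ value of round $r$. For the lower bound, each failure round contributes at least $\lambda\epsilon L=\wt{\Omega}(L/\varepsilon)$ since its test fails, while Lem.~\ref{regretlowerbound} controls the possibly-negative regret of the success and skipped rounds from below, using the nonnegativity of $V^{\tpi}_g(s_0)-V_r(s_0)$ together with a Freedman/Bernstein martingale bound; hence $\mathcal R\ge\wt{\Omega}(r_f L/\varepsilon)-(\text{slack})$ with the slack lower-order. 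For the upper bound I would extend the variance-aware single-goal SSP analysis of \citep{tarbouriech2021stochastic} to the multi-goal, optimistic-\VISGO, round-based setting, exploiting that all value functions are $O(L)$-bounded (from the burn-in) so the dependence on $L$ stays linear, obtaining $\mathcal R=\wt{O}\bigl(L\sqrt{S_{2L}A\,\lambda\,r_f}+LS_{2L}^2A\bigr)$. Equating the two bounds and solving the resulting inequality in $\sqrt{r_f}$ gives $\lambda r_f=\wt{O}(S_{2L}A\varepsilon^{-2}+S_{2L}^2A\varepsilon^{-1})$, whose contribution $O(L)\lambda r_f$ to Phase (b), combined with the $r_s,r_{\text{skip}}$ contribution, yields the first two terms of $C_T$; adding the DisCo and burn-in costs gives the third, and finiteness of $r$ yields termination. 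The small-$\varepsilon$ simplification is then immediate, since for $\varepsilon$ below the stated thresholds the $\varepsilon^{-2}$ term dominates the other two.

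I expect the regret upper bound to be the main obstacle: the SSP variance/Bernstein machinery must be adapted so that (i) it tracks a goal $g$ that changes across rounds while samples are shared across goals, (ii) it handles the optimistic \VISGO\ estimates rather than exact value iteration, and (iii) it keeps every value function $O(L)$-bounded so the $L$-dependence is linear rather than cubic. Ensuring the $r_f$-independent lower-order term is only $\wt{O}(LS_{2L}^2A)$—which is exactly what drives the $\varepsilon^{-1}$ term—rather than something larger is the delicate point, and the two-sided regret accounting through Lem.~\ref{regretlowerbound} is what makes the sandwich close.
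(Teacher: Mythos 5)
Your proposal follows essentially the same route as the paper's own proof: DisCo with $\varepsilon=1$ plus the \redmgssp{} burn-in for correctness via optimism (Lem.~\ref{lemma_opt}), bounded error (Lem.~\ref{consterror}) and the success-test concentration (Lem.~\ref{goodpolicy}); then the identical cost decomposition with success/skipped rounds bounded directly and $r_f$ controlled by sandwiching the regret between the EB-SSP-style upper bound (Lem.~\ref{regretbound}, Eq.~\ref{regretuprf}) and the failure-round lower bound (Lem.~\ref{regretlowerbound}). The only cosmetic differences (naming a Freedman/Bernstein bound where the paper uses the truncated-Hoeffding Lem.~\ref{probbound}, and dropping the harmless $LKA/\epsilon$ term from the regret upper bound) do not change the argument or the final bound.
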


Thm.\ref{theoremupSSP} shows that Alg.\ref{algo3} solves autonomous exploration problem.
Note that in Thm.\,\ref{theoremupSSP}, the dependency on $L$ is tight when $\varepsilon \rightarrow 0$, because we leverage the variance information in the policy-evaluation phase, which is necessary in RL problems generally. 
DisCo algorithm does not use the variance information because it collects equal number of samples on each state-action pair $(s,a)$, i.e., the sample collection in DisCo algorithm does not use the estimated value function as the guidance.



We highlight that the leading term of $C_T$ does not have $\cmin$. This is because the variance fundamentally does not scale with $\cmin$
(cf. Lem.\,\ref{regretbound} and Lem.\,\ref{regretlowerbound}). While we discover a larger set $\cK \subseteq \cS_{2L}^\rightarrow$ compared with \citep{lim2012autonomous} and \citep{tarbouriech2020improved}, we note that if the number of the $L$-controllable states grows polynomially with respect to $L$,  $S_L$ and $S_{2L}$ will be of the same order. Hence under this assumption, our sample complexity bound strictly improves the existing ones and is nearly minimax optimal.

Lastly, we note that Alg.~\ref{algo3} also solves the multi-goal SSP problem, and it enjoys a near-optimal sample complexity for multi-goal SSP:
\begin{theorem}[Cumulative Cost for Multi-Goal SSP]
Assume that $L \geq 1$, $0 < \varepsilon \leq 1$, $0 < \delta < 1$ and goal space $\mathcal{G} \subseteq \cS$. For any MDP $M = \langle \mathcal{S}, \mathcal{A}, P, c, s_0 \rangle$ satisfying Asmp.\,\ref{assa} and $\cSL = \cS$,
with probability at least $1-\delta$, our Alg.\,\ref{algo3} will terminate and output a set of policies $\{\pi_s\}_{s\in\mathcal{G}}$ such that $\forall s \in \mathcal{G}, V^{\pi_{s}}_s\left(s_{0}\right) \leq V^*_{s}(s_0) + \varepsilon L,$ and the cumulative cost 
$C_T = \wt{O}({L  S A}{\varepsilon^{-2}}+{L S^{2} A}{\varepsilon^{-1}} + {L^3 S^2 A}{\cmin^{-2}}).$ And when $\varepsilon \leq \min(S^{-1}, L^{-1}\cmin)$, we have $C_T = \wt{O}({L S A}{\varepsilon^{-2}})$.
\end{theorem}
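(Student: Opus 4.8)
The plan is to derive this theorem as a direct corollary of Theorem~\ref{theoremupSSP}, the cumulative-cost guarantee for autonomous exploration, by exploiting the fact that multi-goal SSP is precisely the specialization of the AX problem to the regime $\cSL = \cS$. Algorithm~\ref{algo3} already handles both problems through its goal-space input: for AX it sets $\mathcal{G} \leftarrow \cK$, whereas for multi-goal SSP it receives the target goal space $\mathcal{G} \subseteq \cS$ directly. So the bulk of the work is to check that the structural quantities appearing in Theorem~\ref{theoremupSSP} collapse to their multi-goal-SSP counterparts.

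First I would record the structural simplifications forced by $\cSL = \cS$. Since $\cSL \subseteq \cS_{2L}^\rightarrow \subseteq \cS$ always holds, the hypothesis $\cSL = \cS$ immediately gives $\cS_{2L}^\rightarrow = \cS$ and hence $S_{2L} = S$. Feeding this into Lemma~\ref{theoremupDisCo}, the first component of the algorithm (DisCo with $\varepsilon = 1$) returns a set $\cK$ with $\cSL \subseteq \cK \subseteq \cS_{2L}^\rightarrow = \cS$, which pins down $\cK = \cS$. With $\cK = \cS$ the merged state $x$ in the constructed MDP $M^\dagger$ satisfies $P^\dagger_{s,a,x} = \sum_{s' \notin \cK} P_{s,a,s'} = 0$, so $x$ is unreachable and $M^\dagger$ agrees with $M$ on all visited transitions; moreover $V^*_{\cK,s} = V^*_{\cS,s} = V^*_s$ for every $s$.

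The correctness guarantee then transfers verbatim. Theorem~\ref{theoremupSSP} yields policies with $V^{\pi_s}_s(s_0) \leq V^*_{\cK,s}(s_0) + \varepsilon L$ for all $s \in \cK$; restricting attention to $s \in \mathcal{G} \subseteq \cS = \cK$ and substituting $V^*_{\cK,s} = V^*_s$ produces exactly $V^{\pi_s}_s(s_0) \leq V^*_s(s_0) + \varepsilon L$. Likewise the cumulative-cost bound is obtained by replacing every occurrence of $S_{2L}$ in Theorem~\ref{theoremupSSP} with $S$, giving the claimed $\wt{O}(LSA/\varepsilon^2 + LS^2A/\varepsilon + L^3 S^2 A/\cmin^2)$ and, under $\varepsilon \leq \min(S^{-1}, L^{-1}\cmin)$, the leading term $\wt{O}(LSA/\varepsilon^2)$.

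The one point that deserves care --- and the only place I expect genuine content beyond the substitution $S_{2L}\to S$ --- is confirming that the round-counting analysis underlying Theorem~\ref{theoremupSSP} is insensitive to replacing the goal set $\cK$ used in AX by the arbitrary input goal set $\mathcal{G} \subseteq \cS$. Here the number of success rounds equals $|\mathcal{G}| \leq S$ rather than $|\cK| = S$, the number of skipped rounds is still $\wt{O}(SA)$ since the trigger set counts visits to all pairs $(s,a)$ and does not depend on which goals are queried, and the number of failure rounds is still controlled by the same matching upper and lower bounds on the accumulated regret. Because the optimism of \VISGO and the per-goal policy-evaluation guarantee (Lemma~\ref{goodpolicy}) are established for an arbitrary goal $g \in \cK = \cS$, they hold uniformly over $g \in \mathcal{G}$, so the entire regret argument carries over unchanged. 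I would therefore expect the proof to amount to this uniformity observation plus the bookkeeping substitutions above.
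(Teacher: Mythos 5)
Your proposal is correct and takes essentially the same route as the paper: the paper offers no standalone proof of this theorem, treating it exactly as you do---a specialization of Theorem~\ref{theoremupSSP} via the chain $\cSL = \cS \Rightarrow \cS_{2L}^\rightarrow = \cS \Rightarrow \cK = \cS$, so that $S_{2L} = S$ and $V^*_{\cK,s} = V^*_s$, which turns both the policy guarantee and the cost bound into the stated forms. Your extra uniformity check for an arbitrary input goal space $\mathcal{G} \subseteq \cS$ (success rounds bounded by $|\mathcal{G}|$, skipped/failure round counts unaffected, optimism and Lemma~\ref{goodpolicy} holding for every goal in $\cK$, as the event $\calE_3$ is already defined over all goals in $\cK$) is consistent with the paper's analysis and fills in a detail the paper leaves implicit.
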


\section{A Minimax Lower Bound for Autonomous Exploration}

\begin{tikzpicture}[->,>=stealth',shorten >=1pt,auto,node distance=2.8cm, semithick]
\tikzstyle{every state}=[fill=white,text=black]

\node[state]                (W)                  {$s_0$};
\node[state]                (A) [right=1.5cm of W]    {$s_1$};
\node[state, fill=white]    (B) [right=2.9cm of A] {$g$};


\path  
	(W)  edge node{$\frac{2}{L}$} (A)
	(W)  edge [loop below] node{$1-\frac{2}{L}$} (W)  
	(A)  edge [loop above, dashed] node{$1 - \frac{2}{(1+6\varepsilon)L}$} (A)
	(A)  edge [loop below, blue] node{$1 - \frac{2}{L}$} (A)
	(A)  edge[blue, bend right]   node[below]{$\textcolor{blue}{\frac{2}{L}}$} (B)
	(A)  edge[dashed]   node[above]{$\frac{2}{(1+6\varepsilon)L}$} (B)
	(B)  edge [loop right] node{1} (B)  
	;

\end{tikzpicture}

\captionof{figure}{Illustration of our construction of the hard MDP.}
\label{FigureLB}

\newcommand{\algo}{\boldsymbol{\pi}}

Here we present our lower bound of sample complexity for the autonomous exploration problem, and we follow the definitions in \cite{domingues2021episodic}.

We define a learning algorithm as a history-dependent policy $\algo$ used to interact with an MDP $M$, and the rigorous definition of $\algo$ is in Appendix~\ref{app_ax_lowerbound}. We recall that in AX setting, the algorithm eventually stops and output a set $\cK \subseteq \cS$ and a set of policies $\{\pi_s\}_{s \in \cK}$. 
Hence we define an algorithm for the AX problem as a tuple $(\algo, \tau, \cK, \{\pi_s\}_{s \in \cK})$, where $\tau$ is the stopping time (total number of steps) chosen by the algorithm, $\cK$ and $\{\pi_s\}_{s \in \cK}$ are the output of the algorithm.  Now we formally write the definition of an algorithm for AX problem.

\begin{definition}
    An algorithm $(\algo, \tau, \cK, \{\pi_s\}_{s \in \cK})$ is $(\varepsilon, \delta, L)$-PAC for AX problem on MDP $M$, if with probability over $1 - \delta$, the algorithm returns a set of states $\cK$ and a set of policies $\{\pi_s\}_{s \in \cK}$ after $\tau$ steps, such that $\cK \supseteq \cSL$ and $\forall s \in \mathcal{S}_L^\rightarrow, V^{\pi_{s}}_s\left(s_{0}\right) \leq (1+\varepsilon)L$.
\end{definition}

We note that $\tau$ is a random variable over the probability distribution $\mathbb{P}_{\boldsymbol{\pi}, M}$, where $\mathbb{P}_{\boldsymbol{\pi}, M}$ is determined by algorithm $\algo$ and MDP $M$, and $\mathbb{P}_{\boldsymbol{\pi}, M}$ is defined in Appendix~\ref{app_ax_lowerbound}. Also, we denote the operator $\mathbb{E}_{\boldsymbol{\pi}, M}$ as the expectation under $\mathbb{P}_{\boldsymbol{\pi}, M}$.

Then for any real numbers $L, \cmin$ and positive integers $S,A,S_L$, we define a class of MDPs $\mathfrak{M}(L,S_L)$ as follows: $\mathfrak{M}(L,S_L)$ contains all the MDPs $M =  \langle \mathcal{S}, \mathcal{A}, P, c, s_0 \rangle$, such that $|\cS| \leq S$, $|\cA| \leq A$, $c(s,a) \in [\cmin,1]$ for all $(s,a) \in \cS \times \cA$, and $M$ satisfies Asmp.~\,\ref{assa} and $|\cSL| \leq S_L$.

Finally, the following theorem states the lower bound for the autonomous exploration problem.
\newcommand{\lowerbound}{
Assume that $L > 4$, $S > 8$, $A > 4$, $4 \leq S_L \leq \min\{(A - 1)^{\lfloor\frac{L}{2}\rfloor},S\}$, $0 < \varepsilon < \frac{1}{4}$, $0 < \delta < \frac{1}{16}$, and $0 < \cmin \leq 1$. 
Then for any algorithm $(\algo, \tau, \cK, \{\pi_s\}_{s \in \cK})$ that is $(\varepsilon, \delta, L)$-PAC for AX problem on any MDP $M \in \mathfrak{M}(L,S_L)$, there exists an MDP $\cM \in \mathfrak{M}(L,S_L)$ such that 
\[
\mathbb{E}_{\algo, \cM}[\tau] =  \Omega(\frac{L S_L A}{\cmin \varepsilon^2} \log \frac{1}{\delta}).
\]
}
\begin{theorem}
\label{lowerbound}
\lowerbound

\end{theorem}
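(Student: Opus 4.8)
The plan is to exhibit a single base instance $\cM$ on which any $(\varepsilon,\delta,L)$-PAC algorithm is forced to gather $\Omega\big(\tfrac{L}{\cmin\varepsilon^{2}}\log\tfrac1\delta\big)$ samples at each of $\Omega(S_LA)$ distinct state--action pairs, and then to sum these counts. The backbone is the gadget of Figure~\ref{FigureLB}: at a ``decision'' state one action may be \emph{fast} or \emph{slow}, the two differing by a relative factor $1+6\varepsilon$ in their per-step probability $\Theta(\cmin/L)$ of reaching the local goal $g$ (the values $2/L$ and $2/((1+6\varepsilon)L)$ in the figure are the case $\cmin=1$), with all instantaneous costs set to $\cmin$ so that a fast route costs expected $\Theta(L)$ and a slow route costs $(1+\Theta(\varepsilon))L$. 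Thus a fast action makes $g$ incrementally $L$-controllable and admits a policy of cost $\le(1+\varepsilon)L$, whereas a slow action does not; by the PAC guarantee the algorithm must decide fast-vs-slow correctly for every such pair, a family of binary hypothesis tests. The central tool is the change-of-measure (transportation) inequality of \citet{domingues2021episodic}: for two instances $\cM,\cM'$ differing only at $(s,a)$ and any history-measurable event $\mathcal E$,
\[
\mathbb E_{\algo,\cM}[N_\tau(s,a)]\cdot \KL\!\big(P^{\cM}_{s,a}\,\big\|\,P'_{s,a}\big)\ \ge\ \mathrm{kl}\big(\mathbb P_{\algo,\cM}(\mathcal E),\,\mathbb P_{\algo,\cM'}(\mathcal E)\big),
\]
where $N_\tau(s,a)$ counts visits to $(s,a)$ before the stopping time $\tau$.

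I would fix the base MDP $\cM$ in which \emph{every} action at every decision state is slow, and index the perturbations $\cM'=\cM_{s,a}$ by flipping a single pair $(s,a)$ to fast. The decision states and their goals are laid out so that $\cM$ and all the $\cM_{s,a}$ belong to $\mathfrak M(L,S_L)$: the budget $S_L\le(A-1)^{\lfloor L/2\rfloor}$ is exactly what lets me place $\Theta(S_L)$ reachable decision states (each carrying $A$ actions) within expected reach $O(L)$ of $s_0$, using the $\RESET$ action to return, while keeping $|\cSL|\le S_L$ in every instance (only the one flipped goal becomes $L$-controllable in $\cM_{s,a}$). For each pair I take $\mathcal E_{s,a}$ to be the event that the output declares $(s,a)$ fast (e.g.\ includes the corresponding goal in $\cK$ with a cost-$\le(1+\varepsilon)L$ policy through $a$); PAC-correctness forces $\mathbb P_{\cM}(\mathcal E_{s,a})\le\delta$ and $\mathbb P_{\cM_{s,a}}(\mathcal E_{s,a})\ge 1-\delta$, so with $\delta<\tfrac1{16}$ the binary divergence obeys $\mathrm{kl}(\cdot,\cdot)\ge c\log\tfrac1\delta$ for an absolute constant $c$.

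The quantitative heart is the per-visit divergence. Since $q=p/(1+6\varepsilon)$ with $p=\Theta(\cmin/L)$, a short Bernoulli computation gives $\KL(\mathrm{Ber}(q)\,\|\,\mathrm{Ber}(p))=\Theta(\cmin\varepsilon^{2}/L)$; combining this with the display yields $\mathbb E_{\algo,\cM}[N_\tau(s,a)]=\Omega\big(\tfrac{L}{\cmin\varepsilon^{2}}\log\tfrac1\delta\big)$ for every decision pair. Because every elementary step visits exactly one pair, $\tau\ge\sum_{(s,a)}N_\tau(s,a)$, and summing the per-pair bound over the $\Theta(S_LA)$ decision pairs gives $\mathbb E_{\algo,\cM}[\tau]=\Omega\big(\tfrac{L S_L A}{\cmin\varepsilon^{2}}\log\tfrac1\delta\big)$, as claimed, with $\cM$ the required instance.

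I expect the main obstacle to be the \emph{geometry and budget bookkeeping} rather than the information theory: I must realize $\Theta(S_L)$ decision states, each genuinely reachable from $s_0$ within expected cost $O(L)$ so the agent \emph{can} sample them, while guaranteeing that in the base instance no candidate goal slips into $\cSL$ and that in each $\cM_{s,a}$ exactly one does, so that all instances stay inside $\mathfrak M(L,S_L)$ and the $\mathcal E_{s,a}$ are genuinely distinguishing. A related delicate point is that the perturbations at different pairs must \emph{add up} on the single instance $\cM$; this is automatic here because the transportation inequality is applied pairwise against $\cM$ and the counts $N_\tau(s,a)$ are disjoint contributions to $\tau$, but it forces the gadgets to be information-theoretically independent (a flip at one pair must not alter the law at any other pair), which the layered $(A-1)$-ary construction is designed to ensure. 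Finally, I would track absolute constants through the calibration of $1+6\varepsilon$ against the target threshold $(1+\varepsilon)L$, using $\varepsilon<\tfrac14$ to preserve the separation margin that makes each $\mathcal E_{s,a}$ forced.
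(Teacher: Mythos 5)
Your construction and information-theoretic machinery essentially match the paper's (a tree of depth $d_0\le L/2$ with $\Theta(S_L)$ leaves, slow/fast Bernoulli gadgets with per-visit divergence $\Theta(\cmin\varepsilon^2/L)$, change of measure via Lem.~\ref{keylemma} and Lem.~\ref{randomz}), but there is a genuine gap at the step where you aggregate the per-pair bounds: the claim that ``PAC-correctness forces $\mathbb{P}_{\algo,\cM}(\mathcal{E}_{s,a})\le\delta$'' on the base instance is false. In the base MDP the candidate goal is \emph{not} in $\cSL$, and the paper's definition of $(\varepsilon,\delta,L)$-PAC for AX is one-sided: it requires $\cK\supseteq\cSL$ and good policies for the states in $\cSL$, but it places no constraint on extra states the algorithm chooses to put in $\cK$, nor on the quality of their policies (indeed the paper's own algorithm outputs a strictly larger set $\cK\subseteq\cS_{2L}^{\rightarrow}$). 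Hence a PAC algorithm run on the all-slow instance $\cM$ may, without violating anything, always include the goal in $\cK$ together with a policy routing through one particular pair $(s,a)$, making $\mathbb{P}_{\algo,\cM}(\mathcal{E}_{s,a})$ close to $1$; for that pair the binary divergence $\mathrm{kl}\big(\mathbb{P}_{\algo,\cM}[\mathcal{E}_{s,a}],\mathbb{P}_{\algo,\cM_{s,a}}[\mathcal{E}_{s,a}]\big)$ can be $O(\delta)$ rather than $\Omega(\log(1/\delta))$, and your per-pair bound on $\mathbb{E}[N_\tau(s,a)]$ fails. The paper closes exactly this hole with an aggregation argument you never invoke: with a \emph{single shared} goal $g$, the event $Z_{s,a}$ forces the single output policy $\pi_g$ to place probability at least $2/3$ on $(s_{d_0},a_{d_0})=(s,a)$, so the events $\{Z_{s,a}\}$ are pairwise mutually exclusive under $\mathbb{P}_0$ and $\sum_{(s,a)}\mathbb{P}_0[Z_{s,a}]\le 1$; summing
\begin{equation*}
\mathbb{E}_{0}\big[N^{\tau}_{(s,a)}\big]\;\ge\;\frac{L}{144\,\cmin\varepsilon^{2}}\Big(\big(1-\mathbb{P}_{0}[Z_{s,a}]\big)\log\tfrac{1}{\delta}-\log 2\Big)
\end{equation*}
over the $\Theta(S_LA)$ pairs then loses the $\log(1/\delta)$ factor on at most one pair's worth of probability mass, which is what rescues the $\Omega(LS_LA\,\cmin^{-1}\varepsilon^{-2}\log(1/\delta))$ total.

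A related and consequential ambiguity in your write-up is whether each decision pair has its own ``corresponding goal.'' This matters: if you attach a separate goal $g_{s,a}$ to every pair $(s,a)$, the construction collapses entirely, because the zero-sample algorithm that outputs $\cK$ equal to the whole state set and, for each $g_{s,a}$, the canonical policy that takes action $a$ at state $s$, is $(\varepsilon,\delta,L)$-PAC on every instance of the family: on $\cM_{s,a}$ that canonical policy has value exactly $L$, and no other goal lies in $\cSL$, so no lower bound can follow. Identification is forced only when one goal is shared across the $A$ competing actions (one shared goal overall, as in the paper, or at least one per decision state), so that a single output policy cannot be simultaneously near-optimal under several perturbations; this sharing is precisely what makes the mutual-exclusivity step available and the whole argument sound.
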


As $\tau$ is the total number of steps used in the algorithm $\algo$, the lower bound of cumulative cost $C_T$ is $\cmin$ multiplies the lower bound of $\tau$, i.e., $\Omega({L S_L A}{ \varepsilon^{-2}} \log \frac{1}{\delta})$.
This lower bound further implies our upper bound (Theorem~\ref{theoremupSSP}) is nearly minimax-optimal when $S_L$ and $S_{2L}$ are of the same order.
We also have a lower bound for multi-goal SSP (cf. Appendix \ref{sect_lowerbound_MGSSP}).

\subsection{Proof Sketch}
We briefly sketch our proof of the lower bound. We consider the case $\cmin = 1$ and $L > 2$ for convenience, and we first construct our family of hard MDPs for $S = 3$ states (cf. Fig.~\ref{FigureLB}), where $s_0$ is initial state, $s_1$ is middle state and $g$ is goal state. In the initial state $s_0$, taking any action $a$ will transit to state $s_1$ with probability $\frac{2}{L}$, and stay at state $s_0$ with probability $1 - \frac{2}{L}$. In state $s_1$, there is only one optimal action $a^*$. When we take the action $a^*$ in $s_1$ (the blue edges), the agent will transit to the goal state $g$ with probability $\frac{2}{L}$ and stay at $s_1$ with probability $1 - \frac{2}{L}$. When we take an action $a \neq a^*$ in $s_1$ (the dashed edges), the agent will transit to the goal state $g$ with smaller probability $\frac{2}{(1+6\varepsilon)L}$. We note that the $\RESET$ action is not drawn in Fig.~\ref{FigureLB}.

We can verify that $V^*_{g} (s_0) = \frac{L}{2} + \frac{L}{2} = L$, and $g \in \cSL$ (hence $g$ should be contained in the learning algorithm's output $\cK$). Let $\pi_{g}$ be the output policy of the learning algorithm with respect to goal state ${g}$. If $\pi_{g}(s_1) = a^*$, we have $V^{\pi_{g}}_{g}(s_0) = L$. Otherwise, we have $V^{\pi_{g}}_{g}(s_0) = \frac{L}{2} + \frac{(1+6\varepsilon)L}{2} > (1 + \varepsilon) L$, i.e., the policy $\pi_g$ is not valid output for AX if $\pi_g(s_1) \neq a^*$. Hence, if the algorithm solves AX problem on this MDP, it has to discriminate between two Bernoulli distributions with $p_1 = \frac{2}{L}$ and $p_2 = \frac{2}{(1+6\varepsilon)L}$ among all the $A$ actions, and the KL divergence of the two distributions is $O(\varepsilon^2/L)$. Hence we can prove that we need at least $\wt{\Omega}(LA/\varepsilon^2)$ to solve AX on this MDP. The technique of KL divergence is similar with \cite{domingues2021episodic}.

Then we can generalize our hard MDP to larger $S_L$. We first construct an MDP $\cM_0'$ with $ {S_L}-1 $ states, and each middle state $s_i$ can be reached from $s_0$ in $L/2$ steps in expectation. 
Then we add a goal states $g$, and we choose one optimal state-action pair $(s_i^*, a^*)$ among all the middle states and actions. Finally, we set the transition probability $P(g|s_i^*, a^*) = \frac{2}{L}$, and $P(g|s_i, a) = \frac{2}{(1+6\varepsilon)L}$ for other pair of middle state and action $(s_i, a)$. In intuition, this extends the construction in Fig.~\ref{FigureLB} from $A$ actions to $O(S_L A)$ actions. The full construction is in Appendix~\ref{app_ax_lowerbound}. Under this construction, we can prove that the lower bound scales as $\wt{\Omega}(LS_L A/\varepsilon^2)$.

\section{Conclusion}
We introduced a new algorithm for the autonomous exploration problem, which improves existing ones.
Along the way, we also introduced a new problem, multi-goal SSP problem, which can be of independent interest.
The natural future directions include designing an algorithm with $\widetilde{O}\left(\frac{LS_{L}A}{\varepsilon^2}\right)$ sample complexity instead of $\widetilde{O}\left(\frac{LS_{2L}A}{\varepsilon^2}\right)$,  and improving the lower order terms in existing bounds. 


\section*{Acknowledgements}
The work is supported by JD.com. Simon S. Du gratefully acknowledges the funding from NSF Award’s IIS-2110170 and DMS-2134106.

\bibliographystyle{icml2022}
\bibliography{bibliography}

\newpage
\appendix
\onecolumn
\section{Basic Property of the Optimal Policy}
\begin{lemma}[\citealp{bertsekas1991analysis};\citealp{yu2013boundedness}] \label{lemma_wellposedproblem}
Suppose that there exists a proper policy with respect to the goal state $g$ and that for every improper policy $\pi'$ there exists at least one state $s \in \cS$ such that $V_g^{\pi'}(s) = + \infty$. Then the optimal policy $\pi^{{*}}$ is stationary, deterministic, and proper. Moreover, $V_g^{*} = V_g^{\pi^{*}}$ is the unique solution of the optimality equations $V_g^{*} = \mathcal{L} V_g^{*}$ and $V_g^{*}(s) < +\infty$ for any $s \in \mathcal{S}$, where for any vector $V \in \mathbb{R}^S$ the optimal Bellman operator $\mathcal{L}$ is defined as
\begin{align*}
\mathcal{L}V(s) := \min_{a \in \mathcal{A}} \Big\{ c(s, a) + P_{s,a} V \Big\}. 
\end{align*}
Furthermore, the optimal $Q$-value, denoted by $Q_g^{{*}}  = Q_g^{\pi^{{*}}}$, is related to the optimal value function as follows
    \begin{align*}
    Q_g^{{*}}(s,a) = c(s,a) + P_{s,a} V_g^{{*}}, \quad \quad V_g^{{*}}(s) = \min_{a \in \cA} Q_g^{{*}}(s,a), \quad \quad \forall (s,a) \in \cS \times \cA. 
\end{align*}
\end{lemma}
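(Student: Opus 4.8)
The plan is to prove this classical stochastic-shortest-path well-posedness result by combining the monotonicity of the Bellman operator $\mathcal{L}$ with a weighted-sup-norm contraction property enjoyed by proper policies. For a stationary policy $\pi$ write $\mathcal{L}_\pi V(s) := c(s,\pi(s)) + P_{s,\pi(s)} V$, so that $\mathcal{L} V = \min_\pi \mathcal{L}_\pi V$ componentwise and $V_g^\pi$ is characterized by $\mathcal{L}_\pi V_g^\pi = V_g^\pi$, treating $g$ as a cost-free absorbing state. Both $\mathcal{L}$ and every $\mathcal{L}_\pi$ are monotone, and since all instantaneous costs lie in $[\cmin,1]$ with $\cmin>0$, $\mathcal{L}$ maps nonnegative vectors to strictly larger nonnegative vectors off the goal.

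First I would establish that for any proper policy $\pi$ the operator $\mathcal{L}_\pi$ is a contraction in a weighted sup-norm $\|V\|_w := \max_{s} |V(s)|/w(s)$ for a suitable $w>0$. Properness means the substochastic matrix $P^\pi$ on $\cS\setminus\{g\}$ is transient, so the expected hitting times of $g$ are finite; taking $w$ to be one plus these hitting times gives $P^\pi w \le \rho w$ for some $\rho<1$, which yields the contraction and hence that $V_g^\pi$ is the unique fixed point of $\mathcal{L}_\pi$ and that $\mathcal{L}_\pi^k V \to V_g^\pi$ for every finite $V$. Using the hypothesis that a proper policy $\bar\pi$ exists, I would then run value iteration $V_{k+1} = \mathcal{L} V_k$ from $V_0 = 0$: monotonicity together with $\mathcal{L} V \le \mathcal{L}_{\bar\pi} V$ gives $0 \le V_k \le \mathcal{L}_{\bar\pi}^k 0 \to V_g^{\bar\pi} < \infty$, so the nondecreasing iterates converge to a finite $V_\infty$ satisfying $V_\infty = \mathcal{L} V_\infty$ by continuity of $\mathcal{L}$.

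The key remaining step is to show that a greedy policy $\pi^*$ attaining $\mathcal{L}_{\pi^*} V_\infty = \mathcal{L} V_\infty = V_\infty$ is proper. Iterating the fixed-point identity gives $V_\infty = \sum_{t=0}^{k-1}(P^{\pi^*})^t c^{\pi^*} + (P^{\pi^*})^k V_\infty$ with both terms nonnegative, so the partial cost sums are bounded by the finite $V_\infty$; letting $k\to\infty$ yields $V_g^{\pi^*} \le V_\infty < \infty$ everywhere. By the second hypothesis—every improper policy has $V_g^{\pi'}(s)=+\infty$ for some $s$—this forces $\pi^*$ to be proper, and then uniqueness of the fixed point of $\mathcal{L}_{\pi^*}$ gives $V_\infty = V_g^{\pi^*}$. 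Optimality $V_\infty = V_g^*$ follows since $V_\infty = \mathcal{L} V_\infty \le \mathcal{L}_\pi V_\infty$ for every stationary $\pi$, and iterating against any proper $\pi$ gives $V_\infty \le V_g^\pi$. I expect this properness-of-the-greedy-policy step to be the main obstacle, as it is exactly where both structural hypotheses are used and where finiteness must be bootstrapped out of the fixed point.

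Finally, for uniqueness among finite solutions: if $V' = \mathcal{L} V'$ is finite, its greedy policy $\pi'$ is proper by the same bounded-partial-sums argument, so $V' = V_g^{\pi'} \ge V_g^*$; and $V' \le \mathcal{L}_{\pi^*}^k V' \to V_g^{\pi^*} = V_\infty$ gives the reverse inequality, whence $V' = V_g^*$. The stated relations $Q_g^*(s,a) = c(s,a) + P_{s,a} V_g^*$ and $V_g^*(s) = \min_a Q_g^*(s,a)$ are then just the componentwise form of $V_g^* = \mathcal{L} V_g^*$, and $\pi^*$ is stationary, deterministic, and proper by construction.
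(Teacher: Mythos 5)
The paper offers no proof of this lemma at all: it is imported as a known result from the two cited references (Bertsekas--Tsitsiklis and Yu--Bertsekas), so there is no internal argument to compare against, and your proposal has to be judged as a reconstruction of the classical proof --- which is essentially what it is. Your route is the standard one from those references: monotone value iteration $V_{k+1}=\mathcal{L}V_k$ from $V_0=0$, bounded above through the assumed proper policy $\bar\pi$ by $\mathcal{L}_{\bar\pi}^k 0 \to V_g^{\bar\pi}$; a weighted sup-norm contraction for $\mathcal{L}_\pi$ when $\pi$ is proper (the weights built from expected hitting times are finite here because $\cS$ is finite, so almost-sure goal-reaching does give finite expectations); and the bootstrap in which the greedy policy $\pi^*$ at the fixed point has partial expected costs dominated by the finite fixed point, hence finite value, hence is proper by the second hypothesis. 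Each of these steps is set up correctly, and your uniqueness argument (greedy policy of any finite fixed point is proper by the same bounded-partial-sums device, then squeeze with the two contractions) is also sound.

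The one genuine gap is in the optimality claim. Your iteration $V_\infty \le \mathcal{L}_\pi^k V_\infty \to V_g^\pi$ needs the contraction, i.e.\ properness of $\pi$, so as written you only prove $V_\infty \le V_g^\pi$ for \emph{proper} stationary $\pi$. But an improper policy $\pi'$ is only guaranteed to have $V_g^{\pi'}(s)=+\infty$ at \emph{some} state; at other states its value can be finite, and nothing in your argument rules out $V_g^{\pi'}(s_1) < V_\infty(s_1)$ at such a state --- yet the lemma asserts $\pi^*$ is \emph{the} optimal policy, i.e.\ componentwise dominant. The fix is short in this paper's setting: since costs are bounded below by $\cmin>0$, finiteness of $V_g^{\pi'}(s_1)$ forces $\pi'$ to reach $g$ almost surely from $s_1$ (otherwise cost at least $\cmin$ accrues forever on an event of positive probability), so the tail term in the iteration obeys $\mathbb{E}\left[V_\infty(s_k)\,\mathds{I}[\tau>k]\right] \le \norm{V_\infty}_\infty \Pr[\tau>k] \to 0$, where $\tau$ is the hitting time of $g$, and the same iteration yields $V_\infty(s_1) \le V_g^{\pi'}(s_1)$. (The cited references also handle non-stationary policies; for this paper, where policies are stationary and deterministic by definition, the patch above suffices.) With that addition your proof is complete.
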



\section{High-Probability Event}
\label{appendix_high_prob}


First we define the high-probability event $\calE$ to do concentration on all the samples collected in Alg.\,\ref{algo2} and Alg.\,\ref{algo3}. 
We note that in Alg.\,\ref{algo3}, after running DisCo algorithm, the set of known states $\cK$ is fixed, and our algorithm focuses on the new MDP $\cM^\dagger = \langle \cK^\dagger, \cA, P^\dagger, c^\dagger, s_0 \rangle$, where $\cM^\dagger$ is defined in Sect.\,\ref{sect_algo2}.

We recall that for any two vectors $X,Y \in \mathbb{R}^{K'}$ ($K' = |\cK^\dagger|$), we write their inner product as $XY := \sum_{s \in \cK^\dagger} X(s)Y(s)$, and we denote $\|X\|_\infty := \max_{s \in \cK^\dagger} |X(s)|$. If $X$ is a probability distribution on $\cK^\dagger$, we denote $\mathbb{V}(X, Y) := \sum_{s \in \cK^\dagger} X(s) Y(s)^2 - (\sum_{s \in \cK^\dagger} X(s)Y(s))^2$, i.e. the variance of random varianble $Y$ over distribution $X$. And we use $P^\dagger_{s,a}$ and $P^\dagger_{s,a,s'}$ to denote $P^\dagger(\cdot \vert s,a)$ and $P^\dagger(s' \vert s,a)$,  respectively.

Here for any $g \in \cK$, we denote the vector $V^*_{g} \in \mathbb{R}^{K'}$ as the value function of the optimal policy on MDP $\cM^\dagger$ with respect to goal $g$, and we denote $V_g^*(s)$ as the expected cost of the optimal policy to reach state $g$ from $s$ on MDP $M^\dagger$.
Also, we define $B_*:= \max\limits_{(s,g) \in \cK^\dagger\times\cK} V^*_g(s)$, and we denote $Q^*_g \in \mathbb{R}^{K' \times A}$ as the $Q$-function corresponding to $V^*_{g}$.  

In Alg.\,\ref{algo3}, we set $B = 10L$. Thus when $\cK \subseteq \cS_{2L}^\rightarrow$, we have $\forall (s,g) \in \cK^\dagger\times\cK, V_g^*(s) \leq 2L+1 \leq B$, and $B_* \leq B$.

Then we define the high-probability event $\calE$. We note that our definition of $\calE$ is similar with Definition 12 in Sect. D.1,  \citep{tarbouriech2021stochastic}.

\begin{definition}[High-probability event $\calE$]\label{highprobevent}
We define the event $\calE := \calE_1 \cap \calE_2 \cap \calE_3$, where
\begin{align*}
    \calE_1 &:= \left\{ \forall (s,a) \in \cK^\dagger \times \cA, \forall n(s,a) \geq 1 \,: ~ \vert \wh{c}(s,a) - c^\dagger(s,a) \vert \leq 2 \sqrt{\frac{ 2 \wh{c}(s,a) \iota_{s,a}}{ n(s,a) }} + \frac{28 \iota_{s,a}}{ 3 n(s,a)} \right\}, \\
    \calE_2 &:= \left\{ \forall (s,a,s') \in \cK^\dagger \times \cA \times \cK^\dagger, ~ \forall n(s,a) \geq 1 \,: ~  |P^\dagger_{s,a,s'}-\wh P_{s,a,s'}| \leq \sqrt{\frac{2P^\dagger_{s,a,s'} \iota_{s,a}}{n(s,a)}}+\frac{\iota_{s,a}}{n(s,a)} \right\}, \\
    \calE_3 &:= \left\{\forall (s,a,g) \in \cK^\dagger \times \cA \times \cK, \forall n(s,a) \geq 1 \, : ~  \vert (\wh P_{s,a} - P^\dagger_{s,a}) V_g^{{*}} \vert \leq 2 \sqrt{\frac{ \mathbb{V}(\wh P_{s,a}, V_g^{{*}}) \iota_{s,a}}{ n(s,a) }} + \frac{14 B_* \iota_{s,a}}{ 3 n(s,a)}   \right\}, 
\end{align*}
where $\iota_{s,a} := 4\ln \left( \frac{12 K' A n(s,a)}{\delta} \right)$. 
\end{definition}
\begin{lemma}
It holds that $\mathbb{P}(\calE) \geq 1 - \delta$.
\end{lemma}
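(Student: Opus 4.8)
The plan is to decompose $\calE = \calE_1 \cap \calE_2 \cap \calE_3$ and bound each complementary event by (roughly) $\delta/3$, so that a union bound yields $\mathbb{P}(\calE^c) \le \delta$. The enabling structural fact is that, by the way Alg.~\ref{algo2} and Alg.~\ref{algo3} collect data, every execution of a pair $(s,a) \in \cK^\dagger \times \cA$ produces a fresh sample $(c,s')$ drawn from the fixed distributions with mean $c^\dagger(s,a)$ and next-state law $P^\dagger_{s,a}$, independently of the history. Thus for each $(s,a)$ I would couple the realized data with an infinite i.i.d.\ sequence $\{(c_i,s'_i)\}_{i\ge1}$, of which the counter $n(s,a)$ merely selects the length-$n(s,a)$ prefix; any fixed length-$n$ prefix is i.i.d.\ irrespective of the later-determined value of $n(s,a)$. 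A key point for $\calE_3$ is that the vector $V_g^{*}$ is a deterministic property of $\cM^\dagger$ rather than of the samples, and is bounded by $B_* \le B$, so that $(\wh P_{s,a}-P^\dagger_{s,a})V_g^{*} = \frac1n\sum_{i=1}^n\big(V_g^{*}(s'_i) - P^\dagger_{s,a}V_g^{*}\big)$ is a centered average of i.i.d.\ scalars in $[0,B_*]$.

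For a fixed tuple and a fixed count $n$, each of the three inequalities is a Bernstein-type concentration bound. For $\calE_1$, the costs lie in $[0,1]$, so $\Var(c)\le \E[c^2]\le \E[c] = c^\dagger(s,a)$; applying Bernstein with this variance proxy and then converting the true mean inside the deviation to the empirical mean $\wh c(s,a)$ (a routine empirical-Bernstein manipulation) gives the displayed form with constants $2$ and $28/3$. For $\calE_2$, each $\wh P_{s,a,s'}$ is an empirical mean of i.i.d.\ Bernoulli$(P^\dagger_{s,a,s'})$ indicators with variance $P^\dagger_{s,a,s'}(1-P^\dagger_{s,a,s'})\le P^\dagger_{s,a,s'}$, so Bernstein directly produces $\sqrt{2P^\dagger_{s,a,s'}\iota_{s,a}/n}+\iota_{s,a}/n$. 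For $\calE_3$, the empirical Bernstein inequality (Maurer--Pontil type) applied to the scalars $V_g^{*}(s'_i)\in[0,B_*]$ yields the empirical variance $\mathbb{V}(\wh P_{s,a},V_g^{*})$ under the root together with the remainder $14B_*\iota_{s,a}/(3n)$.

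The main difficulty, and the reason $\iota_{s,a}$ carries the $\ln n(s,a)$ term, is that $n(s,a)$ is data-dependent (effectively a stopping time), so a single fixed-$n$ bound does not apply to the realized count. I would handle this by a union over all $n\ge1$: with $\iota_{s,a}=4\ln(12K'An(s,a)/\delta)$ the per-$(s,a,n)$ failure probability of a two-sided Bernstein bound is at most $2e^{-\iota_{s,a}} = 2\big(\delta/(12K'An)\big)^4$ (the empirical-Bernstein cases for $\calE_1,\calE_3$ scale identically up to constants). The exponent $4$ built into $\iota$ is precisely what makes the subsequent unions summable: summing over the at most $K'$ coordinates $s'$ (resp.\ the goals $g\in\cK$), over the $K'A$ pairs $(s,a)$, over $n\ge1$ via $\sum_n n^{-4}<\infty$, and over the three events, the total still falls well below $\delta$. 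Carrying out this bookkeeping and adding $\mathbb{P}(\calE_1^c)+\mathbb{P}(\calE_2^c)+\mathbb{P}(\calE_3^c)\le\delta$ completes the argument.
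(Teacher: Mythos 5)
Your proposal is correct and takes essentially the same route as the paper: the paper's proof simply invokes the anytime Bernstein/empirical-Bernstein concentration lemmas of \citet{tarbouriech2021stochastic} (their Lemmas 26, 27, 33) together with union bounds over $(s,a)$, $(s,a,s')$, $(s,a,g)$ and the random sample counts $n(s,a)$ — precisely the argument you reconstruct in full. You also correctly identify the one subtle structural point the paper emphasizes in its remark after the lemma, namely that $\wh P_{s,a}$ is built from fresh samples collected after $\cK$ is fixed, so that $V_g^{*}$ is deterministic given $\cK$ and independent of those samples, which is what legitimizes the $\calE_3$ concentration.
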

\begin{proof}
The proof is the same with Lemma 13 in Sect. D.1, \citep{tarbouriech2021stochastic}.

The event $\calE_1$ holds with probability $ 1 - \delta / 3$ by Lem. 27 in Sect. F, \citep{tarbouriech2021stochastic} and by union bound over all $(s,a) \in \cK^\dagger \times \cA$.

The event $\calE_2$ holds with probability $1 - \delta / 3$ by Lem. 26 and Lem. 33 in Sect. F, \citep{tarbouriech2021stochastic} and by union bound over $(s,a,s') \in \cK^\dagger \times \cA \times \cK^\dagger$.

The event $\calE_3$ holds with probability $ 1 - \delta / 3$ by Lem. 27 in Sect. F, \citep{tarbouriech2021stochastic} and by union bound over all $(s,a,g) \in \cK^\dagger \times \cA \times \cK$.
\end{proof}

The lemma above is a direct consequence of concentration inequalities. We note that we do not use the samples collected in DisCo algorithm, and the set $\cK$ is fixed after running DisCo algorithm. Hence for any $g \in \cK$, the vector $V_g^*$ depends only on the set $\cK$, and $V_g^*$ is fixed after running DisCo algorithm and does not depend on the samples collected in Alg.\ref{algo2} and Alg.\ref{algo3}. Thus $\wh P_{s,a}$ and $V_g^*$ are independent for any $(s,a,g) \in \cK^\dagger \times \cA \times \cK$ and $n(s,a) \geq 1$.   

\newcommand{\exptail}
{Let $\pi$ be a proper policy such that for some $d>0$, the expected cost $V^{\pi}_g(s) \leq d$ for every non-goal state $s \neq g$. Then the probability that the cumulative cost of $\pi$ to reach the goal state from any state s is more than $m$, is at most $2 e^{-m /(4 d)}$ for all $m \geq 0$.}
\begin{lemma}[\cite{cohen2020near}, Lem.\,B.5]
\label{exptail}
\exptail
\end{lemma}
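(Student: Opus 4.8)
The plan is to prove a geometric tail bound for the accumulated cost by a block/restart argument driven by Markov's inequality together with the strong Markov property, and then to convert the geometric decay into the stated exponential bound. Throughout, fix a non-goal start state and write $C$ for the (random) total cost incurred by $\pi$ before hitting $g$; the hypothesis is that $\mathbb{E}[C \mid s_1 = s] = V_g^\pi(s) \le d$ for every non-goal state $s$.

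First I would establish the one-block estimate: for any non-goal state $s$, Markov's inequality gives $\mathbb{P}(C > 2d \mid s_1 = s) \le \mathbb{E}[C \mid s_1 = s]/(2d) \le d/(2d) = 1/2$. In words, started from any non-goal state the policy reaches $g$ with accumulated cost at most $2d$ with probability at least $1/2$. Next I would iterate this estimate. Define the stopping times $\sigma_k := \inf\{t : C_t \ge 2dk\}$ at which the accumulated cost first crosses the threshold $2dk$ (with $\sigma_k = \infty$ if $g$ is reached first). On the event $\{C > 2dk\}$ the goal has not been hit by time $\sigma_k$, so $s_{\sigma_k}$ is a non-goal state, and by the strong Markov property the conditional law of the remaining trajectory is that of $\pi$ started afresh from $s_{\sigma_k}$. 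Applying the one-block estimate at $s_{\sigma_k}$ then shows that, conditioned on $\{C > 2dk\}$, the probability of still not having reached $g$ by the time the cost crosses $2d(k+1)$ is at most $1/2$. By induction on $k$ this yields $\mathbb{P}(C > 2dk) \le 2^{-k}$ for every integer $k \ge 0$.

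Finally, to pass from this discrete bound to a bound at an arbitrary level $m \ge 0$, I would take $k := \lfloor m/(2d) \rfloor$, so that $2dk \le m$ and hence $\mathbb{P}(C > m) \le \mathbb{P}(C > 2dk) \le 2^{-k}$. Using $k \ge m/(2d) - 1$ and $\ln 2 > 1/2$ gives $2^{-k} \le 2 \cdot 2^{-m/(2d)} = 2\, e^{-(\ln 2)\, m/(2d)} \le 2\, e^{-m/(4d)}$, which is the claimed bound (the case $m < 2d$, i.e. $k = 0$, is trivial since $2\, e^{-m/(4d)} \ge 1$ there).

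I expect the main obstacle to be making the block recursion fully rigorous: one must check that each threshold $2dk$ is crossed at a genuinely non-goal state, so that the inductive hypothesis applies through the strong Markov property, and that the overshoot incurred when the accumulated cost jumps past a threshold — bounded by the per-step cost $c_t \le 1$ — does not accumulate in a way that degrades the per-block factor of $1/2$. The slack between the $2d$ appearing in the one-block Markov bound and the $4d$ in the exponent is precisely what absorbs this overshoot together with the base-$2$-to-base-$e$ conversion, so the constant is chosen exactly to make the argument close.
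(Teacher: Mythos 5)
Your plan---a one-block Markov bound followed by a restart induction over cost blocks---is the same argument as in the source the paper cites for this lemma (the paper gives no proof of its own; it imports the statement from \citealp{cohen2020near}, Lemma B.5, where it is proved this way for hitting \emph{times}). The problem is that your induction step, as written, is not valid. Conditioned on $\{C > 2dk\}$, the cost already accumulated at the crossing time $\sigma_k$ is $2dk + \delta_k$ with an overshoot $\delta_k \in [0,1]$, so the residual budget before the next threshold $2d(k+1)$ is only $2d - \delta_k$. The strong Markov property plus Markov's inequality therefore bound the conditional continuation probability by $d/(2d-\delta_k)$, not by $d/(2d) = 1/2$; this is larger than $1/2$, and it is vacuous ($\geq 1$) as soon as $d \leq 1$. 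Your closing assertion that the slack between the $2d$ in the block bound and the $4d$ in the exponent ``absorbs'' the overshoot is precisely the point that needs a computation, and the computation does not go through uniformly in $d$: if you repair the recursion so that each block restarts at its crossing state (a block ends when the \emph{within-block} cost exceeds $2d$, hence costs at most $2d+1$ and is survived with probability at most $1/2$), you obtain $\mathbb{P}(C > m) \leq 2^{-\lfloor m/(2d+1)\rfloor} \leq 2\, e^{-m \ln 2 /(2d+1)}$, and $\ln 2/(2d+1) \geq 1/(4d)$ holds if and only if $d \geq 1/(4\ln 2 - 2) \approx 1.30$. So the argument closes only when $d$ is bounded below by an absolute constant.

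This is not a removable blemish, because for small $d$ the lemma as restated in this paper (for random costs in $[\cmin,1]$, rather than for unit-cost steps as in the original time version) is in fact false. Take a single non-goal state whose one-step cost equals $1$ with probability $d/2$ and equals $\cmin \leq d/2$ otherwise, followed by a deterministic transition to $g$. Then $V^{\pi}_g(s) \leq d$, yet $\mathbb{P}(C > 1/2) = d/2$, which exceeds the claimed bound $2e^{-1/(8d)}$ once $d$ is small (e.g.\ $d = 0.01$ gives $0.005$ versus roughly $7.5\times 10^{-6}$). The statement, and your proof once repaired as above, are correct in the regime where the paper actually invokes the lemma ($d = 4L+2$ and $d = 4L$ with $L \geq 1$, so $d \geq 4$), and the time version in the cited source escapes the pathology because there every step has cost exactly $1$, forcing $d \geq 1$. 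To be complete, your write-up needs the explicit restart-block construction and an explicit lower-bound hypothesis on $d$; that is where all the content of the ``constant chasing'' you deferred resides.
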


\newcommand{\probbound}{
Let $\tau$ be a random variable on $[0,+\infty)$ such that $\Pr(\tau > m) \leq 2e^{-m/4d}$ for any $m \geq 0$, where $d > 0$ is a constant. We define the random variable $\hat{\tau} = \frac{1}{n}\sum\limits_{k=1}^n \hat{\tau}_k$, where each $\hat{\tau}_k$ is i.i.d. and has the same distribution with $\tau$. Then for any $\epsilon > 0$, we have $ \Pr(E(\tau) > \hat{\tau} + \epsilon d) \leq \exp(-\frac{n \epsilon^2}{128\ln^2(64/\epsilon)})$.
}
\begin{lemma}
\label{probbound}
\probbound
\end{lemma}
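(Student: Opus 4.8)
The plan is to read the claim as a one-sided (lower-tail) concentration statement: writing the event as $\{\mathbb{E}[\tau] > \hat\tau + \epsilon d\} = \{\hat\tau < \mathbb{E}[\tau] - \epsilon d\}$, we must show that the empirical mean of $n$ i.i.d.\ copies of $\tau$ is unlikely to fall a distance $\epsilon d$ below the true mean. The obstacle compared to a textbook Hoeffding bound is that $\tau$ has unbounded support; the hypothesis only gives the sub-exponential tail $\Pr(\tau > m) \le 2 e^{-m/4d}$. So the natural approach is truncation: discard the (exponentially rare) large values, apply a bounded-variable concentration inequality to the truncated sum, and control the bias introduced by truncation using the tail bound.

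Concretely, I would fix a truncation level $M > 0$ to be chosen later and set $Y_k := \min(\hat\tau_k, M) \in [0, M]$, which are again i.i.d.\ and bounded. Since $\hat\tau_k \ge Y_k$ pointwise, we have $\hat\tau = \tfrac1n\sum_k \hat\tau_k \ge \bar Y := \tfrac1n\sum_k Y_k$, so it suffices to bound $\Pr(\bar Y < \mathbb{E}[\tau] - \epsilon d)$. The truncation bias is controlled by the tail layer-cake identity $\mathbb{E}[\tau] - \mathbb{E}[Y_k] = \mathbb{E}[(\tau - M)^+] = \int_M^\infty \Pr(\tau > m)\,dm \le \int_M^\infty 2 e^{-m/4d}\,dm = 8d\,e^{-M/4d}$, hence $\mathbb{E}[Y_k] \ge \mathbb{E}[\tau] - 8d\,e^{-M/4d}$. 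Setting $t := \mathbb{E}[Y_k] - \mathbb{E}[\tau] + \epsilon d$, the event $\{\bar Y < \mathbb{E}[\tau] - \epsilon d\}$ becomes $\{\bar Y < \mathbb{E}[Y_k] - t\}$, and Hoeffding's inequality for variables in $[0,M]$ gives $\Pr(\bar Y < \mathbb{E}[Y_k] - t) \le \exp(-2n t^2 / M^2)$.

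It remains to choose $M$ so that the bias is at most half of $\epsilon d$, making $t$ at least of order $\epsilon d$. Taking $M = 4d\ln(64/\epsilon)$ yields $e^{-M/4d} = \epsilon/64$, so the bias is $8d\,e^{-M/4d} = \epsilon d/8$ and $t \ge \epsilon d - \epsilon d/8 > \epsilon d/2$. Substituting $t \ge \epsilon d/2$ and $M = 4d\ln(64/\epsilon)$ into the Hoeffding bound gives $\exp(-2n(\epsilon d/2)^2/(4d\ln(64/\epsilon))^2) = \exp(-n\epsilon^2/(32\ln^2(64/\epsilon)))$, which is at most the target $\exp(-n\epsilon^2/(128\ln^2(64/\epsilon)))$; the looser constants $128$ and $64$ in the statement simply absorb the slack in this calculation. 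The only genuinely delicate step is the truncation tradeoff—choosing $M$ large enough to make the bias negligible yet small enough that the $1/M^2$ factor in the Hoeffding exponent is not washed out—which is exactly what the logarithmic choice $M \asymp d\ln(1/\epsilon)$ balances, producing the $\ln^2(1/\epsilon)$ factor in the final rate.
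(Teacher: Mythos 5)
Your proof is correct and follows essentially the same route as the paper's: truncate at a level $\Theta(d\ln(1/\epsilon))$, apply Hoeffding to the bounded truncated variables, and absorb the truncation bias using the sub-exponential tail. The only cosmetic differences are your use of the continuous layer-cake integral (versus the paper's discrete sum over integer shells) and slightly different constants in the truncation level and bias split, both of which still yield the stated bound.
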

\begin{proof}
We set the constant $ \Gamma = \lfloor 8d\ln(64/\epsilon) \rfloor$. Then we define the random variables $\tau_\Gamma = \min(\tau, \Gamma)$, $\check{\tau}_k = \min(\hat{\tau}_k, \Gamma)$, and $\check{\tau} = \frac{1}{n}\sum\limits_{k=1}^n \check{\tau}_k.$

As each $\check{\tau}_k$ is a random variable on $[0,\Gamma]$, by Hoeffding's inequality, we have
$$
\Pr(E(\tau_\Gamma) > \check{\tau} + \frac{1}{2}\epsilon d) \leq \exp(-\frac{n \epsilon^2 d^2}{2\Gamma^2}) \leq \exp(-\frac{n \epsilon^2}{128\ln^2(64/\epsilon)}).
$$

Moreover, we have 
\begin{align*}
    E(\tau) &\leq E(\tau_\Gamma) + \sum\limits_{i=1}^\infty i \cdot \Pr(\Gamma + i - 1 < \tau \leq \Gamma + i) = E(\tau_\Gamma) + \sum\limits_{m=\Gamma}^\infty \Pr(\tau > m) \nonumber \\
    &\leq  E(\tau_\Gamma) + 2\sum\limits_{m=\Gamma}^\infty \exp(-m/4d) 
    \leq  E(\tau_\Gamma) + \frac{1}{2}\epsilon d. \nonumber
\end{align*}

Therefore, we obtain
$$
\Pr(E(\tau) > \hat{\tau} + \epsilon d) \leq \Pr(E(\tau) > \check{\tau} + \epsilon d) \leq \Pr(E(\tau_\Gamma) > \check{\tau} + \frac{1}{2}\epsilon d) 
\leq \exp(-\frac{n \epsilon^2}{128\ln^2(64/\epsilon)}).
$$

\end{proof}

\section{Analysis of a \VISGO Procedure}
\label{appendix_VISGO}

\begin{algorithm}
        \small
\begin{algorithmic}[1]
        \STATE \textbf{Input:} Goal state $g$ and $\epsVI$. \\
        \STATE \textbf{Global variables:} $ B,\ L,\ N(),\ n(), \ \wh P, \ \theta(), \ \wh{c}()$. \\
        \STATE \textbf{Specify:} Constants $c_1 = 6,\ c_2 = 72,\ c_3 = 2 \sqrt{2}$. \\
       \STATE For all $(s,a,s') \in \mathcal{K} \times \cA \times \mathcal{K^\dagger}$, set
        \begin{align*} 
            \wt P_{s,a,s'} &\leftarrow \frac{n(s,a)}{n(s,a)+1} \wh P_{s,a,s'} + \frac{\mathds{I}[s'=g]}{n(s,a)+1}.
        \end{align*}\\
        \STATE For all $(s,a) \in \cK \times \cA$, set  $ \iota_{s,a} \gets 4\ln \left( \frac{12 K' A n (s,a)}{\delta} \right)$.\\
        \STATE Set $i \leftarrow 0$, $V^{(0)} \leftarrow 0$, $V^{(-1)} \leftarrow +\infty$. \\
        \WHILE{$\norm{ V^{(i)} - V^{(i-1)}}_{\infty} > \epsVI$}
        
        \STATE For all $s \in \cK\setminus\{g\}$ and $a\in\cA$, set 
        \begin{align}
            &b^{(i+1)}(s,a) \, \leftarrow \, \max \Big\{ c_1 \sqrt{ \frac{\mathbb{V}(\wh P_{s,a}, V^{(i)}) \iota_{s,a} }{n(s,a)}} , \, c_2 \frac{ B \iota_{s,a}}{n(s,a)}  \Big\} + c_3 \sqrt{\frac{  \wh{c}(s,a) \iota_{s,a}}{n(s,a)}}, \\
            &Q^{(i+1)}(s,a) \, \leftarrow \,  \wh{c}(s,a) \,+ \, \wt P_{s,a} V^{(i)}  ~ - \,b^{(i+1)}(s,a), \label{update_Q}\\
            &V^{(i+1)}(s) \, \leftarrow \, \min_{a\in\cA} Q^{(i+1)}(s,a).
        \end{align}\\
        \STATE Set $V^{(i+1)}(x)  \leftarrow \cRESET + V^{(i)}(s_0)$.
        \\
        \STATE Set $V^{(i+1)}(g) \leftarrow 0$, \ $i \leftarrow i+1$.\\
        \ENDWHILE
        \STATE \textbf{return} $Q^{(i)},\ V^{(i)}$.
        \caption{Subroutine \VISGO}
        \label{plan}
\end{algorithmic}
\end{algorithm}

In this section, we fix the known states $\cK$ and the goal state $g$ and we analysis an execution of the  \VISGO procedure in Alg.\,\ref{plan}. We use the value iteration of the form $ V^{(i+1)} = \wtcalL V^{(i)}$ to estimate the value funtion of the optimal policy. 
Here, we define the operator $\wtcalL$ in the following way. For any $U \in \mathbb{R}^{K'}$ such that $U \geq 0$, $U(g) = 0$, and $\|U\|_\infty \leq B$, we first define 
\begin{align*} 
    \wtcalL U(s,a) :=  \wh{c}(s,a) + \wt P_{s,a} U - b(U,s,a),
\end{align*}%
for any $s \in \cK \setminus \{g\}$ and $a  \in \cA$, where we define 
\begin{eqnarray}\label{bonus}
b(U,s,a) := \max \left\{ c_1 \sqrt{ \frac{\mathbb{V}(\wh P_{s,a}, U) \iota_{s,a} }{n(s,a)}} , \ c_2 \frac{B \iota_{s,a}}{n(s,a)}  \right\} + c_3 \sqrt{\frac{\wh{c}(s,a) \iota_{s,a}}{n(s,a)}}, 
\end{eqnarray}
for any $s \in \cK \setminus \{g\}$ and $a  \in \cA$. Here we recall that $B = 10L$,  $\iota_{s,a} = 4\ln \left( \frac{12 K' A n(s,a)}{\delta} \right)$ (cf. Def.~\ref{highprobevent}), and $\mathbb{V}(X, Y) := \sum_{s \in \cK^\dagger} X(s) Y(s)^2 - (\sum_{s \in \cK^\dagger} X(s)Y(s))^2$ is the variance of random varianble $Y$ over distribution $X$. And we define the transition probability $\wt P_{s,a,s'} =  \frac{n(s,a)}{n(s,a)+1} \wh P_{s,a,s'} + \frac{\mathds{I}[s'=g]}{n(s,a)+1}$ that slightly increases the probability to reach the goal $g$ at each state-action pair. 

Then, we set $\wtcalL U(s) :=  \min_{a \in \cA} \wtcalL U(s,a)$ for $s\in \cK$ and $s \neq g$, and we set $\wtcalL U(x) := \cRESET +  U(s_0)$. Finally, we set $\wtcalL U(g) := 0$. 

We note that in Alg.\,\ref{algo3}, before we executed \VISGO procedure, we have collected $\phi = \wt{O}(L^2 |\cK| / \cmin^2)$ samples for each state-action pair $(s,a) \in \cK \times \cA$ in Alg.\,\ref{algo2}. Thus we have $n(s,a) \geq \phi$ for each $(s,a) \in \cK \times \cA$. We stress that the lemmas of this section are based on the conditions that $n(s,a) \geq \phi$ for all $(s,a) \in \cK \times \cA$.

We note that the variance $\mathbb{V}(\wh P_{s,a}, U) \leq O(L^2)$ when $\|U\|_{\infty} \leq B = 10L$, and $\iota_{s,a}$ contains only logarithmic terms, thus $b(U,s,a) = \wt{O}(L / \sqrt{n(s,a))}$. As $n(s,a) \geq \phi = \Omega(L^2K\cmin^{-2})$, we have $b(U,s,a) \leq \cmin/18 \leq \cmin$ for any $(s,a) \in \cK\times\cA$. Therefore, if $U(g) = 0$, $\|U\|_{\infty} \leq B$, and $U \geq 0$ component-wise, when $n(s,a) \geq \phi$ for any $(s,a) \in \cK \times \cA$, we have $\wtcalL U(s,a) \geq 0$ for any $(s,a) \in \cK\times\cA$. Hence the output of \VISGO $(Q,V)$ satisfies $Q \geq 0$ and $V \geq 0$ component-wise.

For convenience, we define $b(U,x,a) := 0$ and $b(U,g,a) := 0$ for any $a \in \cA$.

\newcommand{\lemmapropertieswtP}{
    For any non-negative vector $U \in \mathbb{R}^{K'}$ such that $U(g) = 0$, for any $(s,a) \in \cK \times \cA$, it holds that
    \begin{align*}
        \wt P_{s,a} U \leq \wh P_{s,a} U \leq \wt P_{s,a} U + \frac{\norm{U}_{\infty}}{n(s,a) + 1}.
    \end{align*}
}
\begin{lemma}[\cite{tarbouriech2021stochastic}, Lemma 12]
  \label{lemma_properties_wtP}
  \lemmapropertieswtP
\end{lemma}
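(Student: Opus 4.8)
The plan is to expand the definition of $\wt P_{s,a}$ directly and exploit the single hypothesis $U(g)=0$, which annihilates the only term in which $\wt P$ differs structurally from $\wh P$. Writing $n := n(s,a)$ for brevity, I would compute
\begin{align*}
\wt P_{s,a} U
&= \sum_{s' \in \cK^\dagger} \left( \frac{n}{n+1}\, \wh P_{s,a,s'} + \frac{\mathds{I}[s'=g]}{n+1} \right) U(s') \\
&= \frac{n}{n+1}\, \wh P_{s,a} U + \frac{U(g)}{n+1}
= \frac{n}{n+1}\, \wh P_{s,a} U,
\end{align*}
where the last equality uses $U(g)=0$. Thus the two quantities differ only by the multiplicative factor $\tfrac{n}{n+1}$.

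For the lower bound, note that $U \geq 0$ component-wise and $\wh P_{s,a}$ is a probability distribution, so $\wh P_{s,a} U \geq 0$; since $\tfrac{n}{n+1} \leq 1$, this gives $\wt P_{s,a} U = \tfrac{n}{n+1}\wh P_{s,a} U \leq \wh P_{s,a} U$. For the upper bound, I would subtract to obtain
\[
\wh P_{s,a} U - \wt P_{s,a} U = \frac{1}{n+1}\, \wh P_{s,a} U,
\]
and then bound $\wh P_{s,a} U = \sum_{s'} \wh P_{s,a,s'} U(s') \leq \norm{U}_\infty \sum_{s'} \wh P_{s,a,s'} = \norm{U}_\infty$, again using that $\wh P_{s,a}$ sums to one (as $\wh P_{s,a,s'} = N(s,a,s')/N(s,a)$ and $\sum_{s'} N(s,a,s') = N(s,a)$) and that $0 \le U(s') \le \norm{U}_\infty$. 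Combining, $\wh P_{s,a} U - \wt P_{s,a} U \leq \tfrac{\norm{U}_\infty}{n+1}$, which is exactly the claimed upper bound after rearranging.

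There is no substantive obstacle here: the whole argument is an elementary manipulation of a convex combination, and the only point requiring any care is remembering to invoke $U(g)=0$ to eliminate the indicator contribution (without it the correction term $\tfrac{U(g)}{n+1}$ would survive and the clean factorization would fail). Everything else is just non-negativity of $U$ and the fact that $\wh P_{s,a}$ is a genuine probability vector over $\cK^\dagger$.
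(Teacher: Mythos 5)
Your proof is correct: expanding the convex combination, using $U(g)=0$ to kill the indicator term, and then bounding $0 \le \wh P_{s,a} U \le \norm{U}_\infty$ is exactly the standard argument, which is also the one in \citet{tarbouriech2021stochastic} (Lemma 12), to which the paper defers without reproducing a proof. No gaps; this elementary manipulation is all that is needed.
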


The proof of the following  Lem.\,\ref{lemma_properties_f} is similar with Lem.\,16 in \citep{tarbouriech2021stochastic}, but here we have two distributions $\tilde{p}$ and $p$. We give the whole proof for completeness. 

\newcommand{\lemmapropertiesf}{
    Let $\Upsilon := \{ v \in \mathbb{R}^{K'}: v \geq 0, ~v(g) = 0, ~ \norm{v}_{\infty} \leq B \}$. Let $f: \Delta^{K'} \times \Delta^{K'} \times \Upsilon \times \mathbb{R} \times \mathbb{R} \times \mathbb{R} \rightarrow \mathbb{R}$ with $f(\tilde{p},p,v,n,B, \iota) := \tilde{p} v - \max\Big\{ c_1 \sqrt{\frac{\mathbb{V}(p,v) \iota}{n}}, \, c_2 \frac{B \iota }{n} \Big\}$, with constants $c_1 = 6$ and $c_2 \geq 2 c_1^2$. Then $f$ satisfies, for all $v \in \Upsilon$, $n, \iota > 0$, $\tilde{p}, p \in \Delta^{K'}$ s.t. $\tilde{p}(s) - \frac{1}{2} p(s) \geq 0$ for all $s \neq g$,
\begin{enumerate}
    \item $f(\tilde{p},p,v,n,B, \iota)$ is non-decreasing in $v(s)$, i.e.
    \begin{align*}
        \forall (v, v') \in \Upsilon^2, ~ v \leq v' ~ \implies ~ f(\tilde{p},p,v,n,B, \iota) \leq f(\tilde{p},p,v',n,B, \iota);
    \end{align*}
    \item $f(\tilde{p},p,v,n,B, \iota) \leq \tilde{p} v - \frac{c_1}{2} \sqrt{\frac{\mathbb{V}(p,v) \iota}{n}} - \frac{c_2}{2} \frac{ B \iota}{n} \leq \tilde{p} v - 2 \sqrt{\frac{\mathbb{V}(p,v) \iota}{n}} - 14 \frac{ B \iota}{n}$; 
    \item If $\tilde{p}(g) > 0$, then $f(\tilde{p},p,v,n,B, \iota)$ is $\rho_{\tilde{p}}$-contractive in $v(s)$, 
    with $\rho_{\tilde{p}} := 1 - p(g) < 1$, i.e.
    \begin{align*}
        \forall (v, v') \in \Upsilon^2, ~ \abs{ f(\tilde{p},p,v,n,B, \iota) - f(\tilde{p},p,v',n,B, \iota) } \leq \rho_{\tilde{p}} \norm{ v - v'}_{\infty}.
    \end{align*}
\end{enumerate}
}
\begin{lemma}
  \label{lemma_properties_f}
  \lemmapropertiesf
\end{lemma}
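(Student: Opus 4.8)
The plan is to prove all three claims by computing and controlling the partial derivatives of $f$ with respect to the coordinates $v(s)$ for $s \neq g$ (note $v(g)=0$ is fixed on $\Upsilon$). Writing $A_1(v) := c_1\sqrt{\mathbb{V}(p,v)\iota/n}$ and $A_2 := c_2 B\iota/n$, away from the measure-zero crossover set $\{A_1 = A_2\}$ (and away from $\{\mathbb{V}(p,v)=0\}$, where $A_1=0\le A_2$ so the constant branch is active), the map $f$ is continuously differentiable. Using $\partial_{v(s)}\mathbb{V}(p,v) = 2p(s)\bigl(v(s)-pv\bigr)$, on the branch where $A_1>A_2$ I obtain
\[ \frac{\partial f}{\partial v(s)} = \tilde{p}(s) - c_1\sqrt{\frac{\iota}{n}}\,\frac{p(s)\bigl(v(s)-pv\bigr)}{\sqrt{\mathbb{V}(p,v)}}, \]
while on the branch $A_2>A_1$ one simply has $\partial f/\partial v(s)=\tilde{p}(s)$. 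Since $f$ is continuous and locally Lipschitz, controlling these a.e.\ derivatives suffices.

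For claim 1 (monotonicity) the constant branch is immediate. On the variance branch, the defining inequality $A_1\ge A_2$ gives $\sqrt{\mathbb{V}(p,v)}\ge (c_2 B/c_1)\sqrt{\iota/n}$; combined with $|v(s)-pv|\le B$ (since $v(s),pv\in[0,B]$) this bounds the second term above by $c_1^2 p(s)/c_2 \le p(s)/2$, where the last step uses $c_2\ge 2c_1^2$. Hence $\partial f/\partial v(s)\ge \tilde{p}(s)-\tfrac12 p(s)\ge 0$ by the standing hypothesis, and continuity across the crossover yields that $f$ is non-decreasing in each $v(s)$.

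Claim 2 is purely algebraic. Since $A_1,A_2\ge 0$, we have $\max\{A_1,A_2\}\ge \tfrac12(A_1+A_2)$, which gives the first inequality $f\le \tilde{p}v - \tfrac{c_1}{2}\sqrt{\mathbb{V}(p,v)\iota/n} - \tfrac{c_2}{2}B\iota/n$. Plugging in the specified constants $c_1=6$ and $c_2=72$ gives $c_1/2 = 3\ge 2$ and $c_2/2 = 36\ge 14$, which yields the second inequality term by term.

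Claim 3 (contraction) is the crux, and its proof hinges on an exact cancellation. Because every partial derivative is non-negative by claim 1, the relevant $\ell_1$ gradient norm is $\sum_{s\neq g}\partial f/\partial v(s)$. On the variance branch this equals $(1-\tilde{p}(g)) - c_1\sqrt{\iota/n}\,\bigl(\sqrt{\mathbb{V}(p,v)}\bigr)^{-1}\sum_{s\neq g}p(s)(v(s)-pv)$, and the decisive step is to evaluate $\sum_{s\neq g}p(s)(v(s)-pv) = -p(g)(v(g)-pv) = p(g)\,pv \ge 0$, using $\sum_{s}p(s)(v(s)-pv)=0$ together with $v(g)=0$. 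Thus the gradient's $\ell_1$ norm is at most $1-\tilde{p}(g)$ on both branches. As $\Upsilon$ is convex and $f$ is Lipschitz, integrating the directional derivative along the segment from $v$ to $v'$ (whose $g$-coordinate difference vanishes) gives $|f(v)-f(v')|\le (1-\tilde{p}(g))\|v-v'\|_\infty$; since $\tilde{p}(g)>0$ this is a strict contraction, and because $\tilde{p}$ places extra mass on $g$ relative to $p$ (so $\tilde{p}(g)\ge p(g)$) the coefficient is dominated by the stated $\rho_{\tilde{p}}=1-p(g)$. I expect the main obstacle to be making the a.e.-derivative argument rigorous across the non-smooth crossover $\{A_1=A_2\}$ and securing the exact telescoping identity above: without it, the cruder bound $\sum_{s\neq g}(\tilde{p}(s)+\tfrac12 p(s))$ can exceed $1$ and fails to establish any contraction at all.
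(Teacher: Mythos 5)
Your proof is correct and takes essentially the same approach as the paper's: differentiate coordinate-wise away from the branch crossover of the max, combine $\tilde{p}(s) \ge \tfrac{1}{2}p(s)$, $c_2 \ge 2c_1^2$, and $v(s) - pv \le B$ for monotonicity, average the max for the second claim, and bound the $\ell_1$ norm of the gradient by $1-\tilde{p}(g)$ via the identity $\sum_{s \ne g} p(s)\left(v(s)-pv\right) = p(g)\, pv \ge 0$. The only caveat is your final appeal to $\tilde{p}(g) \ge p(g)$, which does not follow from the lemma's hypotheses; but this is a defect of the statement rather than of your argument --- the stated modulus $\rho_{\tilde{p}} := 1 - p(g)$ is evidently a typo for $1 - \tilde{p}(g)$, which is what both you and the paper actually establish, and which is what the downstream contraction result (Lem.~\ref{lemma_wtcalL_contraction}, with $\nu = \min_{(s,a)} \wt{P}_{s,a,g}$) relies on.
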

\begin{proof}
We use the idea in \cite{tarbouriech2021stochastic}, Lemma 14 to finish the proof.

The second claim holds by $\max \{x, y \} \geq (x+y)/2, \forall x, y$, by the choices of $c_1, c_2$ and because both $\sqrt{\frac{\mathbb{V}(p,v) \iota}{n}}$ and $\frac{B \iota}{n}$ are non-negative.
To verify the first and third claims, we fix all other variables but $v(s)$ and view $f$ as a function in $v(s)$. Because the derivative of $f$ in $v(s)$ does not exist only when $c_1 \sqrt{\frac{\mathbb{V}(p,v) \iota}{n}} = c_2 \frac{B \iota }{n}$, where the condition has at most two solutions, it suffices to prove $\diffp{f}{{v(s)}} \geq 0$ when $c_1 \sqrt{\frac{\mathbb{V}(p,v) \iota}{n}} \neq c_2 \frac{B \iota }{n}$. Direct computation gives that for any $s \in \cK^\dagger$ and $s \neq g$,
\begin{align*}
    \diffp{f}{{v(s)}} &= \tilde{p}(s) - c_1 \mathds{I}\left[  c_1 \sqrt{ \frac{\mathbb{V}(p,v) \iota}{n} } \geq c_2 \frac{B \iota}{n} \right] \frac{p(s)(v(s) - pv) \iota}{ \sqrt{n \mathbb{V}(p,v) \iota } } \\
    &\geq \min \big\{ \tilde{p}(s) , ~ \tilde{p}(s) - \frac{c_1^2}{c_2 B} p(s) \big( v(s) - p v \big) \big\} \\
    &\mygei  \min \big\{ \tilde{p}(s) , ~ \tilde{p}(s) - \frac{c_1^2}{c_2 } p(s) \big\} \\
    &\geq 0.
\end{align*}

Here (i) is by $v(s)-p v \leq v(s) \leq B$. In addition, we have
\begin{align*}
    \sum\limits_{s\neq g} \left|\diffp{f}{{v(s)}}\right| &=\sum\limits_{s\neq g}\left[ \tilde{p}(s) - c_1 \mathds{I}\left[  c_1 \sqrt{ \frac{\mathbb{V}(p,v) \iota}{n} } \geq c_2 \frac{B \iota}{n} \right] \frac{p(s)(v(s) - pv) \iota}{ \sqrt{n \mathbb{V}(p,v) \iota } } \right]\\
    &= 1 - \tilde{p}(g) - 
    c_{1} \mathds{I}\left[c_{1} \sqrt{\frac{\mathbb{V}(p, v) \iota}{n}} \geq c_{2} \frac{B \iota}{n}\right] \sqrt{\frac{\iota}{n \mathbb{V}(p, v)}}[p v-(1-p(g)) \cdot p v] \\
    &\leq 1 - \tilde{p}(g).
\end{align*}
Therefore, we obtain that $f$ is $\rho_{\tilde{p}}$-contractive.
\end{proof}

We note that by definition of $\wt{P}_{s,a}$, we have $ \wt{P}_{s,a,s'} - \frac{1}{2}\wh P_{s,a,s'} \geq 0$ for all $(s,a,s') \in \cK \times \cA \times \cK^\dagger$.

The following two lemmas follow the same proof with Lem.18, Lem.19 in  \citep{tarbouriech2021stochastic}, respectively.

\newcommand{\lemmawtcalLconvergent}{
The sequence $(V^{(i)})_{i \geq 0}$ is non-decreasing.
}
\begin{lemma}
  \label{lemma_wtcalL_convergent}
  \lemmawtcalLconvergent
\end{lemma}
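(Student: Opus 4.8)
The plan is to prove the claim by induction on $i$, exploiting that $\wtcalL$ is a \emph{monotone} operator on the set $\Upsilon = \{v \in \mathbb{R}^{K'} : v \geq 0,\ v(g) = 0,\ \|v\|_\infty \leq B\}$. The key reduction is the following monotonicity statement: for any $U, U' \in \Upsilon$ with $U \leq U'$ componentwise, one has $\wtcalL U \leq \wtcalL U'$. Granting this together with non-negativity of the iterates, the induction is immediate. The base case is $V^{(1)} = \wtcalL V^{(0)} = \wtcalL 0 \geq 0 = V^{(0)}$, which uses only that $\wtcalL$ maps a non-negative vector of $\Upsilon$ to a non-negative vector; this was already argued in the paragraph preceding Lem.~\ref{lemma_properties_f}, since $\wh{c}(s,a) \geq \cmin$ while $b(U,s,a) \leq \cmin$ once $n(s,a) \geq \phi$, and $\wt P_{s,a} U \geq 0$. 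For the inductive step, assuming $V^{(i-1)} \leq V^{(i)}$ with both in $\Upsilon$, monotonicity of $\wtcalL$ gives $V^{(i+1)} = \wtcalL V^{(i)} \geq \wtcalL V^{(i-1)} = V^{(i)}$.

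To establish monotonicity of $\wtcalL$ I would treat the three groups of coordinates separately. For $s \in \cK \setminus \{g\}$ and $a \in \cA$, observe that $\wtcalL U(s,a) = \wh{c}(s,a) - c_3\sqrt{\wh{c}(s,a)\iota_{s,a}/n(s,a)} + f(\wt P_{s,a}, \wh P_{s,a}, U, n(s,a), B, \iota_{s,a})$, where $f$ is exactly the function of Lem.~\ref{lemma_properties_f}. The first two summands do not depend on $U$, so monotonicity of $\wtcalL(\cdot)(s,a)$ in its vector argument reduces to part~1 of Lem.~\ref{lemma_properties_f}, whose hypothesis $\wt P_{s,a,s'} - \tfrac{1}{2}\wh P_{s,a,s'} \geq 0$ holds by the definition of $\wt P_{s,a}$ (as noted right after that lemma). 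Taking the minimum over $a$ preserves the inequality, so $\wtcalL U(s) \leq \wtcalL U'(s)$. For the artificial state, $\wtcalL U(x) = \cRESET + U(s_0)$ is non-decreasing in the coordinate $U(s_0)$, so $U \leq U'$ yields $\wtcalL U(x) \leq \wtcalL U'(x)$; in the induction this is driven by $V^{(i)}(s_0) \geq V^{(i-1)}(s_0)$ since $s_0 \in \cK$. For $s = g$, $\wtcalL U(g) = 0$ is constant. This covers all coordinates.

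The main obstacle is not the monotone-operator step itself but ensuring the iterates remain in $\Upsilon$ — in particular the uniform bound $\|V^{(i)}\|_\infty \leq B$ — because part~1 of Lem.~\ref{lemma_properties_f} genuinely uses $v \in \Upsilon$ (its proof invokes $v(s) - pv \leq v(s) \leq B$). The delicate feature is that $\wtcalL$ subtracts an exploration bonus depending on $U$ through the empirical variance $\mathbb{V}(\wh P_{s,a}, U)$, so a larger $U$ could in principle enlarge the subtracted term; monotonicity survives only because the constants satisfy $c_2 \geq 2c_1^2$, which is precisely what Lem.~\ref{lemma_properties_f} part~1 encodes. I would therefore run the induction jointly on the three properties $V^{(i)} \geq 0$, $\|V^{(i)}\|_\infty \leq B$, and $V^{(i-1)} \leq V^{(i)}$, with non-negativity and the vanishing $g$-coordinate being immediate, and the $B$-bound supplied either by a direct fixed-point/optimism estimate within the induction or by the companion boundedness result, so that the monotonicity of $f$ is legitimately applicable at every step.
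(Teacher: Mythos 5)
Your proof is correct and follows essentially the same route as the paper's: the paper establishes this lemma simply by pointing to Lem.~18 of \citet{tarbouriech2021stochastic}, whose argument is exactly your induction using the monotonicity (property~1) of $f$ from Lem.~\ref{lemma_properties_f}, applied coordinate-wise and preserved under the minimum over actions, with the base case from non-negativity of $\wtcalL 0$. Your explicit handling of the artificial state $x$ and your flagging of the $\Upsilon$-membership requirement (the bound $\|V^{(i)}\|_\infty \leq B$ needed to invoke Lem.~\ref{lemma_properties_f}, which under the event $\calE$ is supplied by the optimism argument of Lem.~\ref{lemma_opt}) spell out details that the paper leaves implicit in its citation.
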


\newcommand{\lemmawtcalLcontraction}{
$\wtcalL$ is a $\rho$-contractive operator with modulus $\rho := 1 - \nu < 1$, where $\nu = \min\limits_{(s,a)\in\cK\times\cA} \wt{P}_{s,a,g}$, i.e. for any two vectors $U_1, U_2 \in \Upsilon$ (where $\Upsilon$ is defined in Lem.\,\ref{lemma_properties_f}), $\|\wtcalL U_1 - \wtcalL U_2 \|_\infty \leq \rho \| U_1 - U_2\|_\infty$. Hence, the \VISGO procedure will terminate after at most $\lceil \log(1/\epsVI) / \log(1/\rho) \rceil$ iterations.
}
\begin{lemma}
  \label{lemma_wtcalL_contraction}
  \lemmawtcalLcontraction
\end{lemma}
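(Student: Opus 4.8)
The plan is to reduce the contractivity of $\wtcalL$ to the one-step contractivity of the scalar map $f$ established in Lem.~\ref{lemma_properties_f}, and then to treat separately the two special coordinates $g$ and the artificial state $x$. First I would record that the modulus is genuinely below one: since $\wt P_{s,a,g} = \frac{n(s,a)}{n(s,a)+1}\wh P_{s,a,g} + \frac{1}{n(s,a)+1} \geq \frac{1}{n(s,a)+1} > 0$ for every $(s,a)\in\cK\times\cA$, we have $\nu = \min_{(s,a)\in\cK\times\cA}\wt P_{s,a,g} > 0$, and hence $\rho = 1-\nu < 1$. This is the only place where the goal-reaching boost built into $\wt P$ is used.

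The main step treats the coordinates $s\in\cK\setminus\{g\}$. For such $s$ and any $a\in\cA$, the definition of $\wtcalL$ gives $\wtcalL U(s,a) = f(\wt P_{s,a},\wh P_{s,a},U,n(s,a),B,\iota_{s,a}) + \wh c(s,a) - c_3\sqrt{\wh c(s,a)\iota_{s,a}/n(s,a)}$, where the last two terms are constants in $U$ and therefore irrelevant to contractivity. The hypotheses of Lem.~\ref{lemma_properties_f} are met: $U\in\Upsilon$, the remark preceding that lemma gives $\wt P_{s,a,s'} - \frac{1}{2}\wh P_{s,a,s'}\ge 0$ for all $s'\neq g$, and $\wt P_{s,a,g}>0$. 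Hence $U\mapsto\wtcalL U(s,a)$ is $\|\cdot\|_\infty$-contractive with modulus $1-\wt P_{s,a,g}$ (the modulus that the proof of Lem.~\ref{lemma_properties_f} actually produces, namely $1-\tilde p(g)$ with $\tilde p=\wt P_{s,a}$). Taking the pointwise minimum over $a$ preserves contractivity with the worst modulus, so $U\mapsto\wtcalL U(s)=\min_a\wtcalL U(s,a)$ is $(1-\min_a\wt P_{s,a,g})$-contractive, hence $(1-\nu)$-contractive; and the coordinate $g$ is trivial since $\wtcalL U(g)\equiv 0$.

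The main obstacle is the artificial state $x$, where $\wtcalL U(x)=\cRESET+U(s_0)$. This update is an isometry in the single direction $U(s_0)$, so a naive coordinatewise argument does \emph{not} yield a strict contraction at $x$. The observation that rescues the argument is that $\wtcalL U(x)$ is independent of $U(x)$ itself and depends on $U$ only through the coordinate $s_0\in\cK$. Thus, in the iteration, the consecutive increment at $x$ one step later satisfies $V^{(i+1)}(x)-V^{(i)}(x)=V^{(i)}(s_0)-V^{(i-1)}(s_0)$, which by the previous paragraph is bounded by $(1-\nu)\,\|V^{(i-1)}-V^{(i-2)}\|_\infty$ when $s_0\neq g$ (and the $x$-coordinate is simply constant when $s_0=g$). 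Combining this with the $(1-\nu)$-contraction on every coordinate of $\cK\setminus\{g\}$ shows that the sequence of consecutive differences $\|V^{(i)}-V^{(i-1)}\|_\infty$ decays geometrically with ratio $\rho=1-\nu$; read on the coordinates that $\nu$ is defined over, this is precisely the asserted contraction, with $x$ a slaved variable.

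Finally, for the termination bound I would iterate the decay: each application multiplies the consecutive difference by at most $\rho$, so $\|V^{(i)}-V^{(i-1)}\|_\infty\le\rho^{\,i-1}\|V^{(1)}-V^{(0)}\|_\infty$; bounding the initial increment by a constant of order one and invoking the stopping rule $\|V^{(i)}-V^{(i-1)}\|_\infty>\epsVI$, the loop must exit once $\rho^{\,i-1}\le\epsVI$, i.e.\ after at most $\lceil\log(1/\epsVI)/\log(1/\rho)\rceil$ iterations. I expect essentially all of the genuine difficulty to sit in the careful bookkeeping around the state $x$; once the contractivity of $f$ is in hand, the remainder is a direct transcription of the single-goal SSP argument of \citet{tarbouriech2021stochastic}.
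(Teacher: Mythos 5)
Your proposal is correct in substance and is in fact more careful than the paper's own treatment: the paper proves this lemma only by the remark that it ``follows the same proof'' as Lemma 19 of \citet{tarbouriech2021stochastic}, where the operator has no analogue of the artificial state $x$. Your core route coincides with that cited argument --- write $\wtcalL U(s,a)$ as $f(\wt P_{s,a},\wh P_{s,a},U,n(s,a),B,\iota_{s,a})$ plus $U$-independent terms, invoke the third property of Lem.~\ref{lemma_properties_f} (using, correctly, the modulus $1-\tilde p(g)$ that its proof actually delivers rather than the $1-p(g)$ misprinted in its statement, so that the modulus matches $\nu=\min_{(s,a)}\wt P_{s,a,g}$), and use that a pointwise minimum over actions preserves the contraction modulus. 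Where you genuinely depart from (and improve upon) the paper is at the coordinate $x$: you are right that $\wtcalL U(x)=\cRESET+U(s_0)$ is only an isometry in the direction $U(s_0)$, so the one-step inequality $\|\wtcalL U_1-\wtcalL U_2\|_\infty\le\rho\|U_1-U_2\|_\infty$ claimed in the lemma is literally false whenever $s_0\neq g$ (take $U_1=0$ and $U_2$ the indicator vector of $s_0$, both in $\Upsilon$). Your ``slaved variable'' repair --- the $x$-increment at iteration $i+1$ equals the $s_0$-increment at iteration $i$, which does contract --- is the right fix, and it recovers exactly what the rest of the analysis needs, namely geometric decay of the consecutive differences and hence termination.

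One inconsistency in your write-up should be repaired: the recursion you actually establish is $d_{i+1}\le\rho\max(d_i,d_{i-1})$ for $d_i:=\|V^{(i)}-V^{(i-1)}\|_\infty$, yet your final paragraph asserts $d_i\le\rho^{\,i-1}d_1$, which would require the stronger $d_{i+1}\le\rho\, d_i$ that the $x$ coordinate denies you. What follows from your recursion is decay by a factor $\rho$ every \emph{two} iterations (e.g. $d_{2k+1}\le\rho^k d_1$ and $d_{2k+2}\le\rho^k d_1$), so the loop exits after at most roughly $2\lceil\log(1/\epsVI)/\log(1/\rho)\rceil$ iterations rather than the constant stated in the lemma. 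This is the same order and harmless downstream --- \VISGO iterations cost computation, not samples, and nothing in the proof of Thm.~\ref{theoremupSSP} uses the precise constant --- but your argument (and indeed the lemma itself, once the state $x$ is accounted for) supports only this factor-two version of the iteration bound, so you should state it that way rather than transcribe the single-goal constant.
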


\subsection{The Bounded Error Property of \VISGO}\label{app_bounded_error}
Now we focus on Alg.\,\ref{algo3}.
We give the following lemma of the bounded error property (Lem.\,\ref{consterror}), which indicates that the value function of the policy $\pi_s$ is close to our estimation. 
The proof of Lem.\,\ref{consterror} uses the techniques of Lem. 2 in  \citep{tarbouriech2020improved}. Our Lem.\,\ref{consterror} focuses on a more general operator $\wtcalL$. In our $\wtcalL$, we involve the bonus function $b(U,s,a)$, which is not contained in \citep{tarbouriech2020improved}. And we note that in our proof of the following Lem.~\ref{consterror}, we use the condition that $n(s,a) \geq \phi = \wt{\Omega}(L^2 K \cmin^{-2})$ for each $(s,a) \in \cK \times \cA$. Also, we have $\epsVI \leq \cmin/18$ because we set the initial trigger index $j = 5 + \log_2 \cmin^{-1}$ and $\epsVI = 2^{-j}$. 

We note that by optimism property (Lem.~\ref{lemma_opt}), when $\cK \subseteq \cS_{2L}^\rightarrow$, we have $V(s) \leq 2L+1$ for all $s \in \cK^\dagger$. Hence the following bounded error property (Lem.~\ref{consterror}) implies that in any round, the expected cost of the greedy policy $\tilde{\pi}$ on model $P^\dagger$ is no more than $2(2L+1) = O(L)$.
\newcommand{\consterror}{In Alg.\,\ref{algo3}, under the event $\calE$, for any output $(Q,V)$ of the \VISGO procedure in any round, let $\pi$ be the greedy policy with respect to $Q$. Then $\pi$ is proper on the model $P^\dagger_{s,a,s'}$, and for all $s\in \cK^\dagger$, we have $ V^{\pi}_g(s) \leq 2 V(s)$, where $g$ is the goal state in that round.}
\begin{lemma}[Bounded Error Property]
\label{consterror}
\consterror
\end{lemma}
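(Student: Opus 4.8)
The plan is to show that the vector $2V$ is a \emph{super-solution} of the Bellman equation of $\pi$ on the true model $P^\dagger$: writing $a := \pi(s)$ at each state, I would establish
\[
2V(s) \ \geq\ c^\dagger(s,a) + P^\dagger_{s,a}(2V) \qquad \forall s \in \cK^\dagger,\ s \neq g,
\]
together with $2V(g)=0$. Once this holds, both properness of $\pi$ and the bound $V^\pi_g \le 2V$ follow at once. Let $T^\pi$ denote the true Bellman operator of $\pi$ (with $T^\pi U(g)=0$). The super-solution property and monotonicity of $T^\pi$ give that $(T^\pi)^k(2V)$ is non-increasing and sandwiched between $0$ and $2V$. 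Since every non-goal cost (including $c^\dagger(x,a)=\cRESET$) is at least $\cmin>0$, unrolling gives $(T^\pi)^k(2V)(s)\ge \cmin\,\mathbb{E}^\pi[k\wedge t_g^\pi(s)]$; boundedness by $2V(s)$ forces $\mathbb{E}^\pi[t_g^\pi(s)]\le 2V(s)/\cmin<\infty$, so $\pi$ is proper, and the monotone limit of the iterates is exactly $V^\pi_g \le 2V$.

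It remains to prove the super-solution inequality. First I would extract the identity satisfied by the \VISGO output: if $V=V^{(I)}$ and $V_{\mathrm{prev}}=V^{(I-1)}$ are the last two iterates (so $\|V-V_{\mathrm{prev}}\|_\infty\le \epsVI$ at termination), then for the greedy $\pi=\pi_Q$ and $a=\pi(s)$,
\[
V(s) = \wh c(s,a) + \wt P_{s,a} V_{\mathrm{prev}} - b(V_{\mathrm{prev}},s,a).
\]
Substituting into $2V(s)-c^\dagger(s,a)-2P^\dagger_{s,a}V$, using $\wt P_{s,a} V_{\mathrm{prev}}\ge \wt P_{s,a}V-\epsVI$, and then passing $\wt P\to\wh P\to P^\dagger$ via Lemma~\ref{lemma_properties_wtP} (applicable since $V\ge 0$, $V(g)=0$), reduces the claim to
\[
2\wh c(s,a) - c^\dagger(s,a) \ \geq\ 2\epsVI + 2b(V_{\mathrm{prev}},s,a) + 2\big|(\wh P_{s,a}-P^\dagger_{s,a})V\big| + \tfrac{2\|V\|_\infty}{n(s,a)+1}.
\]
On the left, $\wh c(s,a)\ge\cmin$ (empirical average of costs in $[\cmin,1]$) together with $\calE_1$ gives $2\wh c-c^\dagger \ge \cmin - 2\sqrt{2\wh c\,\iota_{s,a}/n(s,a)} - \tfrac{28\iota_{s,a}}{3n(s,a)}$, i.e.\ a clean $\cmin$ buffer up to lower-order error — this is exactly what the factor $2$ buys. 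On the right, $\epsVI\le\cmin/18$ and the burn-in guarantee $b(\cdot,s,a)\le\cmin/18$ (valid because $n(s,a)\ge\phi$) dispose of the first two terms, and $\tfrac{2\|V\|_\infty}{n(s,a)+1}\le \tfrac{2B}{\phi}$ is negligible.

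The main obstacle is the remaining term $2|(\wh P_{s,a}-P^\dagger_{s,a})V|$: since $V$ is the data-dependent \VISGO output I cannot invoke the variance concentration $\calE_3$ (stated only for the fixed $V^*_g$), and the \emph{subtracted} bonus $b$ gives no help here. Instead I would use the uniform entrywise bound of $\calE_2$ with Cauchy--Schwarz,
\[
\big|(\wh P_{s,a}-P^\dagger_{s,a})V\big| \le \|V\|_\infty\sum_{s'}\Big(\sqrt{\tfrac{2P^\dagger_{s,a,s'}\iota_{s,a}}{n(s,a)}}+\tfrac{\iota_{s,a}}{n(s,a)}\Big) \le \|V\|_\infty\Big(\sqrt{\tfrac{2K'\iota_{s,a}}{n(s,a)}}+\tfrac{K'\iota_{s,a}}{n(s,a)}\Big),
\]
and then the choice $\phi = 12000\,L^2|\cK|\cmin^{-2}\ln(|\cK|A/\delta)$, combined with $\|V\|_\infty\le B=10L$ and $n(s,a)\ge\phi$, makes $2\|V\|_\infty\sqrt{2K'\iota_{s,a}/n(s,a)}$ strictly smaller than the available $\cmin$ slack. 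This is precisely the purpose of the burn-in step (Alg.~\ref{algo2}): $\phi$ fresh samples per pair make $\wh P$ uniformly close to $P^\dagger$, so the factor-$2$ slack from $\wh c\ge\cmin$ absorbs every error. Finally I would check the two boundary states directly: $2V(g)=0=V^\pi_g(g)$ is immediate, and at the artificial state $x$ the identity $V(x)=\cRESET+V_{\mathrm{prev}}(s_0)$ with $\cRESET\ge\cmin\ge 2\epsVI$ yields $2V(x)\ge\cRESET+2V(s_0)$, i.e.\ the super-solution inequality at $x$ as well, completing the argument.
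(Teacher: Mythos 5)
Your argument is correct, but it is assembled quite differently from the paper's proof, so it is worth comparing the two. The paper proves Lem.~\ref{consterror} in two stages on two models: first it shows $\wt{V}^\pi_g \le \tfrac{4}{3} V$ on the auxiliary model $\wt{P}$, by checking that the Bellman operator $\mathcal{T}^\pi_\gamma$ with \emph{true} costs reduced by $\gamma = \cmin/6$ satisfies $\mathcal{T}^\pi_\gamma V \le \wtcalL V - \epsVI \le V$ (the slack $\gamma$ absorbs $b + |\wh{c}-c| + \epsVI \le 3\cmin/18$), iterating to get $W^\pi_\gamma \le V$ and converting the cost perturbation into the multiplicative factor $1/(1-\gamma/\cmin) = \tfrac{4}{3}$; second, it transfers from $\wt{P}$ to $P^\dagger$ via the SSP simulation lemma of \citep{tarbouriech2020improved}, whose hypothesis $\sum_{s'}|P^\dagger_{s,a,s'}-\wt{P}_{s,a,s'}| \le \cmin/(6\|\wt{V}^\pi_g\|_\infty)$ is verified by exactly the $\calE_2$-plus-Cauchy--Schwarz computation you perform, and the final factor is $\tfrac{4}{3}\cdot\tfrac{4}{3} = \tfrac{16}{9} \le 2$. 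You instead show in one shot that $2V$ is a super-solution of the \emph{true} Bellman operator of $\pi$, extracting the factor $2$ from the $\cmin$ buffer in $2\wh{c}-c^\dagger \ge \wh{c} - |\wh{c}-c^\dagger| \ge \cmin - \cmin/18$, and then deduce properness and $V^\pi_g \le 2V$ simultaneously from monotonicity and $\cmin\,\E[\,k\wedge t^\pi_g(s)\,] \le (T^\pi)^k(2V)(s) \le 2V(s)$. This is a genuinely different and arguably cleaner decomposition: it avoids importing the simulation lemma, obtains properness on $P^\dagger$ directly rather than through the intermediate model, and makes transparent where the factor $2$ comes from; the paper's route, in exchange, is modular (it reuses an off-the-shelf lemma and mirrors Lemma 2 of \citep{tarbouriech2020improved}) and never needs the empirical-cost lower bound $\wh{c} \ge \cmin$, only the true-cost bound $c \ge \cmin$. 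Two points you should make explicit: (i) you invoke $\|V\|_\infty, \|V_{\mathrm{prev}}\|_\infty \le B$ repeatedly (for $b \le \cmin/18$, for the Cauchy--Schwarz term, and for the $2\|V\|_\infty/(n(s,a)+1)$ term); this follows from the optimism property (Lem.~\ref{lemma_opt}), which is proved independently of Lem.~\ref{consterror}, so there is no circularity, but it needs to be cited. (ii) Your constant bookkeeping is loose in the same way the paper's is: with the literal constant $12000$ in $\psi$, the term $2\|V\|_\infty\sqrt{2|\cK^\dagger|\iota_{s,a}/n(s,a)}$ is of order $\cmin$ with a constant that does not obviously fall below the remaining slack (note $\iota_{s,a} = 4\ln(12|\cK^\dagger|An(s,a)/\delta) \ge 4\ln(|\cK|A/\delta)$); the paper's own verification of $\sum_{s'}|P^\dagger_{s,a,s'}-\wh{P}_{s,a,s'}| \le \cmin/(12B)$ suffers identically, and both are acceptable at the $\wt{\Omega}(\cdot)$ level where $\phi$ carries whatever polylogarithmic factor is required.
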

\begin{proof}
We define $\wt{V}^\pi_g(s)$ as the value function of $\pi$ with goal state $g$ on the model $\wt{P}_{s,a,s'}$. We will first prove that $\wt{V}^\pi_g(s) \leq \frac{4}{3} V(s)$, and then prove that $ V^\pi_g(s) \leq \frac{4}{3} \wt{V}^\pi_g(s)$ using the simulation lemma on the two models $\wt{P}_{s,a,s'}$ and $P^\dagger_{s,a,s'}$. Combining them together yields $ V^{\pi}_g(s) \leq 2 V(s)$.

First we focus on model $\wt{P}_{s,a,s'}$.  We recall that for any $s \in \cK$ and $s \neq g$,
$$\wtcalL u(s):=\min _{a \in \mathcal{A}}\left\{\wh{c}(s,a)-b(u,s,a)+\widetilde{P}_{s,a}u \right\}$$
where $b(u,s,a)$ is defined in Eq.~\ref{bonus}, i.e., for any $s \in \cK \setminus \{g\}$ and $a \in \cA$,
$$b(u,s,a) = \max \left\{c_{1} \sqrt{\frac{\mathbb{V}\left(\wh{P}_{s, a}, u\right) \iota_{s, a}}{n(s, a)}}, c_{2} \frac{B \iota_{s, a}}{n(s, a)}\right\}+c_{3} \sqrt{\frac{\iota_{s, a}}{n(s, a)}},$$
and we define $b(u,x,a) = 0$ and $b(u,g,a) = 0$.

We observe that when $\|u \|_\infty \leq B = 10L$, the variance $\mathbb{V}\left(\wh{P}_{s, a}, u\right) \leq B^2$, and $\iota_{s,a}$ contains only logarithmic terms. Thus we have $b(u,s,a) = \wt{O}(L / \sqrt{ n(s,a)})$.

As we set $$\phi = \Theta(\frac{L^2|\cK|}{\cmin^2}\ln(\frac{|\cK|A}{\delta})),$$ and $n(s,a) \geq \phi$, we can obtain $b(u,s,a) \leq {\cmin}/{18}$ when $\|u\|_\infty \leq B$.

In addition, under the event $\calE_1$, we have $\vert \wh{c}(s,a) - c(s,a) \vert \leq \wt{O}(1 / \sqrt{n(s,a)})$.

Thus when $n(s,a) \geq \phi$, we have $\vert \wh{c}(s,a) - c(s,a) \vert \leq \cmin/18$ for all $(s,a) \in \cK^\dagger \times \cA$.

We denote $l$ as the final iteration index of \VISGO, and $V = V^{(l)}$. In $\texttt{VISGO}$, we have $V^{(i)} = \wtcalL V^{(i-1)}$ for all $i=1,2,\cdots,l$. As $V^{(l-1)} \leq V^{(l)}$ component-wise, we have for any $s\in \cK^\dagger$, $V(s) \leq   V^{(l-1)}(s_0) + 1 \leq 2L+1$. Thus, $\|V\|_\infty \leq 2L+1 \leq B$.

We set $\gamma = {\cmin}/{6}$. As $\epsVI \leq \cmin/18$, we have $b(u,s,a) + \vert \wh{c}(s,a) - c(s,a) \vert + \epsVI \leq \gamma$ when $\|u\|_\infty \leq B$.

Given the policy $\pi$ restricted on $\cK$,  we introduce the following operators on $\mathbb{R}^{K'}$:
$$\mathcal{L}^{\pi} u(s) = \wh{c}(s,\pi(s))-b(u,s,\pi(s))+\widetilde{P}_{s,\pi(s)}u,$$
$$
\mathcal{T}_{\gamma}^{\pi} u(s):={  c(s,\pi(s))-\gamma}+\widetilde{P}_{s,\pi(s)}u.
$$

We can write component-wise 
$$
\mathcal{T}_{\gamma}^{\pi} V \leq \mathcal{L}^{\pi} V-\epsVI \stackrel{(\mathrm{a})}{=} \wtcalL V-\epsVI \stackrel{\mathrm{(b)}} \leq V,
$$
where $(a)$ uses that $\pi$ is the greedy policy with respect to $V$. To prove (b), we recall that $V = V^{(l)} = \wtcalL V^{(l-1)}$. By contraction property of $\wtcalL$ (Lem.~\ref{lemma_wtcalL_contraction}), we have $\|\wtcalL V - V \|_\infty \leq \| V^{(l)} - V^{(l-1)} \|_\infty$. By stopping condition of \VISGO, we have $ \| V^{(l)} - V^{(l-1)} \|_\infty \leq \epsVI$, thus (b) is proved. By monotonicity of the Bellman operator $\mathcal{T}_{\gamma}^{\pi}$, we have for all $m > 0$, $ (\mathcal{T}_{\gamma}^{\pi})^m V \leq (\mathcal{T}_{\gamma}^{\pi})^{m-1} V \leq \cdots \leq V$.

We observe that the vector $(\mathcal{T}_{\gamma}^{\pi})^m V$ does not increase element-wise when $m$ increases, and $(\mathcal{T}_{\gamma}^{\pi})^m V \geq 0$ element-wise because $V \geq 0$ element-wise. Hence when $m \rightarrow \infty$, it will converge to some vector $W_{\gamma}^{\pi}$, where $W_{\gamma}^{\pi}$ is the value function of policy $\pi$ in the model $\widetilde{P}$ with $\gamma$ subtracted to all the costs, and we have $W_{\gamma}^{\pi} \leq V$ component-wise. We define the random variable $\tilde{t}^{\pi}_g(s)$ as the number of steps it takes to reach $g$ starting from $s$ on model $\wt{P}$ when executing policy $\pi$. Thus
$$W_{\gamma}^{\pi}(s):=\mathbb{E}_{\wt{P}}\left[\sum_{t=1}^{\tilde{t}^{\pi}_g(s)}c\left(s_{t}, \pi\left(s_{t}\right)\right)-\gamma \mid s_{1}=s\right]=\wt{V}^{\pi}_g(s)-\gamma \mathbb{E}_{\wt{P}}\left[\tilde{t}^{\pi}_g(s)\right].$$
Moreover, we have $\cmin\mathbb{E}\left[\tilde{t}^{\pi}_g(s)\right] \leq \wt{V}^{\pi}_g(s)$.
Therefore, we get $$\wt{V}^{\pi}_g(s) \leq \frac{W_{\gamma}^{\pi}(s)}{1 - \gamma/\cmin} \leq \frac{V(s)}{1 - \gamma/\cmin} \leq \frac{4}{3} V(s).$$

Under the event $\calE_2$, we have $\left|P^\dagger_{s, a, s^{\prime}}-\wh{P}_{s, a, s^{\prime}}\right| = \wt{O}(\sqrt{P^\dagger_{s, a, s^{\prime}} / n(s,a)} + n(s,a)^{-1})$. As $n(s,a) \geq \phi = \wt{\Omega}(L^2|\cK|)$, and $B = 10L$, we can obtain $\forall\left(s, a, s^{\prime}\right) \in \cK^\dagger \times \mathcal{A} \times \cK^\dagger$, $$\left|P^\dagger_{s, a, s^{\prime}}-\wh{P}_{s, a, s^{\prime}}\right| \leq \frac{\cmin }{24B} \sqrt{\frac{P^\dagger_{s, a, s^{\prime}}}{|\cK^\dagger|}} + \frac{\cmin}{24B |\cK^\dagger|}.$$

By the Cauchy–Schwarz's inequality,  $\sum\limits_{s' \in \cK^\dagger} \sqrt{P^\dagger_{s, a, s^{\prime}}} \leq \sqrt{|\cK^\dagger|}$, hence we obtain
$$\sum\limits_{s' \in \cK^\dagger} \left|P^\dagger_{s, a, s^{\prime}}-\wh{P}_{s, a, s^{\prime}}\right| \leq \frac{\cmin }{12B}, \quad \forall\left(s, a\right) \in \cK^\dagger \times \mathcal{A}.$$

Also, as $|\wt{P}_{s, a, s^{\prime}}-\wh{P}_{s, a, s^{\prime}}| \leq {1}/{n(s,a)} \leq {\cmin }/{(12B|\cK^\dagger|)}$, and $ \wt{V}^{\pi}_g(s) \leq \frac{4}{3}(2L+1) \leq B$ for all $s \in \cK^\dagger$, we can obtain that 
$$\sum\limits_{s'\in\cK^\dagger} \left|P^\dagger_{s, a, s^{\prime}}-\wt{P}_{s, a, s^{\prime}}\right| \leq \frac{\cmin }{6 \|\wt{V}^{\pi}_g\|_{\infty}}, \quad \forall\left(s, a\right) \in \cK^\dagger \times \mathcal{A}.$$

Thus by simulation lemma for SSP (Lemma 3 in \cite{tarbouriech2020improved}), $\pi$ is proper on true model $P^\dagger_{s,a,s'}$, and for all $s \in \cK^\dagger$, $V^{\pi}_g(s) \leq (1 + \frac{1}{3})\wt{V}^{\pi}_g(s) = \frac{4}{3} \wt{V}^{\pi}_g(s) \leq 2V(s)$. The proof is completed.
\end{proof}

\subsection{Optimistic Property of \VISGO}\label{app_optimism}
Now we will give the optimistic property. We still focus on Alg.\,\ref{algo3}, and we will prove that the output of the \VISGO procedure $(Q,V)$ is optimistic. And we recall that we denote $V^*_g$ as the value function of the optimal policy on MDP $\cM^\dagger$ to reach $g$, and $Q^*_g$ as the $Q$-function corresponding to $V^*_g$.

\newcommand{\lemmaopt}{ 
In Alg.\ref{algo3}, 
under the event $\calE$, for any output $(Q,V)$ of the \VISGO procedure, it holds that
\begin{align*}
    Q(s,a) &\leq Q_{g}^{{*}}(s,a), &\forall s \in \cK \setminus \{g\}, a \in \cA, \\
    V(s) &\leq V_{g}^*(s), &\forall s \in \cK^\dagger,
\end{align*}
where $g$ is the goal state in \VISGO procedure.
}
\begin{lemma}[Optimistic Property]
  \label{lemma_opt}
  \lemmaopt
\end{lemma}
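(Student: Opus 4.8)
The plan is to prove both inequalities simultaneously by induction on the iteration index $i$ of the \VISGO value iteration $V^{(i+1)} = \wtcalL V^{(i)}$, establishing the stronger statement that $V^{(i)} \leq V_g^*$ component-wise for every $i \geq 0$, together with $Q^{(i+1)}(s,a) \leq Q_g^*(s,a)$ for all $s \in \cK \setminus \{g\}$, $a \in \cA$. Since $(V^{(i)})_{i\geq 0}$ is non-decreasing (Lem.~\ref{lemma_wtcalL_convergent}) and converges to the output $V$, passing to the limit then yields $V \leq V_g^*$ and $Q \leq Q_g^*$. The base case is immediate: $V^{(0)} = 0 \leq V_g^*$ because all costs are non-negative, so $V_g^* \geq 0$. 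Throughout I compare against the Bellman identities $Q_g^*(s,a) = c^\dagger(s,a) + P^\dagger_{s,a} V_g^*$ for $s\neq g$, together with $V_g^*(g) = 0$ and $V_g^*(x) = \cRESET + V_g^*(s_0)$ on $\cM^\dagger$.

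For the inductive step, fix $s \in \cK \setminus \{g\}$ and $a \in \cA$ and assume $V^{(i)} \leq V_g^*$. First I would rewrite the update in terms of the function $f$ of Lem.~\ref{lemma_properties_f}, isolating the cost-bonus term:
\begin{align*}
Q^{(i+1)}(s,a) = \Big( \wh c(s,a) - c_3 \sqrt{\tfrac{\wh c(s,a)\iota_{s,a}}{n(s,a)}} \Big) + f\big(\wt P_{s,a}, \wh P_{s,a}, V^{(i)}, n(s,a), B, \iota_{s,a}\big).
\end{align*}
The cost part is controlled by $\calE_1$: since $c_3 = 2\sqrt 2$ makes $c_3\sqrt{\wh c\,\iota/n} = 2\sqrt{2\wh c\,\iota/n}$ match the leading deviation term, one obtains $\wh c(s,a) - c_3\sqrt{\wh c(s,a)\iota_{s,a}/n(s,a)} \leq c^\dagger(s,a) + \frac{28\iota_{s,a}}{3 n(s,a)}$.

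For the $f$-term I would apply the three properties of Lem.~\ref{lemma_properties_f} in sequence, all valid because $\wt P_{s,a,s'} - \frac12 \wh P_{s,a,s'} \geq 0$ (the remark following that lemma) and both $V^{(i)}, V_g^* \in \Upsilon$. Coordinatewise monotonicity (part 1) extends to the partial order, so the inductive hypothesis $V^{(i)} \leq V_g^*$ lets me replace $V^{(i)}$ by the larger $V_g^*$; part 2 then extracts a lower-order cushion, and $\wt P_{s,a} V_g^* \leq \wh P_{s,a} V_g^*$ (Lem.~\ref{lemma_properties_wtP}, using $V_g^* \geq 0$, $V_g^*(g)=0$) gives
\begin{align*}
f\big(\wt P_{s,a}, \wh P_{s,a}, V^{(i)}, n, B, \iota\big) \leq \wh P_{s,a} V_g^* - 2\sqrt{\tfrac{\mathbb{V}(\wh P_{s,a}, V_g^*)\iota}{n}} - 14\tfrac{B\iota}{n}.
\end{align*}
Finally $\calE_3$ bounds $\wh P_{s,a} V_g^* \leq P^\dagger_{s,a} V_g^* + 2\sqrt{\mathbb{V}(\wh P_{s,a},V_g^*)\iota/n} + \frac{14 B_* \iota}{3n}$, and the two $\sqrt{\mathbb{V}(\wh P_{s,a}, V_g^*)\cdots}$ terms cancel exactly, leaving $f \leq P^\dagger_{s,a} V_g^* + \frac{14}{3}(B_* - 3B)\frac{\iota}{n}$.

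Combining the cost bound with the $f$-bound gives
\begin{align*}
Q^{(i+1)}(s,a) \leq c^\dagger(s,a) + P^\dagger_{s,a} V_g^* + \tfrac{14}{3}\big(2 + B_* - 3B\big)\tfrac{\iota_{s,a}}{n(s,a)},
\end{align*}
and since $B_* \leq B$ and $B = 10L \geq 1$, the bracket is negative, so $Q^{(i+1)}(s,a) \leq Q_g^*(s,a)$. Taking minima over $a$ yields $V^{(i+1)}(s) \leq V_g^*(s)$ for $s \in \cK \setminus \{g\}$; the remaining states close the induction directly, with $V^{(i+1)}(g) = 0 = V_g^*(g)$ and $V^{(i+1)}(x) = \cRESET + V^{(i)}(s_0) \leq \cRESET + V_g^*(s_0) = V_g^*(x)$ by the inductive hypothesis. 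The main obstacle is the variance mismatch: the algorithm's bonus uses $\mathbb{V}(\wh P_{s,a}, V^{(i)})$ while the concentration event $\calE_3$ speaks about $\mathbb{V}(\wh P_{s,a}, V_g^*)$; this is precisely what the monotonicity of $f$ (part 1 of Lem.~\ref{lemma_properties_f}) is engineered to resolve, permitting the swap $V^{(i)} \mapsto V_g^*$ before the two variance contributions are asked to cancel.
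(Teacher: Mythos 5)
Your proof is correct and follows essentially the same route as the paper's: induction over the \VISGO{} iterations, using $\calE_1$ with the choice $c_3=2\sqrt{2}$ to handle the cost term, monotonicity and the cushion property of $f$ from Lem.~\ref{lemma_properties_f} to swap $V^{(i)}$ for $V_g^*$, Lem.~\ref{lemma_properties_wtP} to pass from $\wt P_{s,a}$ to $\wh P_{s,a}$, and $\calE_3$ to cancel the variance terms, with $B_*\leq B$ and $B\geq 1$ absorbing the residual lower-order terms. The only (cosmetic) differences are that you collect the constants at the end rather than chaining inequalities, and you phrase the conclusion as a limit argument where the paper simply reads off the final iterate.
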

\begin{proof}
We prove by induction that for any inner iteration $i$ of \VISGO, $Q^{(i)}(s,a) \leq Q^{{*}}_{g}(s,a)$ for any $(s,a) \in \cK \times \cA$, and $ V^{(i)}(s) \leq V_{g}^*(s)$ for any $s \in \cK^\dagger$. By definition we have $Q^{(0)} = 0 \leq Q^{{*}}_{g}$, and $V^{(0)} = 0 \leq V_{g}^*$. Assume that the optimistic property holds for iteration $i$, then for any $(s,a) \in \cK \times \cA$ and $s \neq g$,
\begin{align*}
    Q^{(i+1)}(s,a) &=  \wh{c}(s,a) + \wt P_{s,a} V^{(i)} - b^{(i+1)}(s,a),
\end{align*}
where
\begin{align*}
    &\wh{c}(s,a) + \wt P_{s,a} V^{(i)} - b^{(i+1)}(s,a) \\
    &= \wh{c}(s,a) + \wt P_{s,a} V^{(i)} - \max \Big\{ c_1 \sqrt{ \frac{\mathbb{V}(\wh P_{s,a}, V^{(i)}) \iota_{s,a} }{n(s,a)}}, \, c_2 \frac{B \iota_{s,a}}{n(s,a)}  \Big\} - c_3 \sqrt{\frac{\wh{c}(s,a) \iota_{s,a}}{n(s,a)}}  \\
    &\myineqi c(s,a) + \wt P_{s,a} V^{(i)} - \max \Big\{ c_1 \sqrt{ \frac{\mathbb{V}(\wh P_{s,a}, V^{(i)}) \iota_{s,a} }{n(s,a)}}, \, c_2 \frac{B \iota_{s,a}}{n(s,a)}  \Big\} + \frac{28 \iota_{s,a}}{3 n(s,a)}  \\
    &= c(s,a) + f( \wt P_{s,a}, \wh P_{s,a}, V^{(i)}, n(s,a), B, \iota_{s,a}) + \frac{28 \iota_{s,a}}{3 n(s,a)} \\
    &\myineqii c(s,a) + f( \wt P_{s,a},\wh P_{s,a}, V^{{*}}_{g}, n(s,a), B, \iota_{s,a}) + \frac{28 \iota_{s,a}}{3 n(s,a)}  \\
    &\myineqiii c(s,a) + \wt P_{s,a} V^{{*}}_{g} - 2 \sqrt{ \frac{\mathbb{V}(\wh P_{s,a}, V^{{*}}_{g}) \iota_{s,a} }{n(s,a)}}  - \frac{14 B \iota_{s,a}}{3 n(s,a)} \\
    &\myineqiv c(s,a) + \wh P_{s,a} V^{{*}}_{g} - 2 \sqrt{ \frac{\mathbb{V}(\wh P_{s,a}, V^{{*}}_{g}) \iota_{s,a} }{n(s,a)}} - \frac{14 B \iota_{s,a}}{3 n(s,a)} \\
    &\myineqv \underbrace{c(s,a) + P_{s,a} V^{{*}}_{g}}_{= Q^{{*}}_{g}(s,a)} - (B - B_{{*}})  \frac{14 \iota_{s,a}}{3 n(s,a)} \\
    &\le Q^{{*}}_{g} (s,a),
\end{align*}
where (i) is by definition of $\calG_1$ and choice of $c_3$, (ii) uses the first property of Lem.\,\ref{lemma_properties_f} and the induction hypothesis that $V^{(i)} \leq V^{{*}}_{g}$, (iii) uses the second property of Lem.\,\ref{lemma_properties_f} and assumption $B\ge \max\{ B_{{*}}, 1 \}$, (iv) uses Lem.\,\ref{lemma_properties_wtP}, (v) is by definition of $\calG_3$. Ultimately, for any $(s,a) \in \cK \times \cA$ and $s \neq g$,
\begin{align*}
    Q^{(i+1)}(s,a) \leq Q^{{*}}_{g}(s,a).
\end{align*}
Then for any $s \in \cK$ and $s \neq g$, we have $V^{(i+1)}(s) = \min\limits_{a\in\cA} Q^{(i+1)}(s,a) \leq \min\limits_{a\in\cA} Q^{{*}}_{g}(s,a) = V_{g}^{{*}}(s)$.

In addition, $V^{(i+1)}(g) = 0 = V^*_{g}(g)$, and we have
$$V^{(i+1)}(x) = \cRESET + V^{(i)}(s_0) \leq \cRESET + V^*_{g}(s_0) = V^*_{g}(x).$$
This completes the proof of this lemma.
\end{proof}

\section{Proof of Thm.\,\ref{theoremupSSP}}

Here we give a proof of Thm.\,\ref{theoremupSSP} and we focus on the fixed set $\cK^\dagger$ and our constructed MDP $M^\dagger = \langle \cK^\dagger, \cA, P^\dagger, c^\dagger, s_0 \rangle$. We denote $K = |\cK|$, and $K' = |\cK^\dagger| = K + 1$.

\begin{proofidea}
First we prove that Alg.\ref{algo3} solves AX problem (cf. Lem.~\ref{goodpolicy}), i.e., $\cK \supseteq \cSL$, and $\forall s \in \cK, V^{\pi_{s}}_s\left(s_{0}\right) \leq V^*_{\cK,s}(s_0) + \varepsilon L.$ By running DisCo algorithm with $\varepsilon = 1$, we have $\cSL \subseteq \cK \subseteq \cS_{2L}^\rightarrow$. To prove that each output policy $\pi_s$ is near-optimal, we observe that in the success round with respect to goal $s$, the average cost of executing $\pi_s$ (denoted as $\hat{\tau}$) in $\lambda$ episodes is no more than $V(s_0) + \epsilon L$. As we set $\lambda = \wt{O}(1/\epsilon^2)$, by concentration inequalities (cf. Lem.~\ref{probbound}), we can obtain that the expected cost of $\pi_s$ is close to the average cost $\hat{\tau}$, i.e., $V^{\pi_{s}}_s\left(s_{0}\right) \leq \hat{\tau} + \epsilon L \leq V(s_0) + 2\epsilon L$. By optimistic property of \VISGO (Lem.~\ref{lemma_opt}), we have $V(s_0) \leq V^*_{\cK,s}(s_0)$, i.e., our estimation of the value function $V(s_0)$ is no more than the optimal cost. Hence we obtain $V^{\pi_{s}}_s\left(s_{0}\right) \leq V^*_{\cK,s}(s_0) + 2\epsilon L \leq V^*_{\cK,s}(s_0) + \varepsilon L$ for all $s \in \cK$.


Then we bound the cumulative cost $C_T$. We first bound the total cost in DisCo algorithm and Alg.\,\ref{algo2}. By Lem.~\ref{theoremupDisCo}, with probability over $1 - \delta$, DisCo algorithm with $\varepsilon = 1$ uses no more than $\wt{O}(L^3S_{2L}^2A/\cmin^2)$ samples. In Alg.\,\ref{algo2}, for each state-action pair $(s,a) \in \cK \times \cA$, we collected $\wt{O}(L^2K/\cmin^2)$ samples. And to reach each $s \in \cK$, we executed the policy $\pi_s$, and the cost to reach $s$ is no larger than $\wt{O}(L)$ with high probability. Thus the total cost in Alg.\,\ref{algo2} can be bounded by $\wt{O}(L^3S_{2L}^2A/\cmin^2)$.

Now we will bound the cumulative cost of Alg.\,\ref{algo3} after running DisCo algorithm and Alg.\,\ref{algo2}. It's straightforward to show that the total cost in each round is bounded by $\wt{O}(L\lambda) = \wt{O}(L / \varepsilon^2)$. Hence to bound the cumulative cost $C_T$, we need only to bound the total number of rounds $r$. The number of success rounds is at most $K$. As the trigger condition holds for at most $\log_2(2T)$ times for each state-action pair $(s,a)$, the number of skipped rounds can be bounded by $K'A\log_2(2T)$ (where $T$ is the total number of samples we collected in Alg.~\ref{algo3}). Now we need only to bound the number of failure rounds $r_f$. 

To bound $r_f$, we borrow the idea from \citep{lim2012autonomous}. We first define the regret of an episode as the total cost in this episode minus our estimation of the optimal cost $V(s_0)$, and define the total regret as the sum of the regret in each episode (cf. Eq.~\ref{regret}). Then we will give both the upper bound and lower bound of the regret, where the upper bound scales as $\wt{O}(\sqrt{r_f})$ and the lower bound scales as $\wt{\Omega}(r_f)$. For the upper bound, we extend the techniques in \citep{tarbouriech2021stochastic} from classical SSP to multi-goal SSP, and we obtain an upper bound that scales as $\wt{O}({L}{\epsilon}^{-1}\sqrt{KAr_f} + {LKA}{\epsilon}^{-1} + LK^2A)$ (cf. Eq.~\ref{regretuprf}).

For the lower bound, as the regret in each failure round is at least $\lambda\epsilon L = \wt{\Omega}(L/\epsilon)$, we need only to give a lower bound for the regret of all the success rounds and skipped rounds (which can be negative). We observe that the regret in any round is larger than the total cost of executing policy $\tilde{\pi}$ in this round minus the expected cost of $\tilde{\pi}$, hence we can use concentration inequalities to bound the regret in all the success rounds and skipped rounds (which scales as $-\wt{O}({LKA}{\epsilon}^{-1})$). The lower bound of the total regret scales as $ \wt{\Omega} ({L}{\epsilon}^{-1}r_f - {LKA}{\epsilon}^{-1})$ (cf. Lem.~\ref{regretlowerbound}). Hence we can prove that the number of failure rounds $r_f = \wt{O}(KA + \varepsilon K^2A)$, and the total number of rounds $r = \wt{O}(S_{2L}A + \varepsilon S_{2L}^2A)$ (here we used $K \leq S_{2L}$).

As the cost in each round is bounded by $\wt{O}(L/\varepsilon^{2})$, the cumulative cost after running Alg.~\ref{algo2} is bounded by $\wt{O}(LS_{2L}A\varepsilon^{-2} + LS_{2L}^2A\varepsilon^{-1})$. And the cumulative cost in DisCo algorithm and Alg.~\ref{algo2} is bounded by $\wt{O}(L^3S_{2L}^2A\cmin^{-2})$, hence we complete the proof of Thm.\,\ref{theoremupSSP}.
\end{proofidea}

Now we give the full proof. We recall that we denote $V^*_g(s)$ as the expected cost of the optimal policy on MDP $\cM^\dagger$ to reach goal $g$ from state $s$, which equals to $V^*_{\cK,g}(s)$ for all $s \in \cK$.

First we prove the correctness of Alg.\,\ref{algo3}, i.e., with probability over $1-\delta$, Alg.\,\ref{algo3}  outputs a set $\cK \supseteq \cSL$ and a set of policies $\{\pi_s\}_{s \in \cK}$, such that
$$\forall s \in {\cK}, V_s^{\pi_{s}}\left(s_{0}\right) \leq V_{\cK,s}^*(s_0) + \varepsilon L.$$ The main intuition is that each policy $\pi_s$ has been tested for $\lambda$ times in a success round, and the average cost it takes to reach $s$ from $s_0$ is less than our estimate for optimal cost $V(s_0)$ plus $\epsilon L$. Thus by concentration inequalities, the expected cost of $\pi_s$ is close to optimal.

\newcommand{\roundcost}{\tau}

\newcommand{\goodpolicy}{Let $\{\pi_s\}_{s\in\cK}$ be the set of policies output by Alg.\ref{algo3}. With probability at least $1-{\delta}$, $V^{ \pi_{s}}_s(s_0) \leq V^*_s(s_0)+\varepsilon L$ for all $s\in\cK$.}
\begin{lemma}[\mainalgo~ Solves AX Problem]
\label{goodpolicy}
\goodpolicy
\end{lemma}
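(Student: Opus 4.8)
The plan is to combine the optimism of \VISGO\ with a concentration bound over the $\lambda$ policy-evaluation episodes of each success round. First I would record the structural facts supplied by the earlier components. Running DisCo with $\varepsilon=1$ (Lem.~\ref{theoremupDisCo}) guarantees $\cSL\subseteq\cK\subseteq\cS_{2L}^\rightarrow$, so on $M^\dagger$ every goal $s\in\cK$ satisfies $V^*_s(s')\le 2L+1$ for all $s'\in\cK^\dagger$, and $V^*_s(s_0)=V^*_{\cK,s}(s_0)$. I would then condition on the high-probability event $\calE$ and fix a goal $s\in\cK$ together with the (unique) success round that finalizes $\pi_s$; let $(Q,V)$ be the \VISGO\ output and $\tpi=\pi_s$ the greedy policy over $Q$ used throughout that round's $\lambda$ episodes.

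The two inputs from the \VISGO\ analysis are optimism (Lem.~\ref{lemma_opt}), giving $V(s_0)\le V^*_s(s_0)$, and the bounded-error property (Lem.~\ref{consterror}), giving that $\pi_s$ is proper on $P^\dagger$ with $V^{\pi_s}_s(s')\le 2V(s')=O(L)$ for every non-goal $s'$. Writing $d:=O(L)$ for this uniform bound, Lem.~\ref{exptail} yields the tail estimate $\Pr(\tau>m)\le 2e^{-m/(4d)}$ for the cost $\tau$ of a single rollout of $\pi_s$ from $s_0$ to $g=s$. Next I would exploit that within a success round the policy and model are frozen: had any count $N(s,a)$ crossed the trigger set $\mathcal N$, the round would have terminated as a \emph{skipped} round before completing the $\lambda$ episodes. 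Hence the $\lambda$ episode costs $\hat\tau_1,\dots,\hat\tau_\lambda$ are genuine i.i.d.\ copies of $\tau$ with empirical mean $\hat\tau$.

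Applying Lem.~\ref{probbound} with $n=\lambda$ and $\epsilon':=\epsilon L/d=\Theta(\epsilon)$ (so that $\epsilon' d=\epsilon L$) shows that, with the prescribed $\lambda=\lceil\frac{2048}{\epsilon^2}\ln^2(\frac{256}{\epsilon})\ln(\frac{2|\cK|}{\delta})\rceil$, the event $V^{\pi_s}_s(s_0)=\mathbb E[\tau]\le\hat\tau+\epsilon L$ fails with probability at most $\delta/(2|\cK|)$. Combining this with the success-round test $\hat\tau\le V(s_0)+\epsilon L$ and with optimism gives
\[
V^{\pi_s}_s(s_0)\le \hat\tau+\epsilon L\le V(s_0)+2\epsilon L\le V^*_s(s_0)+2\epsilon L\le V^*_s(s_0)+\varepsilon L,
\]
using $\epsilon=\varepsilon/3$. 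A union bound over the at most $|\cK|$ success rounds bounds the total concentration failure by $\delta/2$, and together with $\calE$ (whose defining confidence I rescale to $\delta/2$) this yields the claim with probability $\ge 1-\delta$.

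The main obstacle, and the step I would treat most carefully, is the conditioning in the concentration argument: $\pi_s$ is data-dependent, so I must argue that, conditionally on all history up to the start of Phase~(b) of the round (which already determines $\tpi$), the $\lambda$ fresh rollouts are i.i.d.\ with the correct mean and tail. This makes the per-round failure bound from Lem.~\ref{probbound} hold uniformly over the conditioning, legitimizing the union bound over the randomly located success rounds and ensuring that the output policy of each finalized goal inherits the near-optimality guarantee.
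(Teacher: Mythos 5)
Your proposal is correct and follows essentially the same route as the paper's own proof: optimism (Lem.~\ref{lemma_opt}) to bound $V(s_0)\le V^*_s(s_0)$, the bounded-error property (Lem.~\ref{consterror}) to get $d=O(L)$ and properness, Lem.~\ref{exptail} for the tail bound, Lem.~\ref{probbound} for the concentration of $\hat\tau$ over the $\lambda$ episodes, and a union bound over the at most $|\cK|$ goals together with the event $\calE$. Your added remarks on why the $\lambda$ episode costs within a success round are i.i.d.\ (the policy and empirical model are frozen unless a trigger turns the round into a skipped round) and on conditioning on the history determining $\tpi$ make explicit a point the paper leaves implicit, but do not change the argument.
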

\begin{proof}
We fix any state $s \in \cK$. 
In any given round where the chosen target is $s$, let $\hat{\tau}_{k}$ be the total cost in the $k$-th episode of that round. Recall that for the algorithm to output a policy $\pi_{s}$, its empirical performance after $\lambda$ episodes must satisfy that 
$\hat{\tau} \leq V(s_0)+\epsilon L,$
where
$\hat{\tau}=\frac{\sum_{k=1}^{\lambda} \hat{\tau}_{k}}{\lambda}$ and $V$ is the output of \VISGO in that round. By optimism property (Lem.~\ref{lemma_opt}), when $\cK \subseteq \cS_{2L}^\rightarrow$, we have $V(s) \leq 2L+1$ for all $s \in \cK^\dagger$.

We define the random variable $\roundcost$ as the total cost it takes to reach the goal state $s$ from the start state $s_0$ when executing policy $\pi_{s}$, and we have $E(\roundcost) = V^{\pi_{s}}_s(s_0)$ by definition. We note that we have collected $\phi = \wt{\Omega}(L^2 K/\cmin^2)$ samples for each of the state-action pair $(s,a)$ (cf. Alg.~\ref{algo2}). By Lem.\,\ref{consterror}, under event $\calE$, the policy $\pi_{s}$ is proper, and we have $E(\roundcost) \leq 2V(s_0) \leq 4L+2$. Moreover, we have $d := \|V_s^{\pi_s}\|_\infty \leq 4L+2$. By Lem\,\ref{exptail}, we obtain $\Pr(\tau > m) \leq 2\exp(-m / 4d)$ for any $m > 0$.  
As we set
$$
\lambda = \lceil \frac{2048}{\epsilon^2}\ln^2(\frac{256}{\epsilon})\ln(\frac{2|\cK|}{\delta}) \rceil,
$$
by Lem.\,\ref{probbound}, we obtain that with probability at least $1 - \delta / (2K')$, we have
$$
V^{\pi_{s}}_s(s_0) = E(\tau) \leq \hat{\tau} + \epsilon L.
$$
We note that $ \hat{\tau} \leq V(s_0) + \epsilon L \leq V_s^*(s_0) + \epsilon L$ by the optimistic property (Lem.\,\ref{lemma_opt}). Hence, as we set $\epsilon = \varepsilon / 3$ in initial, we obtain that with probability at least $1 - \delta/(2K)$, $$
V^{\pi_{s}}_s(s_0) \leq V_s^*(s_0) + \varepsilon L.
$$
Finally, as there are at most $K$ states in total, and the event $\calE$ holds with probability at least $1 - \delta$, by the union bound and setting $\delta \rightarrow \delta/C$ in initial ($C$ is a large constant), the total success probability is at least $1-\delta$.
\end{proof}

Now we focus on bounding the cumulative cost of Alg.\ref{algo3}. 

First, we bound the total cost in DisCo algorithm and Alg.\,\ref{algo2}. Disco algorithm has sample complexity $\wt{O}(L^3S_{(1+\varepsilon)L}^2A/(\cmin^2\varepsilon^2))$, and when $\varepsilon = 1$, the total cost is bounded by $\wt{O}(L^3S_{2L}^2A/\cmin^2)$. In Alg.\,\ref{algo2}, we collect $\phi = \wt{O}(L^2K/\cmin^2)$ samples for each state-action pair $(s,a) \in \cK \times \cA$. To collect each sample  $(s,a,s',c)$, we executed a policy $\pi_s$ to reach the state $s$, and the expected cost $V_s^{\pi_s}(s_0) \leq 2L$. By Lem.\,\ref{exptail}, we obtain that with probability at least $1 - \delta$, for any state $s$, each time when the policy $\pi_s$ is executed, the total cost to reach $s$ from $s_0$ is no larger than $O(L \log (K / \delta))$. Therefore, the total cost of Alg.\,\ref{algo2} can be bounded by $O(KA\phi L \log (K / \delta)) = \wt{O}(L^3S_{2L}^2A/\cmin^2)$. We note that we used Lem.\,\ref{exptail} no more than $\phi KA$ times, the total failure probability is no more than $\phi KA\delta$. Substituting $\delta$ by $\delta / (2\phi KA)$ in the proof, we can obtain that the total cost of DisCo and Alg.\,\ref{algo2} is bounded by $\wt{O}(L^3S_{2L}^2A/\cmin^2)$ with probability $1-\delta$.

Then we bound the total cost of Alg.\ref{algo3} after running Alg.\,\ref{algo2}.

The key idea lies in bounding the "regret". We will use the regret to bound the total number of rounds. We first define the regret in the $k$-th episode of the $j$-th round. We denote $H^{j,k}$ as the number of steps it takes in the $k$-th episode of the $j$-th round. The regret in an episode $k$ is defined as
$$
(\sum\limits_{h=1}^{H^{j,k}} c_h^{j,k}) - V^{j}(s_0),
$$
where $c_h^{j,k}$ is the empirical cost in the $h$-th step in the $k$-th episode in the $j$-th round, and $V^{j}(s_0)$ is the value of $V(s_0)$ in the $j$-th round. Let $n_j$ be the total number of episodes executed in the $j$-th round, we define the regret in the $j$-th round as follows:
$$
\sum\limits_{k=1}^{n_j} ((\sum\limits_{h=1}^{H^{j,k}} c_h^{j,k}) - V^{j}(s_0)).
$$

Then we will define the total regret of Alg.\ref{algo3}. Let $r$ be the total number of rounds, $n_j$ be the total number of episodes executed in the $j$-th round, and $0 \leq n_j \leq \lambda$. Then we know that the total number of episodes in the whole process of Alg.\ref{algo3} is $M = \sum\limits_{j=1}^r n_j$. For notation convenience, we define $H^m$ as the number of steps it takes in the $m$-th episode of the whole process of Alg.\ref{algo3}, and denote $c_h^m$ as the empirical cost in the $h$-step of episode $m$. Finally we define the total regret of all the rounds as
\begin{eqnarray}\label{regret}
R := \sum\limits_{j=1}^r \sum\limits_{k=1}^{n_j} ((\sum\limits_{h=1}^{H^{j,k}} c_h^{j,k}) - V^{j}(s_0)) = \sum\limits_{m=1}^M ((\sum\limits_{h=1}^{H^m} c_h^m) - V^{m}(s_0)).
\end{eqnarray}

We will give both the upper bound and the lower bound of the regret. Here we give the upper bound.

\newcommand{\regretbound}{Under event $\cE$,  the total regret in $M$ episodes is at most
$$
R = \widetilde{O}(L\sqrt{KAM} + LK^2A).
$$}
\begin{lemma}[Upper Bound of Regret]
\label{regretbound}
\regretbound
\end{lemma}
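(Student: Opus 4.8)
The plan is to follow the single-goal SSP regret decomposition of \citep{tarbouriech2021stochastic} and adapt it to the fixed goal sequence of Alg.~\ref{algo3} run on the merged MDP $M^\dagger$. Fix an episode $m$ lying in a round with goal $g$, VISGO output $V = V^m$ and greedy policy $\tpi$, and let the trajectory be $s_1^m = s_0, a_1^m, s_2^m, \ldots$ ending at $g$. Since $a_h^m = \tpi(s_h^m)$ is greedy over $Q$ and VISGO has converged to tolerance $\epsVI$, the output satisfies on each visited non-goal state the near-fixed-point relation $V(s_h^m) = \wh c(s_h^m,a_h^m) + \wt P_{s_h^m,a_h^m} V - b_h + \xi_h$, with $b_h := b(V,s_h^m,a_h^m)$ and $|\xi_h| \le \epsVI$. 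Telescoping $\sum_h (V(s_h^m) - V(s_{h+1}^m)) = V(s_0) - V(g) = V(s_0)$ and substituting, I would write the per-episode regret as
\begin{align*}
\sum_{h=1}^{H^m} c_h^m - V^m(s_0)
&= \underbrace{\sum_h \big(c_h^m - \wh c(s_h^m,a_h^m)\big)}_{\text{(I)}}
+ \underbrace{\sum_h \big(V(s_{h+1}^m) - P^\dagger_{s_h^m,a_h^m} V\big)}_{\text{(II)}} \\
&\quad + \underbrace{\sum_h \big(P^\dagger_{s_h^m,a_h^m} - \wt P_{s_h^m,a_h^m}\big) V}_{\text{(III)}}
+ \sum_h b_h + \sum_h \xi_h.
\end{align*}
Summing over all $M$ episodes reduces the task to bounding these five sums over the $T := \sum_m H^m$ total steps.

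Next I would dispatch the easy terms. Term (I) splits as $(c_h^m - c^\dagger) + (c^\dagger - \wh c)$: the first is a bounded martingale difference (Azuma) and the second is controlled by $\cE_1$, both lower order. Term (II) is the value-function martingale; by a Freedman/Bernstein bound it aggregates to $\widetilde{O}\big(\sqrt{\sum_h \mathbb{V}(P^\dagger_{s_h^m,a_h^m}, V)\,\iota} + B\,\iota\big)$, which I will later fold into the variance bookkeeping. The truncation sum obeys $|\sum_h \xi_h| \le T\,\epsVI$, and since $\epsVI = 2^{-j}/(K'A)$ is polynomially tiny while the total realized cost is at least $\cmin T$, this is negligible. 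For the model error (III), Lem.~\ref{lemma_properties_wtP} gives $(P^\dagger - \wt P)_{s,a} V \le (P^\dagger - \wh P)_{s,a} V + B/(n(s,a)+1)$, whose correction sums to $\widetilde{O}(B K' A)$; the remaining $(P^\dagger - \wh P)_{s,a} V$ I would bound by an empirical-Bernstein estimate (extending the event $\cE$ beyond $\cE_2,\cE_3$ to the sample-dependent $V$) so that it takes the \emph{same} form as $\sum_h b_h$, up to a lower-order $\widetilde{O}(B (K')^2 A)$ term arising from the coordinate-wise transition deviations — this is the source of the $LK^2A$ term.

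Finally I would bound $\sum_h b_h$ and close a self-bounding inequality. By Cauchy--Schwarz, $\sum_h \sqrt{\mathbb{V}(\wh P_{s_h^m,a_h^m}, V)\,\iota/n(s_h^m,a_h^m)} \le \sqrt{\big(\sum_h \mathbb{V}(\wh P_{s_h^m,a_h^m}, V)\big)\big(\sum_h \iota/n(s_h^m,a_h^m)\big)}$, where a harmonic/pigeonhole argument gives $\sum_h 1/n(s_h^m,a_h^m) = \widetilde{O}(K'A)$ (and $\sum_h 1/\sqrt{n} = \widetilde{O}(\sqrt{K'AT})$ for the $c_3$ part). The key step is the law of total variance: transferring $\mathbb{V}(\wh P,\cdot)$ to $\mathbb{V}(P^\dagger,\cdot)$ and telescoping along each trajectory yields $\sum_h \mathbb{V}(P^\dagger_{s_h^m,a_h^m}, V) \lesssim B\cdot(\text{total realized cost}) + \text{l.o.t.}$, and by optimism (Lem.~\ref{lemma_opt}, with $V^m(s_0)\le B$) the total cost equals $R + \sum_m V^m(s_0) \le R + MB$. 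Combining everything yields $R \le \widetilde{O}\!\big(\sqrt{B(R+MB)K'A}\big) + \widetilde{O}(B(K')^2A)$; solving this quadratic in $\sqrt{R}$ and using $B = \Theta(L)$, $K' = K+1$ gives $R = \widetilde{O}(L\sqrt{KAM} + LK^2A)$, with no $\cmin$ in the leading term.

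The main obstacle is term (III): bounding $(P^\dagger - \wh P)_{s,a} V$ for the \emph{sample-dependent} optimistic value $V$, which is a function of the very counts and transitions defining $\wh P$, so the empirical-Bernstein concentration cannot be applied off the shelf and must be transferred to $V$ (e.g.\ via a covering argument or by leveraging the monotone VISGO construction and the variance-aware design of the bonus $b$). The accompanying law-of-total-variance accounting — carried across episode boundaries and through the $\wt P$ perturbation while preserving the $\cmin$-free leading term — is the other delicate piece.
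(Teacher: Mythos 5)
Your skeleton is the right one---it is essentially a reconstruction of the proof of Theorem 3 of \citet{tarbouriech2021stochastic}, which is also the route the paper takes, except that the paper does not re-derive it: it invokes the EB-SSP regret bound $\wt{O}(B_*\sqrt{SAM}+BS^2A)$ as a black box, substitutes $B_*\le 2L+1$, $B=10L$, $S=K+1$ (valid because on $M^\dagger$ every state in $\cK$ is incrementally $2L$-controllable), and identifies the single modification needed for the multi-goal setting: the concentration event $\cE_3$ is taken uniformly over all goals $g\in\cK$, not just one.

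However, the step you explicitly leave open---your term (III), controlling $(\wh P_{s,a}-P^\dagger_{s,a})V$ for the sample-dependent optimistic $V$---is precisely the crux, and neither of your proposed fixes works as stated. A covering argument over $\{V\in\mathbb{R}^{K'} : 0 \le V \le B\}$ has metric entropy of order $K'\log(B/\epsilon)$, which inflates every Bernstein log factor $\iota$ by a factor of $K'$ and turns the leading term into $\wt{O}(LK\sqrt{AM})$ rather than $\wt{O}(L\sqrt{KAM})$; so naive covering destroys exactly the bound you are trying to prove. The resolution used by EB-SSP (and the reason the paper's event $\cE$ is built the way it is) is to decompose against the \emph{fixed} comparator: $(\wh P_{s,a}-P^\dagger_{s,a})V = (\wh P_{s,a}-P^\dagger_{s,a})V^*_g + (\wh P_{s,a}-P^\dagger_{s,a})(V-V^*_g)$, where $g$ is the current round's goal. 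The first piece is handled by $\cE_3$, which is legitimate because $V^*_g$ depends only on $\cK$, which is frozen after DisCo and independent of the fresh samples---and because $\cE_3$ holds simultaneously for every $g\in\cK$, which is exactly the union bound over goals that the multi-goal setting requires and that the paper highlights as its one new ingredient relative to single-goal SSP. The second piece is bounded coordinate-wise via $\cE_2$ together with optimism ($0\le V\le V^*_g$, so $\norm{V-V^*_g}_\infty\le B$) and an AM--GM step that folds the $\sqrt{P^\dagger_{s,a,s'}\iota/n}$ factors into variance terms plus $\wt{O}(B K'\iota/n)$ corrections; summing those corrections over all steps by pigeonhole is what actually produces the $\wt{O}(LK^2A)$ lower-order term. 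With that substitution in place of your covering suggestion, the rest of your argument (martingale terms, Cauchy--Schwarz plus pigeonhole on the bonus sum, law of total variance, and the self-bounding inequality in $R$) goes through and matches the paper's bound.
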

This upper bound comes from the regret bound of the EB-SSP algorithm (cf. \citep{tarbouriech2021stochastic}), which solves the classical SSP problem with a single goal state $g$. 
To extend their result to multi-goal SSP, instead of only concentrating on $\wh{P}_{s,a} V_g^*$ for a single goal $g$ and one vector $V_g^*$, in our high probability event $\mathcal{E}_3$, we use concentration over $(\wh P_{s,a} - P^\dagger_{s,a}) V_g^{{*}}$ for all the goal states $g \in \cK$. Then following similar proof with Thm.3, \citep{tarbouriech2021stochastic}, we can obtain the regret upper bound in Lem.~\ref{regretbound}.

We note that the original form of the regret upper bound in Thm.3, \citep{tarbouriech2021stochastic} was $\wt{O}(B_* \sqrt{SAM} + BS^2A)$, where $B_* := \max\limits_{s \in \cS}V^*_g(s)$ in their work, $B$ is an upper bound of $B_*$ which is used in \VISGO, and $M$ is the number of episodes. In our Alg.~\ref{algo3}, we work on MDP $\cM^\dagger$, and all the states in $\cK$ are incrementally $2L$-controllable from $s_0$. Hence in our settings, $B_*:= \max\limits_{(s,g) \in \cK^\dagger\times\cK} V^*_g(s) \leq 2L + 1$, and the number of states in MDP $\cM^\dagger$ is $K' = K+1$. And in our Alg.~\ref{algo3}, we set $B = 10L$. Therefore, by setting $B_* = O(L), B = O(L), S = K+1$ in their regret bound $\wt{O}(B_* \sqrt{SAM} + BS^2A)$, we can obtain the regret bound $\widetilde{O}(L\sqrt{KAM} + LK^2A)$.


\newcommand{\numfail}{r_f}

We observe that there are at most $\wt{O}(KA)$ skipped rounds and $K$ success rounds. We denote by $\numfail$ the number of failure rounds, and we have the total number of episodes $M = \wt{O}((KA + \numfail)\lambda) = \wt{O}((KA + \numfail)/\epsilon^2).$ Thus the total regret in $r$ rounds can be bounded by $\numfail$ sublinearly:
\begin{eqnarray}\label{regretuprf}
R = \wt{O}(\frac{L}{\epsilon}\sqrt{KA\numfail} + \frac{LKA}{\epsilon} + LK^2A).
\end{eqnarray}

Then we gives the lower bound of the total regret in terms of the number of failure rounds $\numfail$.

\begin{lemma}[Lower Bound of Regret]\label{regretlowerbound}
With probability $1-\delta$, when $r = \wt{O}((KA)^2)$, the total regret in the first $r$ rounds is at least
$$
R = \wt{\Omega} (\frac{L\numfail}{\epsilon} - \frac{LKA}{\epsilon}),
$$
where $\numfail$ is the number of failure rounds in the $r$ rounds.
\end{lemma}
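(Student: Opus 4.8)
The plan is to split the total regret $R=\sum_{m=1}^{M}\big(\sum_{h}c_h^m - V^m(s_0)\big)$ according to the type of the round each episode belongs to, writing $R = R_{\mathrm{fail}}+R_{\mathrm{succ}}+R_{\mathrm{skip}}$, and to lower bound each piece. The failure rounds are the easy, deterministic part. By the definition of a failure round, once it terminates at some episode $k^\star\le\lambda$ we have $\hat\tau=\frac1\lambda\sum_{k\le k^\star}\hat\tau_k > V^j(s_0)+\epsilon L$, where $\hat\tau_k=\sum_h c_h^{j,k}$ is the cost of episode $k$. Hence the round's regret is $\sum_{k\le k^\star}(\hat\tau_k - V^j(s_0)) = \sum_{k\le k^\star}\hat\tau_k - k^\star V^j(s_0) > \lambda(V^j(s_0)+\epsilon L) - k^\star V^j(s_0) = (\lambda-k^\star)V^j(s_0)+\lambda\epsilon L \ge \lambda\epsilon L$, using $k^\star\le\lambda$ and $V^j(s_0)\ge 0$. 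Summing over the $\numfail$ failure rounds and plugging in $\lambda=\wt\Theta(1/\epsilon^2)$ gives $R_{\mathrm{fail}}\ge \numfail\,\lambda\epsilon L=\wt\Omega(L\numfail/\epsilon)$.

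The core of the argument is to show $R_{\mathrm{succ}}+R_{\mathrm{skip}}\ge -\wt O(LKA/\epsilon)$, and this is where all the randomness enters. The key structural observation is that within a single success or skipped round the greedy policy $\tpi$ is \emph{frozen}: it is recomputed only in Phase (a) at the start of a round, so the full episodes of that round are i.i.d.\ executions of $\tpi$ on the true model $P^\dagger$, each with cost mean $Y:=V^{\tpi}_g(s_0)$. By Lem.~\ref{consterror} the policy $\tpi$ is proper with $Y\le 2V(s_0)=O(L)$ and $d:=\|V^{\tpi}_g\|_\infty=O(L)$, and by Lem.~\ref{exptail} each episode cost obeys the sub-exponential tail $\Pr(\hat\tau_k>m)\le 2e^{-m/(4d)}$. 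I then center each episode: writing $\hat\tau_k-V^j(s_0)=(\hat\tau_k-Y)+(Y-V^j(s_0))$ and invoking optimism (Lem.~\ref{lemma_opt}, which gives $V^j(s_0)\le V^*_g(s_0)\le Y$), the second term is nonnegative, so the full-episode regret of the round is at least $\sum_k(\hat\tau_k-Y)$. A Bernstein-type concentration driven by the tail of Lem.~\ref{exptail} (for a success round, where there are exactly $\lambda$ episodes, Lem.~\ref{probbound} applies verbatim and yields the bound $\sum_k(\hat\tau_k-Y)\ge -\lambda\epsilon d$) shows that, with per-round failure probability $\mathrm{poly}(\delta/(KA))$, $\sum_k(\hat\tau_k-Y)\ge -\wt O(L\sqrt{n}+L)\ge -\wt O(L/\epsilon)$, since the number of full episodes in any round is at most $\lambda=\wt O(1/\epsilon^2)$.

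To finish I account for the remaining pieces and aggregate. Each skipped round has exactly one cut-short final episode whose cost is only partial; there I use the trivial bound $\hat\tau_{n_j}-V^j(s_0)\ge -V^j(s_0)\ge -(2L+1)$, and since the trigger condition $N(s,a)\in\mathcal N$ fires at most $\wt O(1)$ times per state-action pair there are at most $\wt O(KA)$ skipped rounds and at most $K$ success rounds. Thus the cut-episode terms total $\ge -\wt O(LKA)$ and the per-round concentration terms sum to $\ge -\wt O(KA)\cdot\wt O(L/\epsilon)=-\wt O(LKA/\epsilon)$, giving $R_{\mathrm{succ}}+R_{\mathrm{skip}}\ge -\wt O(LKA/\epsilon)$. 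Combining with the failure bound yields $R\ge \wt\Omega(L\numfail/\epsilon)-\wt O(LKA/\epsilon)$, conditioned throughout on the event $\calE$. The main obstacle — and the reason the hypothesis $r=\wt O((KA)^2)$ is imposed — is making the concentration hold \emph{simultaneously across all rounds}: because the number of rounds is a priori random, I must union bound the per-round deviation events over all $r=\wt O((KA)^2)$ rounds, which inflates the required confidence only by a $\log(KA)$ factor that is absorbed into the $\wt\Theta(\ln(K/\delta))$ exponent already present in $\lambda$. The one point demanding care is checking that, because the policy is fixed inside each round, the episodes really are i.i.d.\ conditioned on the history at the round's start, so that the round-by-round application of Lem.~\ref{probbound} (rather than a delicate global martingale argument over an adapted subset of episodes) is legitimate.
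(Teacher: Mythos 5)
Your proposal is correct and takes essentially the same route as the paper's own proof: the identical decomposition into failure/success/skipped rounds, the same deterministic per-failure-round bound $\lambda\epsilon L$, the same use of optimism (Lem.~\ref{lemma_opt}) together with the bounded-error property (Lem.~\ref{consterror}) and the exponential tail (Lem.~\ref{exptail}) to center complete episodes at $V^{\tpi}_g(s_0)$, the same truncation-plus-Hoeffding concentration (Lem.~\ref{probbound}) applied round by round with failure probability $\delta/(KA)^2$, the same trivial $-O(L)$ bound for the cut-short final episode of a skipped round, and the same counting of $\wt{O}(KA)$ skipped plus $K$ success rounds with a union bound over the $\wt{O}((KA)^2)$ rounds. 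The only cosmetic difference is that you apply the crude last-episode bound only to skipped rounds while the paper applies it uniformly to every round, which changes nothing in the final estimate.
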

\begin{proof}
By the criterion of our performance check, in any failure round, we have $\hat{\tau}>V(s_0)+\epsilon L$, and in round $j$, we have $\hat{\tau} = \frac{1}{\lambda}\sum\limits_{k=1}^{n_j} (\sum\limits_{h=1}^{H^{j,k}} c_h^{j,k})$ by definition. Hence, in any failure round $j$, the regret is $ \lambda \hat{\tau} - n_j V^j(s_0) \geq \lambda (\hat{\tau} - V^j(s_0)) \geq \lambda \epsilon L = \wt{\Omega}(\frac{L\numfail}{\epsilon})$.

Then we focus on skipped rounds and success rounds. We denote $g^j$ as the goal state in the $j$-th round, and $\pi_j$ as the policy $\tpi$ in the $j$-th round, which is the greedy policy over the $Q$-function in the $j$-th round. We observe that the regret in any round $j$ satisfies
$$
\sum\limits_{k=1}^{n_j} ((\sum\limits_{h=1}^{H^{j,k}} c_h^{j,k}) - V^{j}(s_0)) \geq -L + \sum\limits_{k=1}^{n_j-1} ((\sum\limits_{h=1}^{H^{j,k}} c_h^{j,k}) - V_{g^j}^*(s_0)) \geq
-L + \sum\limits_{k=1}^{n_j-1} ((\sum\limits_{h=1}^{H^{j,k}} c_h^{j,k}) - V_{g^j}^{\pi_j}(s_0)),
$$
where we used the optimism property in Lem.\,\ref{lemma_opt}. We note that $ \sum\limits_{h=1}^{H^{j,k}} c_h^{j,k}$ is the empirical cost of policy $\pi_j$ in episode $k$, and we will use the concentration inequality to give a lower bound of the regret in round $j$. As the last episode in a skipped round can terminate before reaching the goal, we should take special considerations the last episode of each round. We directly use $-L$ to lower bound the regret of the last episode in round $j$. Then we denote $n = n_j - 1$, and focus on the previous $n$ episodes in round $j$.

Now we fix the round index $j$. We denote the random variable $\roundcost$ as the cost to reach $g^j$ from $s_0$, and we recall that $\hat{\tau}_k = \sum\limits_{h=1}^{H^{j,k}} c_h^{j,k}$. By Lem.\,\ref{probbound} with $d = 4L$, with probability at least $1 - \frac{\delta}{(KA)^2}$, we have 
$$
\sum\limits_{k=1}^n (\hat{\tau}_k - E(\tau)) \geq -2\Gamma\sqrt{n\ln(\frac{KA}{\delta})} \geq -2\Gamma\sqrt{\lambda\ln(\frac{KA}{\delta})} \geq -\wt{O}(\frac{L}{\epsilon}),
$$
where $\Gamma = \lfloor 8d \ln (64 / \epsilon) \rfloor$.
Thus the regret in any round $j$ is larger than $-\wt{O}(\frac{L}{\epsilon})$. As there are at most $\wt{O}(KA)$ skipped rounds and $K$ success rounds, we obtain that the total regret $R$ has the lower bound
$$
R = \wt{\Omega} (\frac{L\numfail}{\epsilon} - \frac{LKA}{\epsilon}).
$$
Now we bound the total failure probability. The number of rounds $r = \wt{O}((KA)^2)$, in each round the failure probability is at most $\frac{\delta}{(KA)^2}$, and the events $\calE$ fails with probability $\delta$. By replacing $\delta$ by $\delta / C$ throughout the proof ($C$ is a large constant), we obtain that the total failure probability is at most $\delta$.
\end{proof}

As the lower bound is linear in $\numfail$, and the upper bound is sublinear in $\numfail$, we can solve it and obtain that $\numfail = \wt{O}(KA + \epsilon K^2A)$, thus the total number of rounds can be bounded by $\wt{O}(KA + \epsilon K^2A)$.

To get the cumulative cost bound in Thm.\,\ref{theoremupSSP}, we need only to bound the cost in a round. In any round, we observe that except for the last episode, the average cost $\hat{\tau}$ for all the other episodes is no larger than $V(s_0) + \epsilon L \leq 2L$, thus the total cost in these episodes is no larger than $2L\lambda = \wt{O}(L/\epsilon^2)$. Also, we know that in the any episode, the expected cost of the policy $\tpi$ to reach the goal from $s_0$ is no larger than $2L$. Thus by Lem.\,\ref{exptail}, in any round, with probability at least $1 - \frac{\delta}{(KA)^2}$, the cost in the last episode is no larger than $\wt{O}(L).$ Hence, the total cost in each round is no larger than $\wt{O}(L/\epsilon^2)$. By multiplying it with $\wt{O}(KA + \epsilon K^2A)$ and using $K \leq S_{2L}$, the cumulative cost in Alg.\ref{algo3} can be bounded by $\wt{O}(LS_{2L}A/\varepsilon^2 + LS_{2L}^2A/\varepsilon + L^3S_{2L}A/\cmin^2)$, where the term $L^3S_{2L}A/\cmin^2$ comes from the subroutine of DisCo algorithm and Alg.\,\ref{algo2}. Hence we obtain the bound in Thm.\,\ref{theoremupSSP}.

Now we count the total failure probability. First, DisCo algorithm fails with probability $\delta$, the event $\mathcal{E}$ fails with probability ${\delta}$, and the lower bound of the total regret $R$ fails with probability $\delta$. And in the previous paragraph, to bound the cost in the last episode of each round using Lem.\,\ref{exptail}, the failure probability in each episode is at most $\frac{\delta}{(KA)^2}$. We observe that the total number of these failures
is no larger than the total number of rounds, and the total number of rounds can be bounded by $\wt{O}(KA + \epsilon K^2A)$, where we omit the logarithmic factors. Thus by setting $\delta \rightarrow \delta / C$ in the proof ($C$ is a large constant), we can bound the total failure probability by $\delta$, and the proof of Thm.\,\ref{theoremupSSP} is completed.

Here we briefly discuss the time complexity and space complexity of our algorithm $\mainalgo$. The time complexity scales as $\wt{O}(TK^3A^2(KA + \epsilon K^2A))$, where $T$ is the total number of samples collected, and $\wt{O}(KA + \epsilon K^2A)$ is the total number of rounds. The bottleneck is on the \VISGO procedure, and the time complexity of a \VISGO procedure is analyzed in Appendix G, \citep{tarbouriech2021stochastic}, which scales as $\wt{O}(TK^3A^2)$. The space complexity scales as $\tilde{O}(T+K^2A) = \tilde{O}(T)$, and the bottleneck is on storing the samples and the empirical model $\wh{P}$.

\section{Analysis of the Lower Bound}

\label{app_ax_lowerbound}

Here we discuss the lower bound of the autonomous exploration problem. We recall that our algorithm needs output a set $\cK \supseteq \cSL$ and a set of policies $\{\pi_s\}_{s\in\cK}$, and when $s\in\cSL$, the policy $\pi_s$  satisfies $V_s^{\pi_s}(s_0) \leq (1+\varepsilon)L$. Moreover, we note that in our proof of the lower bound, we allow the algorithm to output Markov policies, i.e. non-stationary and non-deterministic policies, which is defined in the next paragraph.

We recall the some basic concepts about the definition of a learning algorithm, and we use the notations in \citep{domingues2021episodic}. 
Let
$\mathcal{I}^{t}=(\mathcal{S} \times \mathcal{A})^{t-1} \times  \mathcal{S}$
be the set of all possible histories up to $t$ steps, i.e., be the set of tuples of the form
$
\left(s^{1}, a^{1}, s^{2}, a^{2}, \ldots,  s^{t}\right) \in \mathcal{I}^{t}.
$ Let $\Delta(\cA)$ be the set of probability distributions over the action space $\cA$, and $ \mathbb{N}^*$ be the set of positive integers. A Markov policy is a function $\pi : \cS \times \mathbb{N}^* \rightarrow \Delta(\cA)$ such that $\pi(a \mid s,h)$ denotes the probability of taking action $a$ in state $s$ at step $h$. And we note that the Markov policy $\pi$ is history-independent. 

A history-dependent policy is a family of functions denoted as $\algo \triangleq\left(\pi^{t}\right)_{t \geq 1}$, where $\pi^{t}: \mathcal{I}^{t} \rightarrow \Delta(\mathcal{A})$ describes the probability of taking action $a \in \cA$ after observing some history $i^t \in \mathcal{I}^{t}$.

Given an MDP $\cM = \langle \mathcal{S}, \mathcal{A},  p, c, s_0\rangle$,
a policy $\algo$ interacting with the MDP $\cM$ defines a stochastic process denoted by $\left(S^{t}, A^{t}\right)_{t \geq 1}$, where $(S^{t}, A^{t})$ is the state-action pair at time $t$. The Ionescu-Tulcea theorem ensures the existence of the  probability space $\left(\Omega, \mathcal{F}, \mathbb{P}_{\mathcal{M}}\right)$ such that 
$$
\mathbb{P}_{\mathcal{M}}\left[S^1=s\right]=\mathds{I}[s=s_0], \mathbb{P}_{\mathcal{M}}\left[S^{t+1}=s \mid A^{t}, I^{t}\right]=p\left(s \mid S^{t}, A^{t}\right), \text { and }  \mathbb{P}_{\mathcal{M}}\left[A^{t}=a \mid I^{t}\right]=\pi^{t}\left(a \mid I^{t}\right),
$$
where $\algo=\left(\pi^{t}\right)_{t \geq 1}$ and for any $t$,
$I^{t} \triangleq\left(S^{1}, A^{1}, S^2, A^2, \ldots S^{t}\right)$ is the random vector in $\mathcal{I}^{t}$ containing all state-action pairs observed up to step $t$. 
We denote the $\sigma$-algebra generated by $I^{t}$ as $\mathcal{F}^{t}$. And we denote by $\mathbb{P}_{\mathcal{M}}^{I^{T}}$ the measure of $I^{T}$ under $\mathbb{P}_{\mathcal{M}}$ as follows:
$$
\mathbb{P}_{\mathcal{M}}^{I^{T}}\left[i^{T}\right] \triangleq \mathbb{P}_{\mathcal{M}}\left[I^{T}=i^{T}\right]=\mathds{I}(s^1=s_0) \prod_{t=1}^{T-1} \pi^{t}\left(a^{t} \mid i^{t}\right) p\left(s^{t+1} \mid s^{t}, a^{t}\right).
$$

Then we denote $\mathbb{E}_{\mathcal{M}}$ as the expectation under $\mathbb{P}_{\mathcal{M}}$. Note that the dependence of $\mathbb{P}_{\mathcal{M}}$ and $\mathbb{E}_{\mathcal{M}}$ on the  policy $\boldsymbol{\pi}$ is denoted implicitly in the definition of $\mathbb{P}_{\mathcal{M}}$. We will denote them explicitly as $\mathbb{P}_{\boldsymbol{\pi}, \mathcal{M}}$ and $\mathbb{E}_{\boldsymbol{\pi}, \mathcal{M}}$ respectively when we need to stress $\algo$.

We recall that we define an algorithm for the AX problem as a tuple $(\algo, \tau, \cK, \{\pi_s\}_{s \in \cK})$, where $\algo$ is a history-dependent policy, $\tau$ is the stopping time chosen by the algorithm, $\cK$ and $\{\pi_s\}_{s \in \cK}$ are the output of the algorithm. And given the algorithm $\algo$ and the MDP $\mathcal{M}$ for AX problem, we can regard the number $\tau$, the set of states $\cK$, and the set of policies $\{\pi_s\}_{s \in \cK}$ as random variables on distribution $\mathbb{P}_{\boldsymbol{\pi}, \mathcal{M}}$.

Moreover, for any the MDP $\cM = \langle \mathcal{S}, \mathcal{A},  p, c, s_0\rangle$,  given a Markov policy ${\pi}$ with goal state $g$, we denote $V^{\pi}_{\cM, g}(s)$ as the expected cost of policy $\pi$ to reach state $g$ from state $s$ in MDP $\cM$. Formally, $V_{\cM,g}^{\pi}(s) = \mathbb{E}_{\pi, \cM}\left[\sum_{t=1}^{t_g^{\pi}\left(s \right)} c_{t}\left(s_{t}, \pi\left(s_{t}\right)\right)  \right],$ where $t_g^{\pi}\left(s \right):=\inf \left\{t \geq 0: s_{t+1}=g\right\}.$ 
And we denote $V^*_{\cM,g}(s)$ as the expected cost of the optimal policy $\pi$ to reach the goal state $g$ from the state $s$ on MDP $\cM$. 

Here we introduce the basic definitions and the technical lemmas used in our proof.

\begin{definition}[KL divergence]
The Kullback-Leibler divergence between two distributions $\mathbb{P}_{1}$ and $\mathbb{P}_{2}$ on a measurable space $(\Omega, \mathcal{G})$ is defined as
$$
\mathrm{KL}\left(\mathbb{P}_{1}, \mathbb{P}_{2}\right) \triangleq \int_{\Omega} \log \left(\frac{\mathrm{d} \mathbb{P}_{1}}{\mathrm{~d} \mathbb{P}_{2}}(\omega)\right) \mathrm{d} \mathbb{P}_{1}(\omega),
$$
if $\mathbb{P}_{1} \ll \mathbb{P}_{2}$ and $+\infty$ otherwise. For Bernoulli distributions, we define $\forall(p, q) \in[0,1]^{2}$,
$$\mathrm{kl}(p, q) \triangleq \mathrm{KL}(\mathcal{B}(p), \mathcal{B}(q))=p \log \left(\frac{p}{q}\right)+(1-p) \log \left(\frac{1-p}{1-q}\right).$$
\end{definition}

\newcommand{\keylemma}{
Let $\mathcal{M}$ and $\mathcal{M}^{\prime}$ be two MDPs that are identical except for their transition probabilities, denoted by $p$ and $p^{\prime}$, respectively. Assume that we have $\forall(s, a)$, $p(\cdot \mid s, a) \ll p^{\prime}(\cdot \mid s, a) .$ Then, for any stopping time $\tau$ with respect to $\left(\mathcal{F}^{t}\right)_{t \geq 1}$ that satisfies $\mathbb{P}_{\mathcal{M}}[\tau<\infty]=1$,
$$
\mathrm{KL}\left(\mathbb{P}_{\mathcal{M}}^{I^{\tau}}, \mathbb{P}_{\mathcal{M}^{\prime}}^{I^{\tau}}\right)=\sum_{s \in \mathcal{S}} \sum_{a \in \mathcal{A}}  \mathbb{E}_{\mathcal{M}}\left[N_{ s, a}^{\tau}\right] \mathrm{KL}\left(p(\cdot \mid s, a), p^{\prime}(\cdot \mid s, a)\right),
$$
where $N_{s, a}^{\tau} \triangleq \sum_{t=1}^{\tau} \mathbbm{1}\left\{\left(S^{t}, A^{t}\right)=(s, a)\right\}$ and $I^{\tau}$ is the
random vector representing the history of $\tau$ samples.
}
\begin{lemma}[Lemma 5, \cite{domingues2021episodic}, modified]
\label{keylemma}
\keylemma
\end{lemma}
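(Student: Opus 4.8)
The plan is to collapse the trajectory-level divergence into a sum of per-transition divergences, using that $\mathcal{M}$ and $\mathcal{M}'$ share everything except the kernel and that a Wald-type (optional-stopping) identity handles the random number of samples $\tau$. Throughout I adopt the convention, consistent with the description that "$I^\tau$ is the history of $\tau$ samples" and with the definition $N_{s,a}^\tau=\sum_{t=1}^\tau \mathds{I}\{(S^t,A^t)=(s,a)\}$, that $\tau$ counts transition samples, so that the likelihood of a realized history factorizes as $\mathbb{P}_{\mathcal{M}}^{I^\tau}[i^\tau]=\mathds{I}(s^1=s_0)\prod_{t=1}^{\tau}\pi^t(a^t\mid i^t)\,p(s^{t+1}\mid s^t,a^t)$, and identically for $\mathcal{M}'$ with $p'$ replacing $p$. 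The hypothesis $p(\cdot\mid s,a)\ll p'(\cdot\mid s,a)$ for all $(s,a)$ yields $\mathbb{P}_{\mathcal{M}}^{I^\tau}\ll\mathbb{P}_{\mathcal{M}'}^{I^\tau}$, so $\mathrm{KL}(\mathbb{P}_{\mathcal{M}}^{I^\tau},\mathbb{P}_{\mathcal{M}'}^{I^\tau})=\mathbb{E}_{\mathcal{M}}\big[\log(\mathrm{d}\mathbb{P}_{\mathcal{M}}^{I^\tau}/\mathrm{d}\mathbb{P}_{\mathcal{M}'}^{I^\tau})(I^\tau)\big]$, and the assumption $\mathbb{P}_{\mathcal{M}}[\tau<\infty]=1$ guarantees this log-density is an almost-surely finite sum.

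First I would cancel the common factors. Since the policy family $\boldsymbol{\pi}=(\pi^t)$ and the initial-state indicator are identical under both measures, each $\pi^t(a^t\mid i^t)$ and the factor $\mathds{I}(s^1=s_0)$ cancels between numerator and denominator, leaving only transition terms:
$$\log\frac{\mathrm{d}\mathbb{P}_{\mathcal{M}}^{I^\tau}}{\mathrm{d}\mathbb{P}_{\mathcal{M}'}^{I^\tau}}(I^\tau)=\sum_{t=1}^{\tau}Z_t=\sum_{t=1}^{\infty}\mathds{I}\{\tau\ge t\}\,Z_t,\qquad Z_t:=\log\frac{p(S^{t+1}\mid S^t,A^t)}{p'(S^{t+1}\mid S^t,A^t)}.$$
Taking $\mathbb{E}_{\mathcal{M}}$, the task reduces to computing $\mathbb{E}_{\mathcal{M}}[\mathds{I}\{\tau\ge t\}\,Z_t]$ term by term.

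The heart of the argument is the optional-stopping step. I would condition on $\mathcal{G}^t:=\sigma(I^t,A^t)$, the information available immediately before the outcome $S^{t+1}$ of the $t$-th sample is revealed. Because the decision to draw the $t$-th sample precedes its outcome, the event $\{\tau\ge t\}=\{\tau\le t-1\}^c$ is $\mathcal{G}^t$-measurable and factors out of the conditional expectation; and by the Ionescu--Tulcea construction of $\mathbb{P}_{\mathcal{M}}$, the conditional law of $S^{t+1}$ given $\mathcal{G}^t$ is exactly $p(\cdot\mid S^t,A^t)$, so $\mathbb{E}_{\mathcal{M}}[Z_t\mid\mathcal{G}^t]=\mathrm{KL}\big(p(\cdot\mid S^t,A^t),p'(\cdot\mid S^t,A^t)\big)$. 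The tower property then gives $\mathbb{E}_{\mathcal{M}}[\mathds{I}\{\tau\ge t\}Z_t]=\mathbb{E}_{\mathcal{M}}\big[\mathds{I}\{\tau\ge t\}\,\mathrm{KL}(p(\cdot\mid S^t,A^t),p'(\cdot\mid S^t,A^t))\big]$. Summing over $t$ and grouping the terms according to the value $(s,a)$ taken by $(S^t,A^t)$ converts $\sum_{t\ge1}\mathds{I}\{\tau\ge t\}\mathds{I}\{(S^t,A^t)=(s,a)\}$ into $N_{s,a}^\tau$, producing the claimed identity $\sum_{s,a}\mathbb{E}_{\mathcal{M}}[N_{s,a}^\tau]\,\mathrm{KL}(p(\cdot\mid s,a),p'(\cdot\mid s,a))$.

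It remains to justify the interchanges, and here the fact that every per-step divergence $\mathrm{KL}(p(\cdot\mid s,a),p'(\cdot\mid s,a))\ge 0$ is decisive: by the monotone convergence theorem (equivalently Tonelli for the nonnegative double sum and integral), swapping $\sum_t$ with $\mathbb{E}_{\mathcal{M}}$ and regrouping by $(s,a)$ are both valid unconditionally, and the asserted equality holds even when both sides equal $+\infty$. I expect the main obstacle to be the measurability bookkeeping in the stopping step: one must verify that $\{\tau\ge t\}$ lies in the pre-outcome $\sigma$-algebra $\mathcal{G}^t$ (so it can be pulled out before the randomness $S^{t+1}$ enters) while the conditional distribution of $S^{t+1}$ given $\mathcal{G}^t$ is still the clean kernel $p(\cdot\mid S^t,A^t)$; correctly tracking the "observe state, choose action, observe next state" ordering relative to the filtration is exactly what distinguishes this from the routine fixed-horizon chain-rule identity, and is where the stopping-time hypothesis $\mathbb{P}_{\mathcal{M}}[\tau<\infty]=1$ is used.
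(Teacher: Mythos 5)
Your proposal is correct and takes essentially the same route as the proof behind this lemma: the paper itself gives no proof and simply cites Lemma 5 of \citet{domingues2021episodic}, whose argument is exactly yours --- cancel the common policy factors in the likelihood ratio, reduce $\log(\mathrm{d}\mathbb{P}_{\mathcal{M}}^{I^{\tau}}/\mathrm{d}\mathbb{P}_{\mathcal{M}'}^{I^{\tau}})$ to $\sum_{t\ge 1}\mathds{I}\{\tau\ge t\}Z_t$, condition on the pre-outcome $\sigma$-algebra so that $\mathbb{E}_{\mathcal{M}}[Z_t\mid \mathcal{G}^t]$ becomes the per-step KL while $\{\tau\ge t\}$ factors out by the stopping-time property, and regroup by $(s,a)$ via monotone convergence on the nonnegative conditional terms. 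Your explicit bookkeeping of the sample-counting convention (so that the transitions in the log-likelihood ratio line up with $N_{s,a}^{\tau}=\sum_{t=1}^{\tau}\mathds{I}\{(S^t,A^t)=(s,a)\}$) is precisely the off-by-one adjustment that the word ``modified'' in the paper's statement refers to, and you handled it correctly.
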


\newcommand{\randomz}{
Consider a measurable space $(\Omega, \mathcal{F})$ equipped with two distributions $\mathbb{P}_{1}$ and $\mathbb{P}_{2}$. For any $\mathcal{F}$-measurable function $Z: \Omega \rightarrow[0,1]$, we have
$$
\mathrm{KL}\left(\mathbb{P}_{1}, \mathbb{P}_{2}\right) \geq \mathrm{kl}\left(\mathbb{E}_{1}[Z], \mathbb{E}_{2}[Z]\right),
$$
where $\mathbb{E}_{1}$ and $\mathbb{E}_{2}$ are the expectations under $\mathbb{P}_{1}$ and $\mathbb{P}_{2}$ respectively.
}
\begin{lemma}[Lemma 1, \cite{garivier2019explore}]
\label{randomz}
\randomz
\end{lemma}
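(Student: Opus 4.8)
The plan is to deduce the inequality from the Gibbs (Donsker--Varadhan) variational lower bound for relative entropy, applied only to the one-parameter family of test functions $g = \lambda Z$, and then to optimize over $\lambda \in \mathbb{R}$. I may assume $\mathbb{P}_1 \ll \mathbb{P}_2$, since otherwise $\mathrm{KL}(\mathbb{P}_1,\mathbb{P}_2) = +\infty$ and there is nothing to prove. Write $p := \mathbb{E}_1[Z] \in [0,1]$ and $q := \mathbb{E}_2[Z] \in [0,1]$. Absolute continuity rules out the degenerate situations in which $q \in \{0,1\}$ but $p \neq q$ (for instance, if $q=0$ then $Z=0$ holds $\mathbb{P}_2$-a.s., hence $\mathbb{P}_1$-a.s., so $p=0$), so $\mathrm{kl}(p,q)$ is well defined.

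The first step is the variational bound: for every $\lambda \in \mathbb{R}$,
\[
\mathrm{KL}(\mathbb{P}_1,\mathbb{P}_2) \;\geq\; \lambda\, \mathbb{E}_1[Z] - \log \mathbb{E}_2\!\big[e^{\lambda Z}\big].
\]
To see this, introduce the tilted measure $\mathrm{d}\mathbb{Q} := e^{\lambda Z}\,\mathrm{d}\mathbb{P}_2 \big/ \mathbb{E}_2[e^{\lambda Z}]$. Expanding the nonnegative quantity $\mathrm{KL}(\mathbb{P}_1,\mathbb{Q}) = \mathbb{E}_1\big[\log(\mathrm{d}\mathbb{P}_1/\mathrm{d}\mathbb{Q})\big]$ via $\mathrm{d}\mathbb{P}_1/\mathrm{d}\mathbb{Q} = (\mathrm{d}\mathbb{P}_1/\mathrm{d}\mathbb{P}_2)\,\mathbb{E}_2[e^{\lambda Z}]/e^{\lambda Z}$ gives $\mathrm{KL}(\mathbb{P}_1,\mathbb{Q}) = \mathrm{KL}(\mathbb{P}_1,\mathbb{P}_2) - \lambda\mathbb{E}_1[Z] + \log \mathbb{E}_2[e^{\lambda Z}]$, and the displayed bound follows from $\mathrm{KL}(\mathbb{P}_1,\mathbb{Q}) \geq 0$.

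The second step replaces the moment generating function by a Bernoulli one. Since $Z$ takes values in $[0,1]$ and $t \mapsto e^{\lambda t}$ is convex, $e^{\lambda Z} \leq (1-Z) + Z e^{\lambda}$ pointwise; taking $\mathbb{E}_2$ yields $\mathbb{E}_2[e^{\lambda Z}] \leq 1 - q + q e^{\lambda}$. Because $\log$ is increasing, combining with the variational bound gives, for all $\lambda \in \mathbb{R}$,
\[
\mathrm{KL}(\mathbb{P}_1,\mathbb{P}_2) \;\geq\; \lambda p - \log\!\big(1 - q + q e^{\lambda}\big).
\]
It then remains to maximize the right-hand side over $\lambda$. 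When $p,q \in (0,1)$ the map is strictly concave in $\lambda$, and the first-order condition $\frac{q e^{\lambda}}{1 - q + q e^{\lambda}} = p$ is solved by $\lambda^\star = \log\frac{p(1-q)}{q(1-p)}$; substituting back and simplifying gives the value $p\log\frac{p}{q} + (1-p)\log\frac{1-p}{1-q} = \mathrm{kl}(p,q)$, which proves the claim. The boundary cases $p \in \{0,1\}$ are handled by continuity (or by letting $\lambda \to \pm\infty$).

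I do not expect a serious obstacle: the lemma is a data-processing statement and the only genuine computation is verifying the Legendre identity $\sup_{\lambda}\{\lambda p - \log(1-q+qe^{\lambda})\} = \mathrm{kl}(p,q)$, together with careful treatment of the degenerate $\{0,1\}$ cases. As a conceptual cross-check, the same result follows in one line from the data-processing inequality: the randomized map sending $\omega$ to a $\mathrm{Bernoulli}(Z(\omega))$ outcome pushes $\mathbb{P}_1,\mathbb{P}_2$ forward to $\mathcal{B}(p),\mathcal{B}(q)$, and applying a Markov kernel cannot increase relative entropy, so $\mathrm{KL}(\mathbb{P}_1,\mathbb{P}_2) \geq \mathrm{KL}(\mathcal{B}(p),\mathcal{B}(q)) = \mathrm{kl}(p,q)$.
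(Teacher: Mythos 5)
Your proof is correct. One thing to flag first: the paper itself gives no proof of this statement --- it is imported verbatim as Lemma~1 of \citet{garivier2019explore} --- so the relevant comparison is with the original source. There, the lemma is proved by exactly the data-processing argument you relegate to a cross-check: augment the space with an auxiliary $\mathrm{Bernoulli}(Z(\omega))$ outcome, observe the pushforwards are $\mathcal{B}(\mathbb{E}_1[Z])$ and $\mathcal{B}(\mathbb{E}_2[Z])$, and invoke the contraction of relative entropy under Markov kernels. Your primary route is genuinely different: Donsker--Varadhan restricted to the family $\lambda Z$, the pointwise convexity bound $e^{\lambda Z} \leq 1 - Z + Z e^{\lambda}$ (the one place where $Z \in [0,1]$ is used, matching the hypothesis exactly), and the Legendre identity $\sup_{\lambda}\{\lambda p - \log(1-q+qe^{\lambda})\} = \mathrm{kl}(p,q)$, whose stationary point $\lambda^{\star} = \log\frac{p(1-q)}{q(1-p)}$ you compute correctly. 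The details check out: the tilting derivation has no integrability issues since $Z$ is bounded, $\mathbb{Q} \sim \mathbb{P}_2$ preserves $\mathbb{P}_1 \ll \mathbb{Q}$, and your treatment of the degenerate cases is right ($q \in \{0,1\}$ forces $p = q$ under absolute continuity, and $p \in \{0,1\}$ with $q \in (0,1)$ is recovered in the limit $\lambda \to \pm\infty$). As for what each approach buys: yours is self-contained and elementary, using only nonnegativity of KL against an explicit exponential tilt, and it exhibits the optimal test function explicitly; the source's data-processing proof is a one-liner but treats the contraction property as a black box. For the purposes of this paper either suffices, since the lemma is only consumed in the lower-bound argument of Theorem~\ref{lowerbound} via $\mathrm{KL}(\mathbb{P}_{0}^{I^{\tau}}, \mathbb{P}_{(s^{*},a^{*})}^{I^{\tau}}) \geq \mathrm{kl}(\mathbb{E}_{0}[Z], \mathbb{E}_{(s^{*},a^{*})}[Z])$, where only the inequality itself is needed.
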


\newcommand{\klineq}{
For any $p,q \in (0,\frac{1}{2}]$, $ \mathrm{kl}(p,q) \leq {2(p-q)^2}/{q}$.
}
\begin{lemma}
\label{klineq}
\klineq
\end{lemma}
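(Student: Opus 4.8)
The plan is to prove this elementary bound by applying the scalar inequality $\log x \le x - 1$ (valid for all $x > 0$) to each of the two logarithmic terms in the definition of $\mathrm{kl}(p,q)$, and then to invoke the hypothesis $q \le \tfrac12$ only at the very last step.

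First I would expand $\mathrm{kl}(p,q) = p\log\frac{p}{q} + (1-p)\log\frac{1-p}{1-q}$ and bound each logarithm separately. Since $\log\frac{p}{q} \le \frac{p}{q} - 1 = \frac{p-q}{q}$ and $\log\frac{1-p}{1-q} \le \frac{1-p}{1-q} - 1 = \frac{q-p}{1-q}$, I obtain
\[
\mathrm{kl}(p,q) \;\le\; p\cdot\frac{p-q}{q} + (1-p)\cdot\frac{q-p}{1-q} \;=\; (p-q)\left(\frac{p}{q} - \frac{1-p}{1-q}\right).
\]
The next step is a one-line algebraic simplification: combining the fractions gives $\frac{p}{q} - \frac{1-p}{1-q} = \frac{p(1-q) - q(1-p)}{q(1-q)} = \frac{p-q}{q(1-q)}$, so that
\[
\mathrm{kl}(p,q) \;\le\; \frac{(p-q)^2}{q(1-q)}.
\]

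Finally I would use the hypothesis $q \in (0,\tfrac12]$, which gives $1-q \ge \tfrac12$ and hence $\frac{1}{1-q} \le 2$; substituting this into the previous display yields $\mathrm{kl}(p,q) \le \frac{2(p-q)^2}{q}$, as claimed.

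There is essentially no serious obstacle here; the only points requiring care are ensuring that the arguments of all logarithms are strictly positive (guaranteed since $p,q \in (0,1)$) so that $\log x \le x-1$ legitimately applies, and carrying out the fraction combination cleanly so the numerator collapses to $p-q$. It is worth remarking that the argument uses only $q \le \tfrac12$ and never the constraint $p \le \tfrac12$: the intermediate bound $\mathrm{kl}(p,q) \le \frac{(p-q)^2}{q(1-q)}$ holds for all $p,q \in (0,1)$, and the constant factor $2$ is exactly what absorbs the loss incurred by replacing $1-q$ with its lower bound $\tfrac12$.
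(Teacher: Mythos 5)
Your proof is correct, and every step checks out: the bound $\log x \le x-1$ applies legitimately since $p,q\in(0,1)$ keeps all logarithm arguments positive, the algebraic collapse to $\mathrm{kl}(p,q)\le \frac{(p-q)^2}{q(1-q)}$ is right, and $q\le\tfrac12$ then gives the factor $2$. The paper states this lemma without proof, so there is no argument to compare against; yours is the standard route (bounding KL by the chi-square-type quantity $\frac{(p-q)^2}{q(1-q)}$), and your closing remark is accurate — the hypothesis $p\le\tfrac12$ is indeed never needed, as the intermediate bound holds for all $p,q\in(0,1)$.
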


\newcommand{\klineqtwo}{
For any $p,q \in [0,1]$,  $\mathrm{kl}(p, q) \geq -(1-p) \log \left({1-q}\right)-\log (2)$.
}
\begin{lemma}[Lemma 15, \citep{domingues2021episodic}]
\label{klineqtwo}
\klineqtwo
\end{lemma}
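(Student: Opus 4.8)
The plan is to reduce the claimed bound to the elementary fact that the binary entropy is bounded above by $\log 2$. Recall that
\[
\mathrm{kl}(p, q) = p \log \frac{p}{q} + (1-p) \log \frac{1-p}{1-q}.
\]
The crucial observation is that the term $-(1-p)\log(1-q)$ appearing on the right-hand side of the target inequality is already a summand (up to sign) inside $\mathrm{kl}(p,q)$: expanding $(1-p)\log\frac{1-p}{1-q} = (1-p)\log(1-p) - (1-p)\log(1-q)$ and cancelling, the desired inequality $\mathrm{kl}(p,q) \geq -(1-p)\log(1-q) - \log 2$ is equivalent to
\[
p \log \frac{p}{q} + (1-p)\log(1-p) \;\geq\; -\log 2 .
\]
So it suffices to establish this simplified form.

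Next I would discard the dependence on $q$ using monotonicity of $\log$. Since $q \in [0,1]$ we have $\log q \leq 0$, hence $p\log\frac{p}{q} = p\log p - p\log q \geq p\log p$. Substituting this into the left-hand side gives the lower bound
\[
p \log \frac{p}{q} + (1-p)\log(1-p) \;\geq\; p\log p + (1-p)\log(1-p) \;=\; -H(p),
\]
where $H(p) := -p\log p - (1-p)\log(1-p)$ is the binary entropy. The final step is the standard bound $H(p) \leq \log 2$ for all $p \in [0,1]$, which I would justify in one line by noting that $H$ is concave with $H'(p) = \log\frac{1-p}{p}$ vanishing only at $p = 1/2$, where $H(1/2) = \log 2$; thus $p=1/2$ is the global maximizer. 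Combining, $-H(p) \geq -\log 2$, which chains with the previous display to finish the proof.

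There is no genuine analytic obstacle here; the argument is a two-step reduction (cancel the shared $\log(1-q)$ term, then drop $-p\log q \geq 0$) followed by the entropy bound. The only point requiring a word of care is the behaviour at the boundary values $p,q \in \{0,1\}$, where individual logarithmic terms vanish or diverge. These are handled by the usual conventions $0\log 0 = 0$ and $\mathrm{kl}(p,q) = +\infty$ whenever the support condition fails; in each such degenerate case one checks directly that the inequality holds (for instance, $q=1$ with $p<1$ forces both sides to $+\infty$, while $p=q$ reduces to $0 \geq -\log 2$), so I would dispatch the interior case $p,q \in (0,1)$ by the computation above and remark that the boundary cases follow from these conventions.
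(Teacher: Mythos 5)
Your proof is correct and is essentially the same argument as the proof of the cited Lemma 15 in \citet{domingues2021episodic} (which this paper invokes without reproving): writing $\mathrm{kl}(p,q) = -H(p) + p\log(1/q) + (1-p)\log\left(\frac{1}{1-q}\right)$ with $H$ the binary entropy, one drops the nonnegative term $p\log(1/q)$ and bounds $H(p) \leq \log 2$, exactly as you do. Your treatment of the boundary cases $p,q \in \{0,1\}$ via the conventions $0\log 0 = 0$ and $\mathrm{kl}(p,q)=+\infty$ off-support is also sound, so nothing is missing.
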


\newcommand{\cMsa}{\cM_{(s,a)}}
\newcommand{\psa}{p_{s,a}}
\newcommand{\gsa}{g_s^a}
\newcommand{\cSp}{\cS'}
\newcommand{\cSd}{\cS^\dagger}
\newcommand{\cAp}{\cA'}
\newcommand{\cD}{\mathcal{D}}
\newcommand{\cMd}{\mathcal{M}_d}

Now we construct a family of adversarial MDPs to obtain the lower bound of sample complexity. 

\para{The construction of hard MDPs with general $S_L$} Now we fix $L, S, A, S_L, \varepsilon, \cmin$ such that $L > 4$, $S > 8$, $A > 4$, $4 \leq S_L \leq \min\{(A-1)^{\lfloor\frac{L}{2}\rfloor}, S \}$, $0 < \varepsilon < \frac{1}{4}$, and $0 < \cmin \leq 1$. 

We first construct an MDP $\cM_{0}' = \langle \cS, \cA, p_0', c', s_0 \rangle$ with $ |\cS| =  S_L - 1$ states and $ |\cA| =  A$ actions ($\cM_{0}'$ does not contain the goal state $g$ in Fig.~\ref{FigureLB2}). As is illustrated in  Fig.~\ref{FigureLB2}, the construction on $\cM_0'$ follows a tree structure. This is inspired from \citep{domingues2021episodic}. We denote $\cS'$ as all the leaf states, and for any $s \notin \cS'$ and $a \in \cA$ ($a \neq \RESET$), we set $c(s,a) = 1$ with probability $1$, and the transition $p_0'(\cdot | s,a)$ is deterministic, i.e., taking any action $a$ at a non-leaf node $s$ will transit to one of its son $s' \in \cS$ with probability $1$. 

As $S_L \leq (A-1)^{\lfloor\frac{L}{2}\rfloor}$, there exists a tree structure with depth $d_0 \leq L/2$ ($d_0 \in \mathbb{N}$), such that the number of leaves $|\cS'| \geq \frac{S_L}{2}$, and $V^*_{\cM_0', s}(s_0) = d_0$ for all $s \in \cS'$, i.e., all the leaf nodes can be reached within $d_0$ steps from $s_0$. 
Hence in MDP $\cM_0'$, all the states in $\cS$ are incrementally $d_0$-controllable. And we denote $d_1 = L - d_0$.

\begin{figure}[ht]
\centering

\begin{tikzpicture}[->,>=stealth',shorten >=1pt,auto,node distance=2.8cm,
semithick]
\tikzstyle{every state}=[fill=white,text=black]

\node[state]                (A)   {$s_0$};


\node[state,fill=black]                (N1)[below left=1cm and 3cm of A]    {}; 
\node[state,fill=black]                (N2)[below right=1cm  and 3cm of A]    {}; 

\path 
	(A) edge  (N1)
	(A) edge  (N2);

\node[state,fill=white]     (N3)[below  left of=N1]    {$s_1$}; 
\node[state,fill=white]     (N4)[below  right of=N1]    {$s_2$}; 

\path 
	(N1) edge  (N3)
	(N1) edge  (N4);

\node[state,fill=white]     (N5)[below  left  of=N2]    {$s_3$}; 
\node[state,fill=white]     (N6)[below  right of=N2]    {$s_4$}; 

\path 
	(N2) edge (N5)
	(N2) edge (N6);

	
\node[state, fill=white]    (GOOD) [below = 5cm of A] {$g$};


\path
	(N3)  edge[dashed]   node[above left=0.5cm and 0.25cm]{$\frac{ \cmin}{(1+6\varepsilon)d_1}$} (GOOD)
	
	(N3) edge [loop below, dashed] node{$1-\frac{\cmin}{(1+6\varepsilon)d_1}$} (N3)
	
	
	(N4)  edge[dashed]   node[below]{} (GOOD)
	(N4)  edge [loop below, dashed] node{} (N4)
	
	(N5)  edge[dashed]   node[below]{} (GOOD)
	(N5)  edge [loop below, dashed] node{} (N5)
	
	(N6)  edge[dashed]   node[below]{} (GOOD)
	(N6)  edge [loop below, dashed] node{} (N6)
	;

\path 
	(N4)  edge[blue, bend left]   node[above =0.3cm ]{$\textcolor{blue}{\frac{\cmin}{d_1}} $} (GOOD)
	(N4) edge [blue, loop above] node{$1-\frac{\cmin}{d_1}$} (N4)
	
	(GOOD)  edge [loop below] (GOOD)  
	;
\end{tikzpicture}
\caption{Illustration of the hard MDP with general $S_L$.}
\label{FigureLB2}
\end{figure}
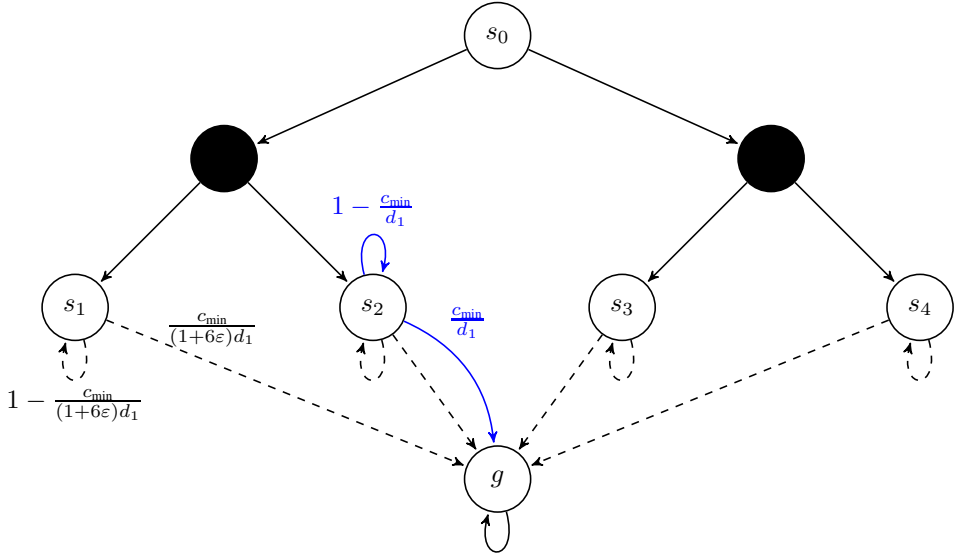

Then we construct the MDP $\cM_0 = \langle \cS \cup \{g\},\cA,p_0,c,s_0 \rangle$ based on $\cM_0'$ by adding a new state $g$. The state $g$ can only be reached from all the leaf nodes $s \in \cS'$. And for any leaf state $s \in \cS'$ and any action $a \in \cA$ ($a \neq \RESET$), we set $c(s,a) = \cmin$, and
\begin{align*}
p_0(g | s,a) = \frac{ \cmin}{(1+6\varepsilon)d_1}, \quad
p_0(s | s,a) = 1 - \frac{ \cmin}{(1+6\varepsilon)d_1}.
\end{align*}

Finally, we set $p_0 (g | g,a) = 1$ for any action $a \neq \RESET$. 

In this way, we have $V_{\cM_0,s}^*(s_0) = d_0$ for any $s \in \cS'$, and $V_{\cM_0,g}^*(s_0) = d_0 + (1+6\varepsilon)d_1 > (1 + \varepsilon) L$. Hence $g \notin \cSL$ for MDP $\cM_0$. Also, we note that in MDP $\cM_0$, with probability $1$, the goal state $g$ cannot be reached from $s_0$ within $d_0$ steps.

Now we will construct other adversarial MDPs based on $\cM_0$. We choose any $(s^*,a^*) \in \cS' \times \cA$ ($a^* \neq \RESET$), and we define the MDP $\cM_{(s^*,a^*)} = \langle \cS \cup \{g\},\cA,p_{(s^*,a^*)},c,s_0 \rangle$ by slightly increasing $p_0(g| s^*, a^*)$, i.e., we set
\begin{align*}
p_{(s^*,a^*)}(g | s^*,a^*) = \frac{\cmin}{d_1}, \quad
p_{(s^*,a^*)}(s^* | s^*,a^*) = 1 - \frac{\cmin}{d_1}.
\end{align*}
In this way, we have $V_{\cM_0,g}^*(s_0) = d_0 + d_1 = L$. Hence $g \in \cSL$ for MDP $\cM_{(s^*,a^*)}$.

Finally, we define the family of our adversarial MDPs as $\{\cM_0\} \cup \{\cMsa\}_{(s,a)\in\cSp\times\cA}$. We note that for each MDP $\cMsa$, its $|\cSL| = S_L$, and it satisfies Asmp.\,\ref{assa}. Also, for the MDP $\cM_0$, its $|\cSL| = S_L - 1$, and it also satisfies Asmp.\,\ref{assa}. Thus the family is valid for the AX problem.

We note that in MDP $\cM_{(s^*,a^*)}$, for any Markov policy $\pi$,
\begin{align*}
    V_{\cM_{(s^*,a^*)},g}^{\pi}(s_0) &= \mathbb{E}_{\pi, \cM_{(s,a)}}\left[\sum_{t=1}^{t_g^{\pi}\left(s \right)} c_{t}\left(s_{t}, \pi\left(s_{t}\right)\right)  \right] \\
    &= \mathbb{E}_{\pi, \cM_{(s^*,a^*)}}\left[\sum_{t=1}^{t_g^{\pi}\left(s \right)} c_{t}\left(s_{t}, \pi\left(s_{t}\right)\right) \mid (s_{d_0}, a_{d_0}) = (s^*,a^*) \right] \mathbb{P}_{\pi, \cM_{(s^*,a^*)}}[(s_{d_0}, a_{d_0}) = (s^*,a^*)] \\
    &+ \mathbb{E}_{\pi, \cM_{(s^*,a^*)}}\left[\sum_{t=1}^{t_g^{\pi}\left(s \right)} c_{t}\left(s_{t}, \pi\left(s_{t}\right)\right) \mid (s_{d_0}, a_{d_0}) \neq (s^*,a^*) \right] \mathbb{P}_{\pi, \cM_{(s^*,a^*)}}[(s_{d_0}, a_{d_0}) \neq (s^*,a^*)] 
\end{align*}

With probability $1$, we need at least $d_0$ steps to reach any of the leaf state.  And the expected cost to reach $g$ from state-action pair $(s^*, a^*)$ is $d_1$.

Hence we have $\mathbb{E}_{\pi, \cM_{(s^*,a^*)}}\left[\sum_{t=1}^{t_g^{\pi}\left(s \right)} c_{t}\left(s_{t}, \pi\left(s_{t}\right)\right) \mid (s_{d_0}, a_{d_0}) = (s^*,a^*) \right] \geq d_0 + d_1 = L$.

Also, when $(s_{d_0}, a_{d_0}) \neq (s^*, a^*)$, the expected cost to reach $g$ from state-action pair $(s_{d_0}, a_{d_0})$ is at least $(1+6\varepsilon)d_1$.

Hence we have $\mathbb{E}_{\pi, \cM_{(s^*,a^*)}}\left[\sum_{t=1}^{t_g^{\pi}\left(s \right)} c_{t}\left(s_{t}, \pi\left(s_{t}\right)\right) \mid (s_{d_0}, a_{d_0}) \neq (s^*,a^*) \right] \geq d_0 + (1+6\varepsilon)d_1 \geq (1 + 3\varepsilon) L$.

Therefore, if $V_{\cM_{(s^*,a^*)},g}^{\pi}(s_0) \leq (1+\varepsilon) L$, we have $\mathbb{P}_{\pi, \cM_{(s^*,a^*)}}[(s_{d_0}, a_{d_0}) = (s^*,a^*)] \geq 2/3$. We will use it in our proof of Thm.\,\ref{lowerbound}.

Now we give our proof of Thm.\,\ref{lowerbound} through the adversarial family of MDPs.
Here we use the techniques of Thm. 7 in  \citep{tarbouriech2020improved}. 

\begin{proof}
We denote by $\mathbb{P}_{\left( s^{*}, a^{*}\right)} \triangleq \mathbb{P}_{\algo, \mathcal{M}_{\left( s^{*}, a^{*}\right)}}$ and
$\mathbb{E}_{\left( s^{*}, a^{*}\right)} \triangleq \mathbb{E}_{\algo, \mathcal{M}_{\left( s^{*}, a^{*}\right)}}$ the probability measure and expectation in the MDP $\mathcal{M}_{\left( s^{*}, a^{*}\right)}$ by
following $\algo$ and by $\mathbb{P}_{0}$ and $\mathbb{E}_{0}$ the corresponding operators in the $\operatorname{MDP} \mathcal{M}_{0}$. We fix any algorithm $(\algo, \tau, \cK, \{\pi_s\}_{s \in \cK})$ that solves the AX problem. We will prove that when working on the MDP $\cM_0$, the algorithm will cost at least $\Omega(\frac{L S_L A}{  \cmin\varepsilon^2} \log \frac{1}{\delta})$ samples in expectation, i.e. $$\mathbb{E}_{0}[\tau] = \Omega(\frac{L S_L A}{  \cmin\varepsilon^2} \log \frac{1}{\delta}),$$ which yields that the lower bound of the total cost is $\Omega(\frac{L S_L A}{\varepsilon^2} \log \frac{1}{\delta})$.


Now we fix the state-action pair $(s^{*},a^*) \in \cSp \times \cA$ ($a^* \neq \RESET$). Also, we denote the random variable $N_{\left(s, a\right)}^{\tau}$ as the number of samples that the algorithm takes at the state-action pair $(s,a) \in \cS \times \cA$.  For any $\mathcal{F}^{\tau}$-measurable random variable $Z$ taking values in $[0,1]$, we have
\begin{align*}
&\quad \mathbb{E}_{0}\left[N_{\left(s^{*}, a^{*}\right)}^{\tau}\right] \frac{ 144\cmin\varepsilon^{2}}{L} \\
&\mygineeqa \  \mathbb{E}_{0}\left[N_{\left(s^{*}, a^{*}\right)}^{\tau}\right] \mathrm{kl}\left( \frac{\cmin}{(1 + 6\varepsilon) d_1}, \frac{\cmin}{d_1}\right) \\
&\myeqb \ \mathrm{KL}\left(\mathbb{P}_{0}^{I^{\tau}}, \mathbb{P}_{\left(s^{*}, a^{*}\right)}^{I^{\tau}}\right) \\
&\mygineeqc{} \    \operatorname{kl}\left(\mathbb{E}_{0}[Z], \mathbb{E}_{\left( s^{*}, a^{*}\right)}[Z]\right),
\end{align*}
 where (a) uses Lemma \ref{klineq} and $ d_1 \geq {L}/{2}$; (b) uses Lemma \ref{keylemma}; (c) uses Lemma \ref{randomz}.


For any $(s,a) \in \cSp \times \cA$, we define the event $Z_{s,a} = \mathbbm{1}\{\text{The algorithm's output satisfies } g \in \cK \text{ and } V^{\pi_g}_{\cM_{(s,a)},g}(s_0) \leq (1 + \varepsilon) L\}$. And we set the event $Z = Z_{s^*,a^*}$. We note that $Z_{s,a}$ can be viewed as a random event on distribution  $\mathbb{P}_{(s,a)}$, and can also be viewed as a random event on distribution $ \mathbb{P}_{0}$ (i.e., $\mathbb{P}_{\algo, \cM_0}$).

First we focus on distribution $\mathbb{P}_{\left( s^{*}, a^{*}\right)}$. We observe that as the algorithm $(\algo, \tau, \cK, \{\pi_s\}_{s \in \cK})$ solves the AX problem, when working on the MDP $\cM_{(s^*,a^*)}$, with probability at least $1-\delta$, its output should satisfy $g \in \cK$ and the expected cost of the policy $\pi_{g}$ to reach $g$ from state $s_0$ is no more than $(1 + \varepsilon) L$. Therefore, for any $(s^{*}, a^{*}) \in \cSp \times \cA$ ($a^* \neq \RESET$), we have $$\mathbb{P}_{\left( s^{*}, a^{*}\right)} [Z_{s^*,a^*}] \geq 1-\delta.$$

Then we focus on probability distribution $\mathbb{P}_0$ (i.e., $\mathbb{P}_{\algo, \cM_0}$). We recall that the event $Z_{s,a}$ implies $\mathbb{P}_{\pi_g, \cM_0}[(s_{d_0}, a_{d_0}) = (s,a)] \geq 2/3$. And for any two distinct state-action pairs $(s,a)$ and $(s',a')$, the event $\mathbb{P}_{\pi_g, \cM_0}[(s_{d_0}, a_{d_0}) = (s,a)] \geq 2/3$ and the event $\mathbb{P}_{\pi_g, \cM_0}[(s_{d_0}, a_{d_0}) = (s',a')] \geq 2/3$ are mutually exclusive. Hence $Z_{s,a}$ and $Z_{s',a'}$ are mutually exclusive on $\mathbb{P}_0$, and we have 
$$\sum_{(s,a) \in \cS' \times \cA}\mathbb{P}_{0}[Z_{s,a}] \leq 1.$$

We recall that we set $Z = Z_{s^*,a^*}$, and we can obtain
\begin{align*}
\mathrm{kl}\left(\mathbb{E}_{0}[Z], \mathbb{E}_{\left(s^{*}, a^{*}\right)}[Z]\right)&=\mathrm{kl}\left(\mathbb{P}_{0}\left[  Z_{s^*,a^*} \right], \mathbb{P}_{\left( s^{*}, a^{*}\right)}\left[  Z_{s^*,a^*} \right]\right)\\
&\mygineeqa \left(1-\mathbb{P}_{0}\left[ Z_{s^*,a^*} \right]\right) \log \left(\frac{1}{1-\mathbb{P}_{\left( s^{*}, a^{*}\right)}\left[ Z_{s^*,a^*} \right]}\right)-\log (2)\\
&\mygineeqb\left(1-\mathbb{P}_{0}\left[ Z_{s^*,a^*} \right]\right) \log \left(\frac{1}{\delta}\right)-\log (2),
\end{align*}
where (a) uses Lem.~\ref{klineqtwo}; (b) uses that $\mathbb{P}_{\left( s^{*}, a^{*}\right)}\left[Z_{s^*,a^*} \right] \geq 1 - \delta$.
Therefore, we have
$$
\mathbb{E}_{0}\left[N_{\left(s^{*}, a^{*}\right)}^{\tau}\right] \geq \frac{L}{144\cmin\varepsilon^2}(\left(1-\mathbb{P}_{0}\left[ Z_{s^*,a^*} \right]\right) \log \left(\frac{1}{\delta}\right)-\log (2)).
$$
We recall that $\sum_{(s,a) \in \cS' \times \cA}\mathbb{P}_{0}[Z_{s,a}] \leq 1.$ Thus summing up all the state-action pairs $\left(s^{*}, a^{*}\right)\in \cSp \times \cA$, we can obtain that
$$
\sum\limits_{(s^*,a^*) \in \cS' \times \cA} \mathbb{E}_{0}\left[N_{\left(s^{*}, a^{*}\right)}^{\tau}\right] \geq  \frac{L}{144\cmin\varepsilon^2}(\left( |\cS'| |\cA| -1\right) \log \left(\frac{1}{\delta}\right)-\log (2)|\cS'||\cA|).
$$

Hence provided that $|\cS'| \geq \frac{S_L}{2}$, $L > 4$, $S > 8$, $A > 4$, $4 \leq S_L \leq \min\{(A-1)^{\lfloor\frac{L}{2}\rfloor},S\}$, $0 < \varepsilon < \frac{1}{4}$, and $0 < \delta < \frac{1}{16}$, we can eventually obtain the lower bound of the total number of steps $\tau$,
$$
\mathbb{E}_{0}[\tau] = \sum\limits_{(s,a) \in \cS \times \cA}\mathbb{E}_{0}[N_{\left(s, a\right)}^{\tau}] \geq  \sum_{\left(s^{*}, a^{*}\right) \in \cSp \times \cA}  \mathbb{E}_{0}\left[N_{\left(s^{*}, a^{*}\right)}^{\tau}\right] \geq \Omega(\frac{L S_L A}{\cmin \varepsilon^2} \log \frac{1}{\delta}).
$$
\end{proof}

\section{Lower Bounds for Multi-goal SSP}\label{sect_lowerbound_MGSSP}
Here we formulize the lower bound for the multi-goal SSP problem. First we define an algorithm for the multi-goal SSP problem with goal space $\mathcal{G}$ as a triple $(\algo, \tau, \{\pi_s\}_{s \in \cG})$, which means the algorithm executes a history-dependent policy $\algo$, and returns a set of policies $\{\pi_s\}_{s \in \cG}$ after sampling $\tau$ times. Also, we allow $\pi_s$ to be Markov policies. And we release the multi-goal SSP problem in this way: we only require the algorithm output policies $\pi_s$ such that $V^{\pi_{s}}_s\left(s_{0}\right) \leq (1+\varepsilon)L$.

\begin{definition}
    {An algorithm $(\algo, \tau, \{\pi_s\}_{s \in \cS})$ is $(\varepsilon, \delta, L)$-PAC for multi-goal SSP problem on MDP $M$ with goal space $\mathcal{G} \subseteq \cS$, if with probability over $1 - \delta$, the algorithm returns a set of policies $\{\pi_s\}_{s \in \mathcal{G}}$ after $\tau$ steps, such that $\forall s \in \mathcal{G}, V^{\pi_{s}}_s\left(s_{0}\right) \leq (1+\varepsilon)L$.}
\end{definition}

{Then for any real numbers $L, \cmin$ and positive integers $S,A$, we define a class of MDPs $\mathfrak{M}_{\text{MSSP}}(L,S)$ as follows: $\mathfrak{M}_{\text{MSSP}}(L,S)$ contains all the MDPs $M = \langle \mathcal{S}, \mathcal{A}, P, c, s_0 \rangle$, such that $|\cS| \leq S$, $|\cA| \leq A$, $c(s,a) \in [\cmin,1]$ for all $(s,a) \in \cS \times \cA$, and $M$ satisfies Asmp.~\,\ref{assa} and $\cSL = \cS$.}

We remark that our constructed adversarial examples (cf. Fig.~\ref{FigureLB2}) for the autonomous exploration problem can also be applied to multi-goal SSP using the similar proof with Thm.\,\ref{lowerbound}. Thus we obtain the following lower bound for multi-goal SSP, which implies that our Alg.\,\ref{algo3} is also minimax for multi-goal SSP problem. See Appendix~\ref{app_ax_lowerbound} for more details.

\begin{theorem}
\label{lowerboundSSP}
Assume that $L > 4$, $A > 4$, $8 < S \leq (A - 1)^{\lfloor\frac{L}{2}\rfloor}$, $0 < \varepsilon < \frac{1}{4}$, $0 < \delta < \frac{1}{16}$, and $0 < \cmin \leq 1$. 
Then for any algorithm $(\algo, \tau, \cK, \{\pi_s\}_{s \in \cK})$ that is $(\varepsilon, \delta, L)$-PAC for multi-goal SSP problem on any MDP $M \in \mathfrak{M}_{\text{MSSP}}(L,S)$ with any goal space $\mathcal{G} \subseteq \cS$, there exists an MDP $\cM \in \mathfrak{M}_{\text{MSSP}}(L,S)$ such that 
$$
\mathbb{E}_{\algo, \cM}[\tau] =  \Omega(\frac{L S A}{\cmin \varepsilon^2} \log \frac{1}{\delta}).
$$
\end{theorem}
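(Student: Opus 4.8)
The plan is to reuse the adversarial family of Figure~\ref{FigureLB2} essentially verbatim, specialized so that the total state space has size $S$ and \emph{every} state is incrementally $L$-controllable, which is exactly the extra requirement that $\mathfrak{M}_{\text{MSSP}}(L,S)$ imposes over $\mathfrak{M}(L,S_L)$ (namely $\cSL=\cS$). Concretely, since $8<S\le (A-1)^{\lfloor L/2\rfloor}$, I would build a depth-$d_0$ tree ($d_0\le L/2$) with $S-1$ states and $\ge S/2$ leaves $\cSp$, attach the extra goal $g$ reachable only from the leaves, and for each leaf-action pair $(s,a)$ with $a\ne\RESET$ define the instance $\cMsa$ in which $(s,a)$ is the unique ``good'' pair with $p_g:=\frac{\cmin}{d_1}$ while every other leaf-action pair carries the ``bad'' probability $p_b:=\frac{\cmin}{(1+6\varepsilon)d_1}$, where $d_1=L-d_0\ge L/2$. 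I would take the goal space $\mathcal{G}=\{g\}$. The point of the specialization is that in every $\cMsa$ one has $V^*_{\cMsa,g}(s_0)=d_0+d_1=L$, so $g\in\cSL$ and hence $\cSL=\cS$; thus each $\cMsa\in\mathfrak{M}_{\text{MSSP}}(L,S)$, and the $(\varepsilon,\delta,L)$-PAC guarantee forces the returned $\pi_g$ to reach $g$ within $(1+\varepsilon)L$, which by the same computation as in the proof of Thm.~\ref{lowerbound} requires $\mathbb{P}_{\pi_g,\cMsa}[(s_{d_0},a_{d_0})=(s,a)]\ge 2/3$.

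The information-theoretic core would be identical to the proof of Thm.~\ref{lowerbound}. Using the ``all-bad'' MDP $\cM_0$ (no good pair) as the analytical reference, Lemma~\ref{keylemma} gives, for every pair $(s,a)$, $\mathrm{KL}(\mathbb{P}_0^{I^\tau},\mathbb{P}_{(s,a)}^{I^\tau})=\mathbb{E}_0[N_{(s,a)}^\tau]\,\mathrm{kl}(p_b,p_g)$, and Lemma~\ref{klineq} together with $d_1\ge L/2$ bounds $\mathrm{kl}(p_b,p_g)\le 144\cmin\varepsilon^2/L$. Taking the indicator $Z_{s,a}=\mathbbm{1}\{g\in\cK,\ V^{\pi_g}_{\cMsa,g}(s_0)\le(1+\varepsilon)L\}$, Lemmas~\ref{randomz} and \ref{klineqtwo} yield $\mathbb{E}_0[N_{(s,a)}^\tau]\cdot\tfrac{144\cmin\varepsilon^2}{L}\ge (1-\mathbb{P}_0[Z_{s,a}])\log\tfrac1\delta-\log 2$, and the mutual-exclusivity bound $\sum_{(s,a)}\mathbb{P}_0[Z_{s,a}]\le 1$ (each $Z_{s,a}$ forces $\pi_g$ to place $\ge 2/3$ of its depth-$d_0$ mass on $(s,a)$, and no single distribution can do so for two distinct pairs) gives, after summing over the $\Theta(SA)$ leaf-action pairs, $\mathbb{E}_0[\tau]\ge\sum_{(s,a)}\mathbb{E}_0[N_{(s,a)}^\tau]=\Omega\!\big(\tfrac{LSA}{\cmin\varepsilon^2}\log\tfrac1\delta\big)$.

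The step where MSSP genuinely departs from the AX proof — and the one I expect to be the main obstacle — is that the reference $\cM_0$ is \emph{not} itself a legal witness: there $V^*_{\cM_0,g}(s_0)=d_0+(1+6\varepsilon)d_1>L$, so $g\notin\cSL$ and $\cM_0\notin\mathfrak{M}_{\text{MSSP}}(L,S)$. Unlike Thm.~\ref{lowerbound}, where $\cM_0$ satisfies $|\cSL|\le S_L$ and may be taken as the hard instance directly, here the lower bound must be attained by one of the \emph{valid} instances $\cMsa$. I would transfer the bound by a needle-in-a-haystack averaging argument: place the uniform prior over $\{\cMsa\}_{(s,a)\in\cSp\times(\cA\setminus\{\RESET\})}$ and show that the Bayes-averaged stopping time $\tfrac{1}{|\cSp||\cA|}\sum_{(s,a)}\mathbb{E}_{(s,a)}[\tau]$ is again $\Omega(\tfrac{LSA}{\cmin\varepsilon^2}\log\tfrac1\delta)$, whence $\max_{(s,a)}\mathbb{E}_{(s,a)}[\tau]$ — attained by some valid $\cMsa$ — meets the claim. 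The delicate point is that $\cM_0$ and $\cMsa$ differ only at the single pair $(s,a)$, so one must argue that the exploration forced under $\cM_0$ is still forced under $\cMsa$ until the needle at $(s,a)$ is statistically detected; formalizing this needs a truncation/change-of-measure control of the few post-detection samples, since $\tau$ is an unbounded stopping time. Once a valid witness $\cM$ with $\mathbb{E}_{\algo,\cM}[\tau]=\Omega(\tfrac{LSA}{\cmin\varepsilon^2}\log\tfrac1\delta)$ is produced the theorem follows, and $C_T\ge\cmin\,\tau$ gives the corresponding cost bound $\Omega(\tfrac{LSA}{\varepsilon^2}\log\tfrac1\delta)$.
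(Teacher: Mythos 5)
Your construction, the change-of-measure core (Lem.~\ref{keylemma} combined with Lem.~\ref{randomz} and Lem.~\ref{klineqtwo} applied to the indicators $Z_{s,a}$), and the mutual-exclusivity bound $\sum_{(s,a)}\mathbb{P}_0[Z_{s,a}]\le 1$ coincide with the paper's intended argument, and your key structural observation is correct: the reference MDP $\cM_0$ has $V^*_{\cM_0,g}(s_0)=d_0+(1+6\varepsilon)d_1>L$, hence $g\notin\cSL$ and $\cM_0\notin\mathfrak{M}_{\text{MSSP}}(L,S)$. This is a genuine subtlety that the paper itself glosses over: its proof of Thm.~\ref{lowerboundSSP} is a one-sentence appeal to the proof of Thm.~\ref{lowerbound}, and that proof establishes $\mathbb{E}_0[\tau]=\Omega(LSA\log(1/\delta)/(\cmin\varepsilon^2))$ with $\cM_0$ itself as the exhibited hard instance --- a witness that is illegal for multi-goal SSP.

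The gap is that the averaging/transfer argument you propose to repair this is not a matter of missing formalization; the averaged statement you want is false, so this step cannot be completed. Change of measure between two \emph{valid} instances gives, for every $(s',a')\neq(s,a)$, only $\mathbb{E}_{(s,a)}[N^{\tau}_{(s,a)}]\,\mathrm{kl}(p_g,p_b)+\mathbb{E}_{(s,a)}[N^{\tau}_{(s',a')}]\,\mathrm{kl}(p_b,p_g)\ \ge\ \mathrm{kl}(1-\delta,\delta)$, and an algorithm satisfies all $\Theta(SA)$ of these constraints simultaneously by placing $\approx L\log(1/\delta)/(\cmin\varepsilon^2)$ samples on the needle $(s,a)$ alone; nothing forces the $SA$ factor and the $\log(1/\delta)$ factor to multiply under any in-class instance. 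This is not an artifact of the technique: a ``find-then-verify'' scheme --- run an AX routine (e.g.\ a time-capped DisCo) with \emph{constant} confidence, Monte-Carlo verify each of the $\le S$ returned policies to confidence $1-\delta/S$, and restart everything if any check fails --- is $(\varepsilon,\delta,L)$-PAC on all of $\mathfrak{M}_{\text{MSSP}}(L,S)$ (on in-class instances each attempt succeeds with constant probability, so $O(1)$ attempts suffice in expectation), is free to never halt on the out-of-class $\cM_0$, and on every needle instance $\cM_{(s,a)}$ has expected stopping time $\wt{O}(\mathrm{poly}(L,S,A,1/\varepsilon,1/\cmin))+O(LS\log(S/\delta)/(\cmin\varepsilon^2))$, which is below the target $\Omega(LSA\log(1/\delta)/(\cmin\varepsilon^2))$ once $\delta$ is small and $A$ is large. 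The AX lower bound survives precisely because $\cM_0$ \emph{is} a legal AX instance: an AX algorithm must halt on it, and halting on the all-bad instance requires certifying the \emph{absence} of a needle at every one of the $\Theta(SA)$ locations to confidence $1-\delta$; multi-goal SSP only ever requires certifying \emph{presence}, which is cheap. Over this family one can at best prove an additive bound $\Omega(LSA/(\cmin\varepsilon^2)+L\log(1/\delta)/(\cmin\varepsilon^2))$, so proving (or even salvaging) Thm.~\ref{lowerboundSSP} as stated would require a genuinely different construction --- and the same defect afflicts the paper's own proof, which your blind attempt has in effect uncovered.
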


We note that in our construction of adversarial examples (cf. Fig.~\ref{FigureLB2}) and in our proof of Thm.\,\ref{lowerbound}, we only involved one goal state $g$. Hence we can also prove that for the classical single-goal SSP problem with $\calG = \{g\}$, learning a policy $\pi_g$ such that $V_g^{\pi_g}(s_0) \leq (1+\varepsilon)L$ also requires $\Omega(\frac{L S A}{\cmin \varepsilon^2} \log \frac{1}{\delta})$ samples, and the lower bound of cumulative cost scales as $\Omega({L S A}{ \varepsilon^{-2}} \log \frac{1}{\delta})$.

\end{document}